\theoremstyle{plain}
\newtheorem{theorem}{Theorem}[section]
\newtheorem{lemma}[theorem]{Lemma}
\theoremstyle{definition}
\newtheorem{definition}[theorem]{Definition}
\newtheorem{assumption}[theorem]{Assumption}
\theoremstyle{remark}
\newtheorem{remark}[theorem]{Remark}
\let\oldcite = \cite
\let\oldcitet = \citet
\let\oldcitep = \citep
\renewcommand{\cite}[2][]{\textcolor{blue}{\oldcite[#1]{#2}}}
\renewcommand{\citet}[2][]{\textcolor{blue}{\oldcitet[#1]{#2}}}
\renewcommand{\citep}[2][]{\textcolor{blue}{\oldcitep[#1]{#2}}}
\newcommand{\cmark}{\ding{51}}%
\newcommand{\xmark}{\ding{55}}%
\newcommand{\Tau}{\mathcal{T}}
\newcommand{\EE}{\mathbb{E}}
\newcommand{\Prob}{\mathbb{P}}
\newcommand{\TR}{\textcolor{black}}
\icmltitlerunning{Provably Efficient Distributional Reinforcement Learning with General Value Function Approximation}
\begin{document}

\twocolumn[
\icmltitle{{Bellman Unbiasedness: Toward} Provably Efficient Distributional Reinforcement Learning with General Value Function Approximation}

\icmlsetsymbol{equal}{*}

\begin{icmlauthorlist}
\icmlauthor{Taehyun Cho}{snu}
\icmlauthor{Seungyub Han}{snu}
\icmlauthor{Seokhun Ju}{snu}
\icmlauthor{Dohyeong Kim}{snu}
\icmlauthor{Kyungjae Lee}{kor}
\icmlauthor{Jungwoo Lee}{snu}
\end{icmlauthorlist}

\icmlaffiliation{snu}{Seoul National University, Seoul, South Korea}
\icmlaffiliation{kor}{Korea University, Seoul, South Korea}

\icmlcorrespondingauthor{Kyungjae Lee}{kyungjae\_lee@korea.ac.kr}
\icmlcorrespondingauthor{Jungwoo Lee}{junglee@snu.ac.kr}

\icmlkeywords{Machine Learning, ICML}

\vskip 0.3in
]
\printAffiliationsAndNotice{} 

\begin{abstract}
Distributional reinforcement learning improves performance by capturing environmental stochasticity, but a comprehensive theoretical understanding of its effectiveness remains elusive.
{In addition, the intractable element of the infinite dimensionality of distributions has been overlooked.}
In this paper, we present a regret analysis of distributional reinforcement learning with general value function approximation in a finite episodic Markov decision process setting. 
We first introduce a key notion of \textit{Bellman unbiasedness} {which is essential for exactly learnable} and provably efficient distributional updates in an online manner.
Among all types of statistical functionals for representing infinite-dimensional return distributions, our theoretical results demonstrate that only moment functionals can exactly capture the statistical information.
Secondly, we propose a provably efficient algorithm, {\textcolor{magenta}{\texttt{SF-LSVI}}}, that achieves a {tight} regret bound of $\tilde{O}(d_E H^{\frac{3}{2}}\sqrt{K})$ where $H$ is the horizon, $K$ is the number of episodes, and $d_E$ is the eluder dimension of a function class.
\end{abstract}


\begin{figure*}[t]
    \centering
    \includegraphics[width=0.45\linewidth]{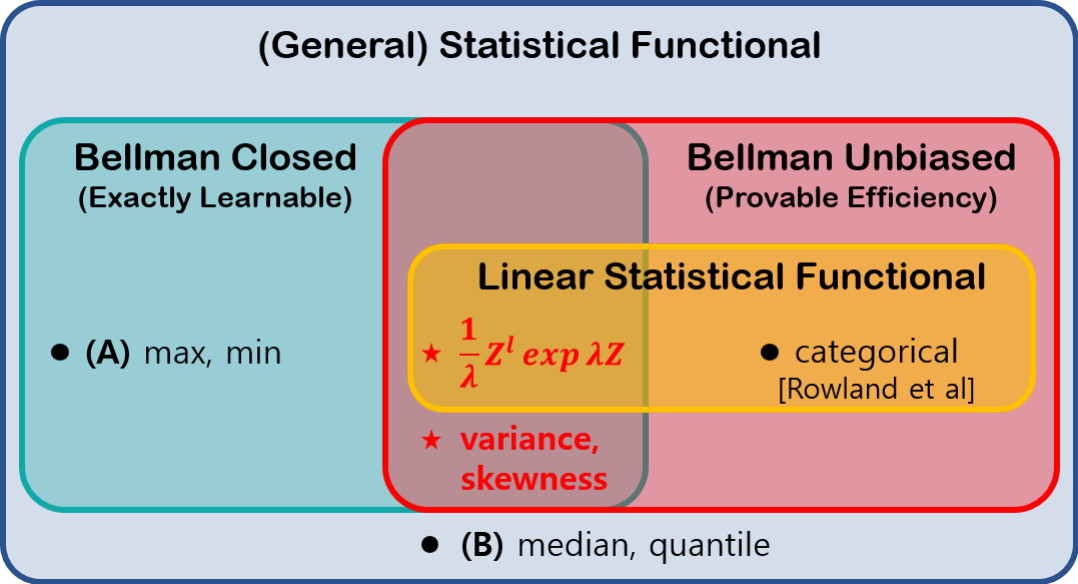}
    \caption{\textbf{Venn-Diagram of Statistical Functional Classes.} 
    The diagram illustrates categories of statistical functional.
    \textbf{(Yellow $\cap$ Blue)} Within the linear statistical functional class,
    \citet{rowland2019statistics} showed that the only functionals satisfying Bellman closedness are moment functionals.
    \textbf{(Red $\cap$ Blue)} We extend this concept by introducing the notion of \textit{Bellman unbiasedness}, which not only covers moment functionals but also includes central moment functionals from the broader class including nonlinear statistical functionals.
    \textbf{(Yellow $\cap$ $\text{Blue}^c$)} According to Lemmas 3.2 and 4.4 of \citet{rowland2019statistics}, categorical functionals are linear but not Bellman closed.
    \textbf{(A)} Maximum and minimum functionals are Bellman closed, while they are not unbiasedly estimatable.
    \textbf{(B)} Median and quantile functionals are neither Bellman closed nor unbiased, highlighting that they are not proper to encode the distribution in terms of exactness.
    The proofs corresponding to each region are provided in Appendix \ref{Appendix: Related work and Discussion}.}
    \label{fig:BU_BC_LF}
\vspace{-10pt}
\end{figure*}

Distributional reinforcement learning (DistRL) \citep{bellemare2017distributional, rowland2019statistics, choi2019distributional, kim2024trust} is an advanced approach to reinforcement learning (RL) that focuses on the entire probability distribution of returns rather than solely on the expected return.
By considering the full distribution of returns, distRL provides deeper insight into the uncertainty of each action, such as the mode or median. 
This framework enables us to make safer and more effective decisions that account for various risks \citep{chow2015risk, son2021disentangling, greenberg2022efficient, kim2023trust}, particularly in complex real-world situations, such as robotic manipulation \citep{bodnar2019quantile},
neural response \citep{muller2024distributional}, 
stratospheric balloon navigation \citep{bellemare2020autonomous}, 
algorithm discovery \citep{fawzi2022discovering}, and several game benchmarks \citep{bellemare2013arcade, machado2018revisiting}.
\TR{While the distributional approach offers richer information, two key theoretical challenges are introduced that distinguish it from expectation-based RL.}

\paragraph{Infinite-dimensionality of distribution.}
In practice, distributions contain an infinite amount of information, and we must resort to approximations using a finite number of parameters or statistical functionals, such as categorical \citep{bellemare2017distributional} and quantile representations \citep{dabney2018distributional}.
{However, previous works often conducted analyses while overlooking these intractable nature of distributions.}
Additionally, not all statistical functionals can be {\textit{exactly learned}} through the Bellman operator, as the meaning of statistical functionals is not preserved after updates. 
For example, the median is not preserved under the Bellman updates, as the median of a mixture of two distributions does not equal the mixture of their medians.
Thus, a fundamental question arises: 
\ul{\textit{For a given statistical functional, does there exist a corresponding Bellman operator that ensures exactness?}}
To formalize this issue, \citet{rowland2019statistics} introduced \textit{Bellman closedness}, which characterizes statistical functionals that can be exactly learned in the presence of a corresponding Bellman operator.

\TR{
\paragraph{Online distributional update.}
In this paper, we focus on developing an algorithm that efficiently explores from a regret minimization perspective while simultaneously performing distributional Bellman updates in an online manner.
One possible approach to addressing this problem is to first update the policy using an existing provably efficient non-distributional RL algorithm and then estimate the distribution via additional rollouts. 
However, decoupling these two processes introduces several drawbacks.
First, adding extra rollouts solely for distribution estimation is sample-inefficient, and the limited number of rollouts inevitably introduce accumulated approximation errors in the estimated distribution throughout the learning process.
Moreover, the estimation is confined to the return distribution of the executed policy, making it difficult to reuse for estimating distributions under different policies, thereby moving further away from off-policyness.
}

To overcome those two fundamental challenges inherent to DistRL, we take a closer look at the distributional Bellman update and revisit what additional properties of statistical functionals, beyond Bellman closedness, are required to construct online DistRL algorithms that are not only exactly learnable but also \textit{provably efficient} in terms of regret.
In this context, we identify the following additional issues that arise when using statistical functionals for updates instead of the full distribution:

\begin{itemize}
    \item Representing a mixture distribution with a finite, fixed number of parameters leads to approximation errors during the update. 
    For example, when expressing the mixture of two distributions, each represented by $N$ parameters, compressing $2N$ into $N$ parameters in the mixture results in inevitable information loss.
    

    \item Due to the unknown nature of the transition $\Prob(\cdot|s,a)$, the target distribution is estimated by sampling the next state $s'$. 
    Hence, the statistical functionals of the target distribution should be unbiasedly estimated using the statistical functionals from the sampled distribution.    
\end{itemize}

In this paper, we introduce a key concept,\textit{ Bellman unbiasedness}, for precise information learnability of a distribution from a finite number of samples in an online setting. 
As shown in Figure \ref{fig:BU_BC_LF}, we prove that the moment functional remains the only solution in a class that includes \textit{nonlinear} statistical functionals that satisfies both properties.
We then discuss the inherent {intractability} of \textit{distributional Bellman completeness} (distBC) -- a structural assumption previously defined in the literature \citep{wang2023benefits, chen2024provable} -- and investigate the benefits of redesigning this concept using a collection of statistical functionals.
Finally, we propose a provably efficient statistical functional RL algorithm with general value function approximation, called \textcolor{magenta}{\texttt{SF-LSVI}}. 

In summary, our main contributions are as follows:
\begin{itemize}
    \item Introduce a key property of Bellman unbiasedness for exactly learnable and provably efficient online distRL algorithm. We show that the moment functional is the unique structure in a class including nonlinear statistical functionals.

    \item 
    {Describe the inherent intractability of infinite-dimensional distributions and analyze how hidden approximation error prevents the design of provably efficient algorithms.}
    To address this, we revisit the existing structural assumption of distributional Bellman Completeness through a statistical functional lens.

    \item Propose exactly learnable and provably efficient distRL algorithm called \textcolor{magenta}{\texttt{SF-LSVI}}, achieving a tight regret upper bound $\tilde{O}(d_E H^{\frac{3}{2}}\sqrt{K})$. \footnote[1]{We ignore poly-log terms in $H,S,A,K$ in the $\tilde{O}(\cdot)$ notation.}
    Our framework yields a tighter regret bound with a weaker structural assumption compared to prior results in distRL.
\end{itemize}

\section{Related Work}
\label{sec:related works}

\begin{savenotes}
\begin{table*}[t]
\caption{Comparison for different methods under distributional RL framework. $\mathcal{H}$ represents a subspace of infinite-dimensional space $\mathcal{F}^{\infty}$. To bound the eluder dimesion $d_E$, \citet{wang2023benefits} and \citet{chen2024provable} assumed the discretized reward MDP.}
\vspace{5pt}
\centering
\resizebox{0.9\linewidth}{!}{
\begin{tabular}{@{}ccccccc@{}}
\toprule[1.5pt]
Algorithm                                                        & Regret & Eluder dimension $d_E$ & Bellman Completeness & MDP assumption     & 
\begin{tabular}[c]{@{}c@{}} Finite \\ Representation \end{tabular}
& \begin{tabular}[c]{@{}c@{}} Exactly \\ Learnable \end{tabular}
\\ \midrule
\begin{tabular}[c]{@{}c@{}} \textcolor{black}{\texttt{O-DISCO}}\\ \citep{wang2023benefits}\end{tabular}    & $\tilde{\mathcal{O}}(\text{poly}(d_E H) \sqrt{K}) $ & $\text{dim}_E(\mathcal{H}, \epsilon)$  & distributional BC    & 
\begin{tabular}[c]{@{}c@{}} discretized reward, \\ small-loss bound \end{tabular} & \xmark & \xmark                           \\ \midrule
\begin{tabular}[c]{@{}c@{}} \textcolor{black}{\texttt{V-EST-LSR}}\\ \citep{chen2024provable}\end{tabular} & $\tilde{\mathcal{O}}(d_E H^2 \sqrt{K})$ \footnote[2]{In \citet{chen2024provable}, the regret bound is written as $\tilde{O}(d_E L_{\infty}(\rho) H \sqrt{K})$, where $L_{\infty}(\rho)$ represents the lipschitz constant of the risk measure $\rho$, i.e., $|\rho(Z) - \rho(Z')| \leq L_{\infty}(\rho)\| F_Z - F_{Z'} \|_{\infty}$. Since $L_{\infty}(\rho) \geq H$ in risk-neutral setting, we translate the regret bound into $\tilde{O}(d_E H^2 \sqrt{K})$.} & $\text{dim}_E(\mathcal{H}, \epsilon)$ & distributional BC    & \begin{tabular}[c]{@{}c@{}} discretized reward, \\ lipschitz continuity \end{tabular} & \xmark
& \xmark \\ \midrule
\begin{tabular}[c]{@{}c@{}} \textcolor{magenta}{\texttt{SF-LSVI}} \\ {[Ours]} \end{tabular} & $\tilde{\mathcal{O}}(d_E H^{\frac{3}{2}} \sqrt{K}) $ & $\text{dim}_E(\mathcal{F}^N, \epsilon)$ & statistical functional BC & none & \cmark & \cmark  \\ 
\bottomrule[1.5pt]
\end{tabular}%
}
\vspace{-10pt}
\label{table: comparison}
\end{table*}
\end{savenotes}

\paragraph{Distributional RL.}
In classical RL, the Bellman equation, which is based on expected returns, has a closed-form expression.
However, it remains unclear whether any statistical functionals of return distribution always have their corresponding closed-form expressions. 
\citet{rowland2019statistics} introduced the notion of \textit{Bellman closedness} for collections of statistical functionals that can be updated in a closed form via Bellman update. 
They showed that the only Bellman-closed statistical functionals in the discounted setting are the moments $\EE_{Z \sim \eta}[Z^k]$. 
More recently, \citet{marthe2023beyond} proposed a general framework for distRL, where the agent plans to maximize its own utility functionals instead of expected return, formalizing this property as \textit{Bellman Optimizability}.
They further demonstrated that in the undiscounted setting, the only $W_1$-continuous and linear Bellman optimizable statistical functionals are exponential utilities $\frac{1}{\lambda} \log \EE_{Z \sim \eta}[\exp(\lambda Z)]$.

In practice, C51 \citep{bellemare2017distributional} and QR-DQN \citep{dabney2018distributional} are notable distributional RL algorithms where the convergence guarantees of sampled-based algorithms are proved \citep{rowland2018analysis, rowland2023analysis}.
\citet{dabney2018implicit} expanded the class of policies on arbitrary distortion risk measures by taking the based distribution non-uniformly and improve the sample efficiency from their implicit representation of the return distribution.
\citet{cho2023pitfall} highlighted the drawbacks of optimistic exploration in distRL, introducing a randomized exploration that perturbs the return distribution when the agent selects their next action. 



\paragraph{RL with General Value Function Approximation.}
Regret bounds have been studied for a long time in online RL, across various domains such as bandit \citep{lattimore2020bandit, abbasi2011improved, russo2013eluder}, tabular RL \citep{kakade2003sample, auer2008near, osband2016lower, osband2019deep, jin2018q}, and linear function approximation \citep{jin2020provably, wang2019optimism, zanette2020learning}.
In recent years, deep RL has shown significant performance using deep neural networks as function approximators, and attempts have been made to analyze whether it is efficient in terms of general function approximation \citep{jin2021bellman, agarwal2023vo}.
\citet{wang2020reinforcement} established a provably efficient RL algorithm with general value function approximation based on the eluder dimension $d_E$ \citep{russo2013eluder} and achieves a regret upper bound of $\tilde{O}(\text{poly}(d_E H)\sqrt{K}) $.
To circumvent the intractability from computing the upper confidence bound, \citet{ishfaq2021randomized} injected the stochasticity on the training data and get the optimistic value function instead of upper confidence bound, enhancing computationally efficiency.
Beyond risk-neutral setting, several prior works have shown regret bounds under risk-sensitive objectives (e.g., entropic risk \citep{fei2021risk, liang2022bridging}, CVaR \citep{bastani2022regret}), which align with our approach in that they are built on a distribution framework.
\citet{liang2022bridging} achieved the regret upper bound of $\tilde{O}(\exp(H) \sqrt{|\mathcal{S}|^2 |\mathcal{A}| H^2 K})$ and the lower bound of $ \Omega(\exp(H) \sqrt{|\mathcal{S}||\mathcal{A}|HK})$ in tabular setting.

\paragraph{DistRL with General Value Function Approximation.}
Recently, only few efforts have aimed to bridge the gap between two fields.
\citet{wang2023benefits} proposed a distributional RL algorithm, \texttt{O-DISCO}, which enjoys small-loss bound by using a log-likelihood objective.
Similarly, \citet{chen2024provable} provided a risk-sensitive RL framework with static lipschitz risk measure.
While these studies analyze within a distributional framework, they do not address the intractability of implementation in infinite-dimensional space of distributions.
In contrast, our approach focuses on a statistical functional framework, providing a detailed comparison with other distRL methods as shown in Table \ref{table: comparison}.


\section{Preliminaries}
\label{sec:preliminaries}

\begin{figure*}[t]
    \centering
    \includegraphics[width=0.6\linewidth]{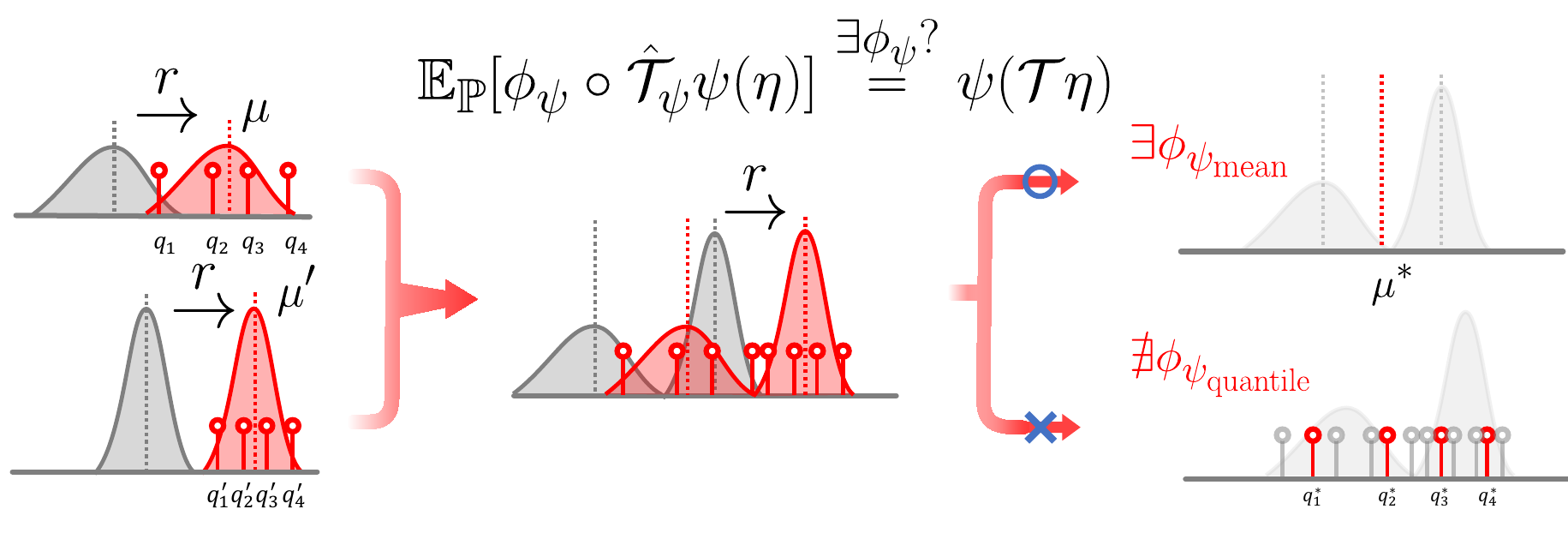}
    \vspace{-10pt}
    \caption{
    Illustrative representation of sketch-based Bellman updates for a mixture distribution.
    Instead of updating the distributions directly, each sampled distribution is embedded through a sketch $\psi$ (e.g., mean $\mu$, quantile $q_i$). 
    The transformation $\phi_\psi$ aims to compress the mixture distribution into the same number of parameters, ensuring unbiasedness to prevent information loss.
    }
    \label{fig:distributional update}
\vspace{-10pt}
\end{figure*}


\paragraph{Episodic MDP.}

We consider a episodic Markov decision process which is defined as a $\mathcal{M} = (\mathcal{S}, \mathcal{A}, H, \mathbb{P}, r)$ characterized by state space $\mathcal{S}$, action space $\mathcal{A}$, horizon length $H$, transition kernels $\mathbb{P} = \{ \mathbb{P}_h \}_{h \in [H]}$, and reward $r = \{ r_h \}_{h \in [H]}$ at step $h \in [H]$. 
The agent interacts with the environment across $K$ episodes. For each $k \in [K]$ and $h \in [H]$, $\mathbb{H}_h^k = (s_1^1 ,a_1^1, \dots, s_H^1, a_H^1, \dots, s_h^k , a_h^k )$ represents the history up to step $h$ at episode $k$.
We assume the reward is bounded by $[0,1]$ and the agent always transit to terminal state $s_{\text{end}}$ at step $H+1$ with $r_{H+1}=0$.



\paragraph{Policy and Value Functions.}
A (deterministic) policy $\pi$ is a collection of $H$ functions $\{ \pi_h : \mathcal{S} \rightarrow \mathcal{A}\}_{h=1}^H $.
Given a policy $\pi$, a step $h \in [H]$, and a state-action pair $(s,a) \in \mathcal{S} \times \mathcal{A}$, the $Q$ and $V$-function are defined as
$Q_h^{\pi}(s,a) (: \mathcal{S} \times \mathcal{A} \rightarrow \mathbb{R} ) \coloneqq \EE_\pi \left[\sum_{h'=h}^H r_{h'}(s_{h'}, a_{h'}) \mid s_h = s, a_h =a \right]$ and
$V_h^{\pi}(s) (: \mathcal{S} \rightarrow \mathbb{R} ) \coloneqq \EE_\pi \left[\sum_{h'=h}^H r_{h'}(s_{h'}, a_{h'}) \mid s_h = s \right].$

\paragraph{Random Variables and Distributions.} For a sample space $\Omega$, we extend the definition of the $Q$-function into a random variable and its distribution,
\vspace{-5pt}
\begin{align*}
    &Z_h^{\pi}(s,a) (: \mathcal{S} \times \mathcal{A} \times \Omega \rightarrow \mathbb{R} ) 
    \\& \coloneqq \sum_{h'=h}^H r_{h'}(s_{h'}, a_{h'}) \mid s_h = s, a_h =a, a_{h'} = \pi_{h'}(s_{h'}),
    \\ &\eta_h^\pi(s,a) (: \mathcal{S} \times \mathcal{A} \rightarrow \mathscr{P}(\mathbb{R}) ) \coloneqq \text{law}(Z_{h}^\pi (s,a)).
\end{align*}
Analogously, we extend the definition of $V$-function by introducing a bar notation.
\vspace{-5pt}
\begin{align*}
    &\bar{Z}_h^{\pi}(s) (: \mathcal{S} \times \Omega \rightarrow \mathbb{R} ) 
    \\& \coloneqq \sum_{h'=h}^H r_{h'}(s_{h'}, a_{h'}) \mid s_h = s, a_{h'} = \pi_{h'}(s_{h'}),
    \\ &\bar{\eta}_h^\pi(s) (: \mathcal{S} \rightarrow \mathscr{P}(\mathbb{R}) ) \coloneqq \text{law}(\bar{Z}_{h}^\pi (s)).
\end{align*}
Note that $\bar{Z}^{\pi}_h(s) = Z^{\pi}_h(s, \pi(s)) $ and $\bar{\eta}_h^{\pi}(s) =\eta_h^{\pi}(s,\pi(s))$.
We use $\pi^\star$ to denote an optimal policy {( \textit{i.e.,} ${\pi^{\star}_h}(\cdot |s)  = \arg\max_{\pi} V^{\pi}_h(s)  $ )}
and denote $V_h^{\star}(s) = V_h^{\pi^{\star} }(s), Q_h^{\star}(s,a) = Q_h^{\pi^{\star}} (s,a)$, $\eta_h^{\star}(s,a) = \eta_h^{\pi^{\star}}(s,a)$, and $\bar{\eta}_h^{\star}(s) = \bar{\eta}_h^{\pi^{\star}}(s)$.
For notational simplicity, we denote the expectation over transition,
$[\mathbb{P}_h V_{h+1}^{\pi}](s,a) = \EE_{s' \sim \mathbb{P}_h(\cdot | s,a) } V_{h+1}^{\pi}(s')$,
$[\mathbb{P}_h \bar{Z}_{h+1}^{\pi}](s,a) = \EE_{s' \sim \mathbb{P}_h(\cdot | s,a)} \bar{Z}_{h+1}^{\pi}(s')$,
and $[\mathbb{P}_h \bar{\eta}_{h+1}^{\pi}](s,a) = \EE_{s' \sim \mathbb{P}_h(\cdot | s,a)} \bar{\eta}_{h+1}^{\pi}(s').$ 
\footnote[3]{Note that $\EE_{s' \sim \Prob_h(\cdot |s,a)}\bar{\eta}^{\pi}_{h+1}(s')$ is a mixture distribution.}
For brevity, we refer to $\bar{\eta}^{\pi}$ simply as $\bar{\eta}$.

In the episodic MDP, the agent aims to learn the optimal policy through a fixed number of interactions with the environment across a number of episodes.
At the beginning of each episode $k(\in [K])$, the agent starts at the initial state $s_1^k$ and choose a policy $\pi^k$.
In step $h(\in [H])$, the agent observes $s^k_h (\in \mathcal{S})$, takes an action $a^k_h (\in \mathcal{A}) \sim \pi^k_h(\cdot| s^k_h)$, receives a reward $r_h(s^k_h, a^k_h)$, and the environment transits to the next state $s^k_{h+1} \sim \Prob_h(\cdot | s^k_h, a^k_h).$
Finally, we measure the suboptimality of an agent by its regret, which is the accumulated difference between the ground truth optimal and the return received from the interaction.
The regret after $K$ episodes is defined as
$\text{Reg}(K) = \sum_{k=1}^K V_1^{\star}(s_1^k) - V_1^{\pi^k}(s_1^k) .$

\paragraph{Distributional Bellman Optimality Equation.}
Recall that $\eta^{\star}_h$ satisfies the following optimality equation:



\vspace{-5pt}
\begin{align*}
    \eta_h^\star (s,a) &= (\Tau_h \eta_{h+1}^{\star}) (s,a)
    \\ & \coloneqq \EE_{s' \sim \Prob_h(\cdot|s,a), a' \sim \pi_h^{\star}(\cdot | s')} [(\mathcal{B}_{r_h})_{\#} \eta_{h+1}^{\star} (s', a')]
    \\ &=  (\mathcal{B}_{r_h})_{\#} [\Prob_h \eta_{h+1}^{\star}] (s, a)
\end{align*}
where $\mathcal{B}_{r} : \mathbb{R} \rightarrow \mathbb{R}$ is defined by $\mathcal{B}_{r} (x) = r + x$, and
$g_\# \eta \in \mathscr{P}(\mathbb{R})$ is the pushforward of the distribution $\eta$ through $g$ (\textit{i.e.,} $g_\# \eta (A) = \eta( g^{-1}(A) )$ for any Borel set $A \subseteq \mathbb{R}$).


\paragraph{Additional Notations.}

For a given $N$, we denote an $N-$dimensional function class 
$\mathcal{F}^N \coloneqq \mathcal{F}^{(1)} \times \dots \times \mathcal{F}^{(N)} \subseteq \Big\{ f = [f^{(1)}, \cdots, f^{(N)}] : \mathcal{S} \times \mathcal{A} \rightarrow  \mathbb{R}^N \Big\}.$
Given a dataset $\mathcal{D} = \{ (s_t, a_t, [z_t^{(1)} , \dots, z_t^{(N)}])\}_{t=1}^{|\mathcal{D}|} \subseteq \mathcal{S} \times \mathcal{A} \times \mathbb{R}^N$,
a set of state-action pairs $\mathcal{Z} = \{ (s_t, a_t) \}_{t=1}^{|\mathcal{Z}|} \subseteq \mathcal{S} \times \mathcal{A}$
and
for a function $f: \mathcal{S} \times \mathcal{A} \rightarrow \mathbb{R}^N$, we define the norm $\|f^{(n)}\|_{\infty}, \|f\|_{\infty, 1}, \|f\|_{\mathcal{D}}, \|f\|_{\mathcal{Z}}$ as written in Appendix \ref{sec: notation}.
%
%
For a set of {(vector-valued)} functions $\mathcal{F}^N \subseteq \{ f: \mathcal{S} \times \mathcal{A} \rightarrow \mathbb{R}^N\}$,
the width function of $(s,a)$ is defined as
$    w^{{(n)}}(\mathcal{F}^N , s, a) 
    \coloneqq \max_{f,g \in \mathcal{F}^N } | f^{{(n)}}(s,a) - g^{{(n)}}(s,a) |$.
\section{Statistical Functionals in Distributional RL}

In this section, we define two key concepts in the distRL framework: the \textit{statistical functional} and the \textit{sketch}.
We also illustrate \textit{Bellman closedness}, a crucial property from \citet{bellemare2023distributional}.
Next, we introduce \textit{Bellman unbiasedness}, a novel concept that complements the previous property and is essential for provable efficiency.
As shown in Figure \ref{fig:distributional update}, quantile functionals cannot be updated in an unbiased manner (as proved in Theorem \ref{theorem: quantile}), demonstrating that only certain sketches can be updated exactly.
We then show that the only sketch satisfying both properties is the moment functional, which is unique among statistical functionals.
Finally, we discuss the intractability of the previous structural assumption, distributional Bellman Completeness, and its tendency to cause linear regret.
To address this, we introduce \textit{statistical functional Bellman Completeness}, a relaxed assumption, and explain why it satisfies both properties.


\begin{figure*}[t]
    \centering
    \begin{minipage}[t]{0.4\linewidth} 
        \centering
        \begin{tikzcd}
        \eta \arrow[r] \arrow[d] & \Tau \eta \arrow[r]  & \psi(\Tau \eta) \\
        \psi(\eta) \arrow[r]           & \Tau_{\psi} \psi(\eta) \arrow[ru] &  
        \end{tikzcd}
        \vspace{5pt}
        \caption{Bellman Closedness} 
    \end{minipage}    
    \begin{minipage}[t]{0.4\linewidth} 
        \centering
        \begin{tikzcd}
        \eta \arrow[r] \arrow[d]                & \mathcal{T} \eta \arrow[r]                    & \psi(\mathcal{T} \eta) \\
        \psi(\eta) \arrow[r, "{\parbox{1cm}{\text{\tiny \quad Sample}\\[-3pt]{\text{ \tiny \ next state}}}}"', dashed] 
        & \hat{\mathcal{T}}_{\psi} \psi(\eta) 
        \arrow[r, "{\parbox{1cm}{\text{\tiny Unbiasedly}\\[-3pt]{\text{ \tiny estimate}}}}"', dashed] 
        & {\mathbb{E}_{\mathbb{P}}[ {\phi_{\psi} \circ } \hat{\mathcal{T}}_{\psi} \psi(\eta)]} \arrow[u]
        \end{tikzcd}
        \vspace{-5pt}
        \caption{Bellman Unbiasedness} 
    \end{minipage}

    \caption{Illustration of Bellman Closedness and Bellman Unbiasedness. 
    The above path represents an ideal distributional Bellman update. 
    Due to the infinite-dimensionality, the update process should be represented by using a finite-dimensional embedding (sketch) $\psi$. 
    Since the transition kernel $\mathbb{P}$ is unknown, the below path describes that the implementation should sample the next state and update by using $\hat{\mathcal{T}}_{\psi}$ with the empirical transition kernel $\hat{\mathbb{P}}$. 
    A sketch $\psi$ is Bellman unbiased if $\hat{\mathcal{T}}_{\psi} \circ \psi$ can unbiasedly estimate $\psi \circ \mathcal{T}$ through some transformation $\phi_{\psi}$,
    \textit{i.e.,} $\psi(\mathcal{T}\eta) = \EE_{\Prob}[\phi_\psi \circ \hat{\mathcal{T}}  \psi(\eta)]$.
    }
    \label{fig: tikz}
    \vspace{-10pt}
\end{figure*}

\subsection{{Bellman Closedness}}


\begin{definition}[Statistical functionals, Sketch; \citep{bellemare2023distributional}]
    A \textbf{statistical functional} is a mapping from a probability distribution to a real value $\psi : \mathscr{P}(\mathbb{R}) \rightarrow \mathbb{R}$. 
    A \textbf{sketch} is a vector-valued function ${\psi}_{1:N} : \mathscr{P}(\mathbb{R}) \rightarrow \mathbb{R}^N$ specified by an $N$-tuple where each component is a statistical functional, 
    $${\psi}_{1:N}(\cdot) = (\psi_1(\cdot), \cdots, \psi_N(\cdot)).$$
\end{definition}
We denote the domain of sketch as $\mathscr{P}_{\psi_{1:N}}(\mathbb{R})$ and its image as 
$I_{\psi_{1:N}} = \{ \psi_{1:N}(\bar{\eta}) : \bar{\eta} \in \mathscr{P}_{\psi_{1:N}}(\mathbb{R}) \}.$
We further extend to state return distribution functions 
$\psi_{1:N}(\bar{\eta}) = \Big(\psi_{1:N}(\bar{\eta}(s)) : s \in \mathcal{S}\Big)$ .

\begin{definition}[Bellman closedness; \citep{rowland2019statistics}]
    A sketch ${\psi}_{1:N}$ is \textbf{Bellman closed} if there exists an operator $\Tau_{\psi_{1:N}} : I_{\psi_{1:N}}^{\mathcal{S}} \rightarrow I_{\psi_{1:N}}^{\mathcal{S}}$ such that
    $$ \psi_{1:N} (\Tau \bar{\eta}) = \Tau_{\psi_{1:N}} \psi_{1:N}( \bar{\eta}) \quad \text{for all } \bar{\eta} \in \mathscr{P}(\mathbb{R})^{\mathcal{S}} $$
    which is closed under a distributional Bellman operator $\Tau : \mathscr{P}(\mathbb{R})^{\mathcal{S}} \rightarrow \mathscr{P}(\mathbb{R})^{\mathcal{S}}$.
\end{definition}



Bellman closedness is the property that a sketch are exactly learnable when updates are performed from the infinite-dimensional distribution space to the finite-dimensional embedding space.
{While classical Bellman equation implies the existence of Bellman operator for expected values, not all statistical functional has such corresponding Bellman operator.}
Precisely, \citet{rowland2019statistics} showed that the only finite linear statistical functionals that are Bellman closed are given by the collections of statistical functionals where its linear span is equal to the set of exponential-polynomial functionals. \footnote[4]{In discounted setting, a unique solution becomes moments. We've overwritten it for convinience.}


\begin{theorem}
\label{theorem: quantile}
    Quantile functional cannot be Bellman closed under any additional sketch.
\end{theorem}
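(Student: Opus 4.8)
The plan is to show that no finite collection of quantile functionals can satisfy Bellman closedness, by exhibiting two return distributions that agree on all the chosen quantiles but whose images under the distributional Bellman operator $\mathcal{T}$ disagree on some quantile. If such a pair exists, then any purported operator $\mathcal{T}_{\psi_{1:N}}$ would have to send the common sketch value to two different values, a contradiction. So the core is a separation/counterexample argument rather than a structural computation.

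Concretely, suppose the sketch $\psi_{1:N} = (q_{\tau_1},\dots,q_{\tau_N})$ consists of the $\tau_i$-quantiles for some levels $0<\tau_1<\dots<\tau_N<1$ (if any other statistical functionals are allowed in the sketch alongside the quantiles, I would first reduce to the case where they too are matched by a suitable perturbation, or invoke the ``under any additional sketch'' clause by building the perturbation to preserve those as well — this is why the statement says \emph{any additional sketch}). First I would fix a single transition structure: a state $s$ with one action leading deterministically to a next state $s'$, zero intermediate reward, so that $(\mathcal{T}\bar\eta)(s) = \bar\eta(s')$ pushed forward trivially — but to make the Bellman operator genuinely mix, I instead take two equiprobable successors $s'_1, s'_2$, so $(\mathcal{T}\bar\eta)(s)$ is the \emph{mixture} $\tfrac12\bar\eta(s'_1) + \tfrac12\bar\eta(s'_2)$ (with a possible deterministic reward shift, which does not matter). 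Then I would choose $\bar\eta(s'_1)$ and $\bar\eta(s'_2)$, and a perturbed pair $\tilde\eta(s'_1), \tilde\eta(s'_2)$, such that each individual distribution has the same $\tau_i$-quantiles as before for all $i$, yet the mixture $\tfrac12\bar\eta(s'_1)+\tfrac12\bar\eta(s'_2)$ and $\tfrac12\tilde\eta(s'_1)+\tfrac12\tilde\eta(s'_2)$ differ at some quantile level. This is possible because a quantile is insensitive to how mass is rearranged in the ``flat'' regions between the matched quantiles, but mixing re-weights and interleaves those regions so that the rearrangement becomes visible; the key elementary fact to exploit is that the quantile function of a mixture is \emph{not} the mixture of the quantile functions (mirroring the median example in the introduction), whereas Bellman closedness would force exactly such a commutation.

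The key steps, in order: (i) set up the two-successor MDP gadget and write $(\mathcal{T}\bar\eta)(s)$ explicitly as a $\tfrac12$–$\tfrac12$ mixture; (ii) construct the matched-quantile pair — e.g. take both $\bar\eta(s'_j)$ to be finitely supported, place atoms at the $N$ levels $\tau_i$ to pin the quantiles, and leave a free ``slot'' of probability mass in one inter-quantile gap whose location within that gap is the perturbation parameter, arranged so that in one successor the slot sits at the left end and in the other at the right end, while in the perturbed version they are swapped; (iii) verify that all $\tau_i$-quantiles of each successor distribution are unchanged by the swap (a direct check on the CDF at the levels $\tau_i$); (iv) compute the mixture CDF before and after, and locate a level $\tau^*$ (which can be taken to be one of the $\tau_i$, or if one insists on matching \emph{all} sketch components one adds an extra atom to create a new level) at which the mixed quantile moves; (v) conclude that $\psi_{1:N}(\mathcal{T}\bar\eta)(s)$ depends on more than $\psi_{1:N}(\bar\eta)$, so no operator $\mathcal{T}_{\psi_{1:N}}$ can exist, contradicting Bellman closedness.

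The main obstacle is step (iii)–(iv) done \emph{simultaneously}: the perturbation must be large enough to shift a mixture quantile but small/structured enough to leave every chosen quantile of each summand fixed, and one must be careful that the mixing does not accidentally also move a quantile level one is trying to preserve. I expect to handle this by working with CDFs that are flat (constant) on a neighborhood of each $\tau_i$ in the probability axis, i.e. placing genuine atoms so the generalized inverse is locally constant; then perturbations confined strictly between consecutive atoms provably cannot change any $q_{\tau_i}$, while a generic such perturbation changes the mixture because the two summands' flat regions overlap differently after averaging. A secondary subtlety is the ``any additional sketch'' phrasing: I would address it by noting the construction has enough free parameters (number of atoms, their masses, the size of the movable slot) to also hold fixed any finite extra collection of \emph{continuous} functionals via a dimension-counting/continuity argument, or simply by remarking that adding more matched constraints only strengthens the separation needed and the finitely-supported family is rich enough to meet them; a fully rigorous treatment of this clause is the part I would flag as needing the most care in the write-up.
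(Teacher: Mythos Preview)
Your high-level reduction is exactly right and matches the paper: you set up a two-successor MDP so that $(\mathcal T\bar\eta)(s)$ is a $\tfrac12$--$\tfrac12$ mixture, which is precisely the paper's lemma that Bellman closedness implies \emph{mixture-consistency} (the sketch of any mixture must be a function of the component sketches). The divergence is in how the ``under any additional sketch'' clause is handled, and there your proposal has two concrete problems.

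First, your swap construction in step~(ii) does not work as written. If the unperturbed pair has the slot at the left in successor~1 and at the right in successor~2, and the perturbed pair swaps these, then the mixture $\tfrac12\bar\eta(s'_1)+\tfrac12\bar\eta(s'_2)$ is literally unchanged by the swap (mixing is symmetric), so step~(iv) finds nothing. You would need a genuinely asymmetric perturbation---e.g.\ slots in \emph{different} inter-quantile gaps for the two successors---but you specified a single gap.

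Second, and more importantly, the ``any additional sketch'' clause is the whole theorem (the pure-quantile case is not new), yet you defer it to a vague dimension-counting/continuity remark that you yourself flag as incomplete, and which in any case assumes the extra functionals are continuous. The paper handles this clause by a different and cleaner mechanism that avoids matching arbitrary functionals entirely. Fix $Y$ supported on $0=y_0<y_1<\cdots<y_N<1$ with $p_{y_0}>\alpha$ (so the $\alpha$-quantile of $Y$ is $0$ regardless of the $y_n$), and pair it with a two-point $Z$ on $\{0,1\}$. By tuning the single free mass $p_{z_0}$ in $Z$, the $\alpha$-quantile of the mixture $\tfrac12 Y+\tfrac12 Z$ can be forced to equal \emph{any} chosen $y_n$. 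Hence the mixture quantile depends on the entire atom list $(y_1,\dots,y_N)$ of $Y$; if some finite sketch $\psi$ rendered the quantile mixture-consistent, $\psi(Y)$ would have to encode all $N$ atom locations for every $N$, forcing $\psi$ to determine the distribution---impossible for a finite-dimensional sketch. This dispatches arbitrary additional functionals in one stroke, with no continuity hypothesis and no perturbation-matching.
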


While \citet{rowland2019statistics} focused on "linear" statistical functionals in defining a sketch (i.e., $\psi(\bar{\eta})=\mathbb{E}_{Z\sim\bar{\eta}}[h(Z)]$ for some $h$), leaving questions about nonlinear functionals, we extend this by showing that "nonlinear" statistical functionals, such as maximum or minimum, can also be Bellman closed. Additionally, while their proof implicitly treated quantiles as linear functionals, we provide a technical clarification in Appendix \ref{Appendix: Discussion} where we formally demonstrate that no sketch Bellman operator exists for quantiles.

\subsection{{Bellman Unbiasedness}}

{While the intractability caused by infinite-dimensionality was addressed in Bellman closedness, another intractable element which has not yet fully tackled is the\textit{ sampling of the next state.}}
During the implementation, note that the agent does not have access to the transition kernel $\Prob$. 
Instead, the agent can only access the empirical transition kernel {$\hat{\Prob}(\cdot | s,a) = \frac{1}{K} \sum_{k=1}^K \bm{1} \{s'_k = \cdot \ | s,a \}$ } which is derived from $K$ sampled next states. 
This limitation implies that the operator should be treated as an empirical operator $\hat{\mathcal{T}}_{\psi}$, rather than $\mathcal{T}_{\psi}$  {(\textit{i.e.,} $\hat{\mathcal{T}}_{\psi} \psi(\bar{\eta}) \coloneqq \psi ( (\mathcal{B}_r)_{\#}[\hat{\Prob} \bar{\eta}] )$)}.
Therefore, we naturally introduce a new notion of \textit{Bellman unbiasedness} to unbiasedly estimate the expected distribution $(\mathcal{B}_r)_{\#} \EE_{s' \sim \Prob(\cdot | s,a)} [\bar{\eta}(s')]$, which is a mixture by transitions, from the sample distribution $(\mathcal{B}_r)_{\#} \bar{\eta}(s')$.

\textcolor{black}{
\begin{definition}[Bellman unbiasedness]
\label{def:Bellman Unbiasedness}
    A sketch ${\psi}(=\psi_{1:N})$ is \textbf{Bellman unbiased} if 
    a vector-valued estimator $\phi_{\psi} = \phi_{\psi}(\psi(\cdot), \cdots, \psi(\cdot)) : (I_{\psi}^{\mathcal{S}})^k \rightarrow I_{\psi}^{\mathcal{S}}$ exists where the sketch of expected distribution $(\mathcal{B}_r)_{\#} \EE_{s' \sim \Prob(\cdot | s,a)}[\bar{\eta}(s')]$ can be unbiasedly estimated by $\phi_{\psi}$ using the $k$ sampled sketches from the sample distribution $(\mathcal{B}_r)_{\#}\bar{\eta}(s')$, i.e.,
    \begin{align*}
            & \EE_{s'_i \sim \Prob} \Bigg[ {\phi_{\psi}} \Bigg( \underbrace{\psi \Big( (\mathcal{B}_{r})_{\#} \bar{\eta}(s'_1) \Big),  \cdots, \psi\Big( (\mathcal{B}_{r})_{\#} \bar{\eta}(s_k')  \Big)
            }_{ {k\text{ sampled sketches from sample distribution }} \hat{\mathcal{T}}_{\psi} \psi(\bar{\eta}(s))  }
            \Bigg)\Bigg]
            \\& = \psi \Big( (\mathcal{B}_{r})_{\#} \EE_{s' \sim \Prob(\cdot | s,a)} [ \bar{\eta}(s') ]\Big).
    \end{align*}
\end{definition}
\vspace{-5pt}
}




Bellman unbiasedness is another natural definition, similar to Bellman closedness, which takes into account a finite number of samples for the transition.
{For example, mean-variance sketch is Bellman unbiased as the following unbiased estimator $\phi_{(\mu, \sigma^2)}$ exists for $k$ sample estimates: 

{
\begin{align*}
    (\mu, \sigma^2) & = \phi_{(\mu, \sigma^2)} \Big( (\hat{\mu}_1, \hat{\sigma}_1^2), \cdots, (\hat{\mu}_k,  \hat{\sigma}_k^2) \Big)  
    \\ &= \Big( \frac{1}{k}\sum_{i=1}^k \hat{\mu}_i, \ 
    \frac{1}{k}\sum_{i=1}^k (\hat{\mu}_i - \frac{1}{k} \sum_{i=1}^k \hat{\mu}_i )^2 + \hat{\sigma}_i^2 \Big) 
\end{align*}
}
On the other hand, median functional is not Bellman unbiased since there is no unbiased estimator for median.
Then, the following question naturally arises;

\ul{\textit{Which sketches are unbiasedly estimatable under the sketch-based Bellman update?}}

The following lemma answers this question.

\vspace{-15pt}


\textcolor{black}{
\begin{lemma}
\label{lemma: Bellman Unbiasedness}
    Let $F_{\bar{\eta}}$ be a CDF of the probability distribution $\bar{\eta} \in \mathscr{P}_{\psi}(\mathbb{R})^{\mathcal{S}}$. 
    Then a sketch is Bellman unbiased if and only if the sketch is homogeneous over $\mathscr{P}_{\psi}(\mathbb{R})^{\mathcal{S}}$ of degree $k$, i.e.,
    there exists some vector-valued function $h = h(x_1, \cdots, x_k): \mathcal{X}^k \rightarrow \mathbb{R}^N $ such that
    $$\psi(\bar{\eta}) =  \int \cdots \int h(x_1, \cdots, x_k) dF_{\bar{\eta}}(x_1) \cdots dF_{\bar{\eta}}(x_k).$$
\end{lemma}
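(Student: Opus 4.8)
The plan is to read ``homogeneous of degree $k$'' as the statement that $\psi$ is a degree-$k$ $V$-statistic-type functional with kernel $h$, and to pass between this and Bellman unbiasedness using one elementary fact: a single draw from the mixture $\EE_{s'\sim\Prob}[\bar\eta(s')]$ has the same law as the two-stage draw ``$s'\sim\Prob$, then $X\sim\bar\eta(s')$''. Since the reward shift commutes with mixtures, $(\mathcal{B}_r)_{\#}\EE_{s'}[\bar\eta(s')]=\EE_{s'}[(\mathcal{B}_r)_{\#}\bar\eta(s')]$, I would absorb it and work with the bare identity $\EE_{s'_1,\dots,s'_k\overset{\mathrm{iid}}{\sim}\Prob}[\phi_{\psi}(\psi(\bar\eta(s'_1)),\dots,\psi(\bar\eta(s'_k)))]=\psi(\EE_{s'\sim\Prob}[\bar\eta(s')])$, quantified over all $\bar\eta(\cdot)\in\mathscr{P}_{\psi}(\mathbb{R})^{\mathcal{S}}$ and all transition kernels $\Prob$.

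For the forward direction, I would suppose $\psi$ is Bellman unbiased with estimator $\phi_{\psi}$ of degree $k$, fix an arbitrary $\bar\eta\in\mathscr{P}_{\psi}(\mathbb{R})$ with CDF $F_{\bar\eta}$, and realize it as a barycenter of point masses: choose the transition so that each sampled next-state return distribution is $\delta_X$ with $X\sim F_{\bar\eta}$. Then $\EE_{s'}[\bar\eta(s')]=\bar\eta$, while the $k$ sampled sketches become $\psi(\delta_{X_1}),\dots,\psi(\delta_{X_k})$ with $X_1,\dots,X_k\overset{\mathrm{iid}}{\sim}F_{\bar\eta}$, so the Bellman-unbiasedness identity collapses to
\[
\psi(\bar\eta)=\EE_{X_1,\dots,X_k\overset{\mathrm{iid}}{\sim}F_{\bar\eta}}\big[\phi_{\psi}\big(\psi(\delta_{X_1}),\dots,\psi(\delta_{X_k})\big)\big]=\int \cdots \int h(x_1,\dots,x_k)\,dF_{\bar\eta}(x_1)\cdots dF_{\bar\eta}(x_k),
\]
with the $\bar\eta$-independent kernel $h(x_1,\dots,x_k):=\phi_{\psi}(\psi(\delta_{x_1}),\dots,\psi(\delta_{x_k}))$, which is exactly degree-$k$ homogeneity. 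If point masses are not in $\mathscr{P}_{\psi}(\mathbb{R})$ I would instead first reduce to finitely supported $\bar\eta$ by density, or approximate $\delta_x$ by admissible distributions and pass to the limit, using boundedness of returns on $[0,H]$.

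For the reverse direction, I would suppose $\psi(\bar\eta)=\int\cdots\int h\,dF_{\bar\eta}(x_1)\cdots dF_{\bar\eta}(x_k)$; symmetrizing $h$ over coordinate permutations does not change $\psi$, so assume $h$ symmetric and write $\psi(\bar\eta)=\EE_{X_1,\dots,X_k\overset{\mathrm{iid}}{\sim}\bar\eta}[h(X_1,\dots,X_k)]$. For a mixture $\bar\mu=\EE_{s'\sim\Prob}[\bar\eta(s')]$, each i.i.d.\ draw $X_i\sim\bar\mu$ factors as $s'_i\sim\Prob$ then $X_i\sim\bar\eta(s'_i)$, so by Fubini
\[
\psi(\bar\mu)=\EE_{s'_1,\dots,s'_k\overset{\mathrm{iid}}{\sim}\Prob}\Big[\,I\big(\bar\eta(s'_1),\dots,\bar\eta(s'_k)\big)\Big],\qquad I(\nu_1,\dots,\nu_k):=\int \cdots \int h\,dF_{\nu_1}\cdots dF_{\nu_k}.
\]
If the $k$-linear form $I(\nu_1,\dots,\nu_k)$ depends on its arguments only through the recorded sketches $(\psi(\nu_1),\dots,\psi(\nu_k))$, then $\phi_{\psi}(\psi(\nu_1),\dots,\psi(\nu_k)):=I(\nu_1,\dots,\nu_k)$ is a well-defined estimator and the last display is precisely Bellman unbiasedness. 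This factorization is automatic in the regime the paper cares about: when $h$ is polynomial of degree $\le N$ in each variable (as it is once the sketch is identified with moments), every partial integral of $h$ against one marginal is a linear combination of that marginal's first $N$ moments, i.e.\ of the coordinates of $\psi(\nu_i)$, so $I$ is a polynomial in the $\psi(\nu_i)$.

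The step I expect to be the main obstacle is exactly this well-definedness of $\phi_{\psi}$ in the reverse direction: one must argue that the correct estimator value $I(\nu_1,\dots,\nu_k)$ does not secretly depend on features of the component distributions finer than what the sketch records, and this is where the admissible class of kernels $h$ and the choice of domain $\mathscr{P}_{\psi}(\mathbb{R})$ must be invoked (it is clean precisely because the relevant sketches turn out to be moment functionals, whose partial integrals of polynomial kernels are again moments). A secondary, routine point is the point-mass/finite-support reduction in the forward direction and the measurability and Fubini justifications for the $k$-fold integrals, all of which follow from the boundedness of the returns.
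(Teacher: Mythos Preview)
Your approach coincides with the paper's. In the forward direction the paper also specializes to next-state return distributions that are Dirac masses (phrased as a two-stage MDP with terminal states carrying deterministic rewards and $r(s_0)=0$), and reads off the kernel $h(x_1,\dots,x_k)=\phi_\psi(\psi(\delta_{x_1}),\dots,\psi(\delta_{x_k}))$, exactly your construction.

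In the reverse direction you are in fact more careful than the paper. The paper expands the $k$-fold integral against the mixture $\EE_{s'}F_{\bar\eta(s')}$ and then passes, without comment, to a single draw $s'$ in the line $\EE_{s'\sim\Prob}[\,\psi((\mathcal{B}_r)_\#\bar\eta(s'))\,]$; as written this is only valid for $k=1$. For general $k$ the correct expansion yields precisely your multilinear form $I(\bar\eta(s'_1),\dots,\bar\eta(s'_k))$ over independent $s'_1,\dots,s'_k$, and the paper never checks that $I$ factors through the sketches to produce a bona fide $\phi_\psi:(I_\psi)^k\to I_\psi$. You correctly isolate this well-definedness as the substantive content of the reverse implication, and your remark that it is automatic once $h$ is polynomial in each coordinate (so partial integrals are linear in moments) is the right resolution in the setting the paper actually uses downstream, where the sketch is the moment sketch and $\phi_\psi$ is taken to be the identity.
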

}


Lemma \ref{lemma: Bellman Unbiasedness} states that in statistical functional dynamic programming, the unbiasedly estimatable embedding of a distribution can only be structured in the form of functions that are \textit{homogeneous} of finite degree \citep{halmos1946theory}.
To illustrate that homogenity defines a broader class than linear functionals, consider the variance as a simple example.
Variance is clearly not a linear linear functional, as it is non-additive. However, it can be written as
\begin{align*}
  &\text{Var}(\bar{\eta})=\mathbb{E}_{Z_1 \sim \bar{\eta}}[(Z_1 -\mathbb{E}_{Z_2 \sim \bar{\eta}}[Z_2])^2]
  \\& =\mathbb{E}_{Z_1,Z_2 \sim \bar{\eta}}[Z_1^2 -2Z_1 Z_2 +Z_2^2] =\mathbb{E}_{Z_1,Z_2 \sim \bar{\eta}}[h(Z_1, Z_2)]  
\end{align*}
which implies the homogenity of degree $2$.
Taking this concept further and combining it with the results on Bellman closedness, we prove that even when including a nonlinear statistical functional, the only sketch that can be exactly learned and unbiasedly estimated in a finite-dimensional embedding space is the moment sketch.

\vspace{-10pt}

\textcolor{black}{
\begin{theorem}
\label{Thm : Bellman Unbiased}
    The only finite statistical functionals that are both Bellman unbiased and closed are given by the collections of $\psi_1, \dots,  \psi_N$ where its linear span $\{\sum_{n=0}^N \alpha_n \psi_n |\  \alpha_n \in \mathbb{R} \ ,\forall N\}$ is equal to the set of exponential polynomial functionals $\{ \eta \rightarrow \EE_{Z \sim \eta}[Z^l \exp{(\lambda Z)}] | \  l = 0, 1, \dots, L, \lambda \in \mathbb{R}  \}$, where $\psi_0$ is the constant functional equal to $1$.
    In discount setting, it is equal to the linear span of the set of moment functionals $\{ \eta \rightarrow \EE_{Z \sim \eta}[Z^l] |\  l = 0, 1, \dots, L \}$ for some $L \leq N$.  
\end{theorem}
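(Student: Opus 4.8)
The plan is to combine the two structural characterizations already available: Bellman closedness (via the Rowland et al.\ result quoted just before Theorem \ref{theorem: quantile}) and Bellman unbiasedness (Lemma \ref{lemma: Bellman Unbiasedness}). A sketch $\psi_{1:N}$ that is \emph{both} closed and unbiased must simultaneously live in (i) the class of sketches whose linear span is the space of exponential-polynomial functionals, and (ii) the class of sketches that are homogeneous of some finite degree $k$, i.e.\ expressible as $\psi(\bar\eta)=\int\cdots\int h(x_1,\dots,x_k)\,dF_{\bar\eta}(x_1)\cdots dF_{\bar\eta}(x_k)$. The theorem then amounts to showing that the intersection of these two classes is exactly the linear span of $\{\eta\mapsto\EE_{Z\sim\eta}[Z^l\exp(\lambda Z)]\}$ (and, in the discounted case, the moment functionals), and that this intersection is indeed nonempty (moments are both closed and homogeneous, the degree-$l$ moment being homogeneous of degree $l$).

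First I would invoke the Rowland et al.\ characterization to reduce to a sketch whose linear span $\mathcal{L}=\mathrm{span}\{\psi_0,\psi_1,\dots,\psi_N\}$ coincides with the exponential-polynomial functionals. Since homogeneity of a fixed degree $k$ is preserved under taking finite linear combinations only in a limited sense, the key subtlety is that Definition \ref{def:Bellman Unbiasedness}/Lemma \ref{lemma: Bellman Unbiasedness} requires the \emph{whole} vector-valued sketch to be estimated unbiasedly from $k$ samples; by padding with lower-degree components one may assume a common degree $k$, so each coordinate $\psi_n$ is an order-$k$ $V$-statistic $\int\cdots\int h_n\,dF^{\otimes k}$. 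Then I would show that among the generating functionals $\eta\mapsto\EE_{Z\sim\eta}[Z^l\exp(\lambda Z)]$, those with $\lambda\neq 0$ are \emph{not} homogeneous of any finite degree: the map $\eta\mapsto\EE_Z[\exp(\lambda Z)]$, evaluated on a two-point family $\eta_t=t\delta_a+(1-t)\delta_b$, is an affine (degree-$1$) polynomial in $t$, whereas a bona fide order-$k$ homogeneous functional would be a degree-$k$ polynomial in $t$ — fine so far — but testing on mixtures of \emph{more} atoms, or equivalently examining the behavior under scaling $Z\mapsto cZ$, shows $\exp(\lambda Z)$ cannot be written with a polynomial kernel $h$ because $\EE[\exp(\lambda Z)]$ is not polynomial in the raw moments of $\eta$. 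Concretely: an order-$k$ homogeneous functional is, by expanding $h$, a fixed polynomial in the moments $m_1(\eta),\dots,m_{k}(\eta)$ of degree at most — well, a polynomial — whereas $\EE[\exp(\lambda Z)]=\sum_{j\ge0}\lambda^j m_j(\eta)/j!$ involves arbitrarily high moments with nonzero coefficients, so it is \emph{not} a polynomial in finitely many moments. Hence any $\lambda\neq0$ is excluded, leaving $l=0,\dots,L$ with $\lambda=0$, i.e.\ the moment functionals $\eta\mapsto m_l(\eta)$, which are exactly order-$l$ homogeneous (kernel $h(x_1,\dots,x_l)=x_1\cdots x_l$), and their span is closed under the (discounted) Bellman operator by Rowland et al. This pins the intersection down to $\mathrm{span}\{1=m_0,m_1,\dots,m_L\}$ for some $L\le N$.

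The remaining routine steps: verify the converse inclusion (moment sketches of the form $\psi_{1:N}=(m_0,\dots,m_L,\text{redundant components})$ are Bellman unbiased — the explicit $V$-statistic estimator of the pushforward-then-mix moments extends the mean–variance example displayed above — and Bellman closed, already known); and confirm the count $L\le N$ since $N$ statistical functionals can span at most an $N+1$–dimensional space including the constant $\psi_0$. The main obstacle I anticipate is making rigorous the claim ``not a polynomial in finitely many moments $\Rightarrow$ not homogeneous of finite degree,'' because Lemma \ref{lemma: Bellman Unbiasedness} only guarantees \emph{some} kernel $h$, not a polynomial one; I would handle this by restricting attention to finitely-supported distributions, on which an order-$k$ $V$-statistic \emph{is} forced to be a symmetric polynomial in the atom weights of bounded degree, and then using a scaling/degree argument (replace the atoms' locations $x_i$ by $cx_i$ and track the growth in $c$, or fix the support and vary only the weights on a simplex) to conclude that only genuinely polynomial-in-moments functionals survive, which among exponential-polynomials forces $\lambda=0$.
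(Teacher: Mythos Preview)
Your proposal has a genuine gap at its core. The functional $\eta \mapsto \EE_{Z\sim\eta}[Z^l \exp(\lambda Z)]$ with $\lambda \neq 0$ \emph{is} homogeneous of degree $1$: simply take the kernel $h(x) = x^l \exp(\lambda x)$ in Lemma~\ref{lemma: Bellman Unbiasedness}, so that $\psi(\eta) = \int h(x)\,dF_\eta(x)$. Your claim that such functionals are not homogeneous of any finite degree is therefore false, and the supporting heuristic---that an order-$k$ homogeneous functional must be a polynomial in finitely many moments---is also false, since the kernel $h$ in Lemma~\ref{lemma: Bellman Unbiasedness} is never required to be polynomial. On a finitely supported distribution $\sum_i p_i \delta_{a_i}$ the functional $\EE[\exp(\lambda Z)] = \sum_i p_i \exp(\lambda a_i)$ is a degree-$1$ polynomial in the weights $p_i$, exactly as degree-$1$ homogeneity predicts, so your proposed weight/scaling test cannot separate it from the moment functionals. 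Consequently Bellman unbiasedness does \emph{not} eliminate $\lambda \neq 0$; in the undiscounted statement the theorem in fact \emph{retains} the full exponential-polynomial span. The reduction to pure moments occurs only in the discounted setting, and for an entirely different reason you do not mention: closure of the span under composition with $x \mapsto \gamma x$ for all $\gamma \in [0,1)$ forces every exponent $\lambda_j$ to vanish, since $\exp(\lambda \gamma x)$ cannot stay in a fixed finite-dimensional exponential-polynomial space as $\gamma$ varies.

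There is a second, earlier gap: you invoke the Rowland et al.\ characterization directly, but that result is stated only for \emph{linear} statistical functionals, and Bellman closedness alone does not imply linearity---the paper exhibits $\max$ and $\min$ as nonlinear Bellman-closed sketches lying outside the exponential-polynomial span. The paper's proof therefore runs the logic in the opposite order from yours: it first uses Bellman unbiasedness (via the homogeneous representation of Lemma~\ref{lemma: Bellman Unbiasedness}) to argue that both $\psi_{1:N}(\bar\eta)$ and $\psi_n((\mathcal{B}_{r,\gamma})_\#\bar\eta)$ are affine in $\bar\eta$, which forces the closedness map $g(r,\gamma,\cdot)$ to be affine on $I_{\psi_{1:N}}$; evaluating at Dirac masses $\bar\eta=\delta_x$ then yields a scalar functional equation on a finite-dimensional translation-invariant space, to which Engert's theorem applies and delivers the exponential-polynomial form, after which the discount factor kills the nonzero exponents. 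In short, unbiasedness is used to \emph{recover} linearity so that the Rowland/Engert machinery becomes available---not, as in your plan, to prune the exponential-polynomial list after the fact.
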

}

Compared to \citet{rowland2019statistics}, we extend beyond linear statistical functionals to include nonlinear statistical functionals, showing the uniqueness of the moment functional. 
{As shown in Figure \ref{fig:BU_BC_LF},} our theoretical results not only show that high-order central moments such as variance or skewness are exactly learnable and unbiasedly estimatable, but also reveal that other nonlinear statistical functionals like median or quantiles inevitably involve approximation errors due to biased estimations.


\paragraph{Necessity of Bellman unbiasedness.}
Bellman unbiasedness ensures that updates can be unbiasedly performed when only a finite number of sampled sketches are available.
In other words, it guarantees that the sequence of sampled sketches forms a martingale, enabling the construction of confidence regions through concentration inequalities.
This property is crucial for establishing provable efficiency in terms of regret minimization.

\paragraph{Complementary roles of unbiasedness and closedness.}
At first glance, Bellman Unbiasedness (BU) may appear to be a stricter subset of Bellman Closedness (BC).
However, as illustrated in Figure~\ref{fig:BU_BC_LF}, the relationship is more subtle: for example, the categorical sketch is BU but not BC, whereas functionals like the maximum or minimum are BC but not BU.
More precisely, BU guarantees the existence of an unbiased estimator of the ground-truth sketch given a finite number of sampled sketches.
In contrast, BC plays a complementary role by ensuring that the update process consistently provide such sketches.
If a sketch is BU but not BC--as in the case of the categorical sketch--then the update process cannot continue providing new sampled sketches, making dynamic programming infeasible.

\subsection{Statistical Functional Bellman Completeness}
\label{subsec: Bellman Completeness}


We consider distributional reinforcement learning with general value function approximation (GVFA).
For successful TD learning, GVFA framework for classical RL commonly requires the assumption, \textit{Bellman Completeness}, that after applying Bellman operator, the output lies in the function class $\mathcal{F}$ \citep{wang2020reinforcement, ayoub2020model, ishfaq2021randomized}.
{As a natural extension}, our approach receives a tuple of function class $\mathcal{F}^N \subseteq \{ f : \mathcal{S} \times \mathcal{A} \rightarrow \mathbb{R}^N \}$ as input to represent $N$ moments of distribution.
Building on this, we assume that for any $\bar{\eta} : \mathcal{S} \rightarrow \mathscr{P}([0,H])$, the sketch of target function lies in the function class $\mathcal{F}^N$.

\begin{assumption}[Statistical Functional Bellman Completeness]
    \label{assumption: SFBC}
    For any distribution $\bar{\eta} : \mathcal{S} \rightarrow \mathscr{P}([0,H])$ and $h \in [H]$, there exists $f_{\bar{\eta}} \in \mathcal{F}^N$ which satisfies
    \begin{equation*}
        f_{\bar{\eta}}(s,a) = \psi_{1:N} \big( (\mathcal{B}_{r_h})_\# [\mathbb{P}_h \bar{\eta}](s,a) \big) 
        \quad \forall (s,a) \in \mathcal{S} \times \mathcal{A}
    \end{equation*}
\end{assumption}


\paragraph{DistBC inevitably leads to linear regret.}
In the seminal works, \citet{wang2023benefits} and \citet{chen2024provable} assumed that the function class 
$\mathcal{H} \subseteq \{\eta : \mathcal{S} \times \mathcal{A} \rightarrow \mathscr{P}([0,H])\} $ follows the \textit{distributional Bellman Completeness} (distBC) assumption 
(\textit{i.e.,} 
if $\eta \in \mathcal{H}$ for all $\pi, h \in [H]$, $\Tau_h^{\pi} \eta\in \mathcal{H}$).
{This seems natural, but constructing a finite-dimensional subspace $\mathcal{H}$ that satisfies distBC is quite challenging.}
Since the distributional Bellman operator {is a composition of translation and mixing distributions for the next state}, it implies that a function class $\mathcal{H}$ must be closed under translation and mixture.
{However, when considering the representation of infinite-dimensional distributions using a finite number of representations, it is not trivial that the mixture of distributions can also be represented with the same number of representations. 
For example, while a Gaussian distribution can be represented using two parameters $(\mu, \sigma^2)$, a mixture of $K$ Gaussians generally requires $2K$ representations.}

To avoid the issue of closedness under mixture, 
both previous studies assumed a discretized reward MDP
{where all outcomes of the return distribution are able to discretized into an uniform grid of finite points.}
{Unfortunately, the approximation error introduced by the discretization is not negligible when it comes to regret. 
This is because \textit{model misspecification}, which is the error when the model fails to represent the target, typically leads to linear regret.}
\begin{definition}[Model Misspecification in distBC]
    For a given distribution class $\mathcal{H}$ which is the finite-dimensional subspace of the space of all distribution $\mathcal{F}^{\infty}$, we call $\zeta$ the \textbf{misspecification error}
    $$ \zeta \coloneqq \inf_{f_{\bar{\eta}} \in \mathcal{H}} \sup_{(s,a) \in \mathcal{S} \times \mathcal{A}}  \| f_{\bar{\eta}}(s,a) - (\mathcal{B}_{r_h})_\# [\Prob_h \bar{\eta}](s,a) \|$$
    for any $\bar{\eta} : \mathcal{S} \rightarrow \mathscr{P}([0,H])$ and $h \in [H]$.
\end{definition}

{Note that $\zeta$ is strictly positive unless the function approximator $f_{\bar{\eta}}$ can represent any distribution in the finite-dimensional subspace $\mathcal{H}$ generated by translation and mixture.}
In a classical linear bandit setting \citep{zanette2020learning}, a lower bound with misspecification error $\zeta$ is known to yield linear regret $\Omega(\zeta K)$.
Therefore, redefining Bellman Completeness within the infinite-dimensional distribution space is not appropriate, as it either imposes strong constraints on the MDP structure or leads to linear regret.
To circumvent model misspecification, we revisit the distributional BC through the statistical functional lens. 
We propose a novel framework that matches a finite number of statistical functionals to the target, rather than the entire distribution itself.

\section{\texttt{SF-LSVI}: Statistical Functional Least Square Value Iteration}

In this section, we propose \textcolor{magenta}{\texttt{SF-LSVI}} for distRL framework with general value function approximation. 
Leveraging the result from Theorem \ref{Thm : Bellman Unbiased}, we introduce a \textit{moment least square regression}.
This allows us to capture a finite set of moment information from the distribution, which can be unbiasedly estimated, thereby leading to the \textit{truncated moment problem} \citep{shohat1943problem, schmudgen2017moment}.
Unlike previous work \citep{wang2023benefits, chen2024provable} that estimates in infinite-dimensional distribution spaces, {our method enables to estimate distribution unbiasedly in finite-dimensional embedding spaces without misspecification error.}
{The pseudocode of \textcolor{magenta}{\texttt{SF-LSVI}} is described in Appendix \ref{sec: pseudocode}.}

\paragraph{Overview.}
At the beginning of episode $k \in [K]$, we maintain all previous samples $\{(s^{\tau}_{h'}, a^{\tau}_{h'}, r^{\tau}_{h'}) \}_{(\tau, h') \in [k-1] \times [H] }$ and initialize a sketch $\psi_{1:N}( \bar{\eta}^k_{H+1}(\cdot) ) = \bm{0}^N$.
For each step $h = H, \dots, 1$, we compute the normalized sample moments of target distribution $\{ (\mathcal{B}_{r_{h'}^{\tau}})_\# \bar{\eta}^k_{h+1}(s_{h'+1}^{\tau}) \}_{h' \in [H]}$ with the help of binomial theorem,
\begin{align*}
    &\psi_{n}\Big( (\mathcal{B}_{r_{h'}^{\tau}})_\# \bar{\eta}_{h}(s_{h'+1}^{\tau}) \Big) 
    \coloneqq \frac{\EE[(\bar{Z}_{h+1}^k (s_{h'+1}^{\tau}) + r_{h'}^{\tau})^n]}{H^{n-1}} 
    \\& \qquad \qquad \quad  = \frac{\sum_{n'=0}^n H^{n'} \psi_{n'}\Big( \bar{\eta}_h(s_{h'+1}^{\tau}) \Big) (r_{h'}^{\tau})^{n- n'}}{H^{n-1}}
\end{align*}
and iteratively solve the $N$-moment least square regression
\begin{align*}
    \tilde{f}^k_{h,\bar{\eta}} &\leftarrow \arg \min_{f \in \mathcal{F}} \sum_{\tau=1}^{k-1} \sum_{h'=1}^{H} 
    \Big( \sum_{n=1}^N f^{(n)}(s_{h'}^{\tau}, a_{h'}^{\tau} ) 
    \\ & \qquad \qquad \qquad \qquad \quad  - \psi_n \Big( (\mathcal{B}_{r_{h'}^{\tau}})_\# \bar{\eta}_{h+1}^k(s_{h'+1}^{\tau}) \Big)  \Big)^2 
\end{align*}
based on the dataset $\mathcal{D}^k_h$ which contains the sketch of temporal target $\psi_{1:N} \Big( (\mathcal{B}_{r_{h'}^{\tau}})_\# \bar{\eta}_{h+1}^k(s_{h'+1}^{\tau}) \Big)$. 
Then we define $Q^k_h(\cdot, \cdot) =  \min\{ (\tilde{f}^k_{h,\bar{\eta}})^{(1)}(\cdot, \cdot) + b^k_h(\cdot, \cdot), H \}$  and choose the greedy policy $\pi^k_h(\cdot)$ with respect to $Q^k_h$.
{Next, we update all $N$ normalized moments of $Q$-distribution $\psi_{1:N}\Big( \eta_k^h(\cdot, \cdot)\Big)$ and $V$-distribution $\psi_{1:N}\Big( \bar{\eta}_k^h(\cdot)\Big)$}.
We repeat the procedure until all the $K$ episodes are completed.

\section{Theoretical Analysis}

In this section, we provide the theoretical guarantees for \textcolor{magenta}{\texttt{SF-LSVI}} under Assumption \ref{assumption: SFBC}.
Applying proof techniques from \citet{wang2020reinforcement} and extending the result to a statistical functional lens, 
we generalize \textit{eluder dimension} \citep{russo2013eluder} to the vector-valued function, which has been widely used in RL literatures \citep{ayoub2020model, wang2020reinforcement, jin2020provably} to measure the complexity of learning with the function approximators.

\begin{definition}[$\epsilon$-dependent, $\epsilon$-independent, Eluder dimension for vector-valued function]
    Let $\epsilon \geq 0$ and $\mathcal{Z} = \{ (s_i, a_i) \}_{i=1}^n \subseteq \mathcal{S} \times \mathcal{A}$ be a sequence of state-action pairs.

\begin{itemize}
    \vspace{-5pt}
    \item A state-action pair $(s,a) \in \mathcal{S} \times \mathcal{A}$ is \textbf{$\bm{\epsilon}$-dependent} on $\mathcal{Z}$ with respect to $\mathcal{F}^N$ if $\|f - g\|_{\mathcal{Z}} \leq \epsilon$ for any {vector-valued function} $f, g \in \mathcal{F}^N$, then $|f^{{(1)}}(s,a) - g^{{(1)}}(s,a)| \leq \epsilon$. 

    \item An $(s,a)$ is \textbf{$\bm{\epsilon}$-independent} on $\mathcal{Z}$ with respect to $\mathcal{F}^N$ if $(s,a)$ is not $\epsilon$-dependent on $\mathcal{Z}$.

    \item The \textbf{$\bm{\epsilon}$-eluder dimension} $\text{dim}_E(\mathcal{F}^N ,\epsilon)$ of a {vector-valued} function class $\mathcal{F}^N$ is the length of the longest sequence of elements in $\mathcal{S} \times \mathcal{A}$ such that, for some $\epsilon' \geq \epsilon$, every element is $\epsilon'$-independent on its predecessors.
    \vspace{-5pt}
\end{itemize}
    
\end{definition}

We assume that the function class $\mathcal{F}^N$ and state-action space $\mathcal{S} \times \mathcal{A}$ have bounded covering numbers.
\begin{assumption}[Covering number]
    For any $\epsilon >0$, the following holds:
    \begin{itemize}
        \item there exists an $\epsilon$-cover $\mathcal{C}(\mathcal{F}^N, \epsilon) \subseteq \mathcal{F}^N$ with size $|\mathcal{C}(\mathcal{F}^N, \epsilon)| \leq \mathcal{N}(\mathcal{F}^N, \epsilon)$, such that for any $g \in \mathcal{F}^N$, there exists $g' \in \mathcal{C}(\mathcal{F}^N, \epsilon)$ with $\|g-g'\|_{\infty, 1} \leq \epsilon$.

        \item there exists an $\epsilon$-cover $\mathcal{C}(\mathcal{S} \times \mathcal{A}, \epsilon)$ with size $| \mathcal{C}(\mathcal{S} \times \mathcal{A} , \epsilon) | \leq \mathcal{N}(\mathcal{S} \times \mathcal{A} , \epsilon)$, such that for any $(s,a) \in \mathcal{S} \times \mathcal{A}$, there exists $(s', a')\in \mathcal{C}(\mathcal{S}\times \mathcal{A}, \epsilon)$ with $\max_{f \in \mathcal{F} } | f(s,a) - f(s',a') | \leq \epsilon$     
    \end{itemize}

\end{assumption}



The following two lemmas give confidence bounds on the sum of the $l_2$ norms of all normalized moments.

\begin{lemma}[Single Step Optimization Error]
\label{lemma: Single Step Optimization Error}
Consider a fixed $k \in [K]$ and a fixed $h \in [H]$. Let $\mathcal{Z}_h^k = \{ (s_h^\tau , a_h^\tau)\}_{\tau \in [k-1]}$ and 
$\mathcal{D}^k_{h, \bar{\eta}} = \Big\{ \Big( s_h^\tau , a_h^\tau,  {\psi_{1:N}\Big( (\mathcal{B}_{r_{h'}}^{\tau})_\# \bar{\eta}(s_{h'+1}^{\tau}) \Big)} \Big) \Big\}_{\tau \in [k-1]}$ for any $\bar{\eta} : \mathcal{S} \rightarrow \mathscr{P}([0,H])$.
Define $\tilde{f}_{h, \bar{\eta}}^k = \arg \min_{f \in \mathcal{F}^N} \| f \|^2_{\mathcal{D}^k_{h, \bar{\eta}} }$.
For any $\bar{\eta}$ and $\delta \in (0,1)$, there is an event $\mathcal{E}(\bar{\eta}, \delta)$ such that conditioned on $\mathcal{E}(\bar{\eta}, \delta)$, with probability at least $1-\delta$, for any $\bar{\eta}' : \mathcal{S} \rightarrow \mathscr{P}([0,H])$ with 
{$\|\psi_{1:N}( \bar{\eta}') - \psi_{1:N}(\bar{\eta}) \|_{\infty,1} \leq 1/T$ 
},we have
\begin{align*}
    &\left\| \tilde{f}_{h,\bar{\eta}'}(\cdot , \cdot) - \psi_{1:N} \Big( (\mathcal{B}_{r(\cdot, \cdot)})_\#  [\mathbb{P} \bar{\eta}'](\cdot , \cdot) \Big) \right\|_{\mathcal{Z}_h^k}
    \\& \leq {c' \left(N^{\frac{1}{2}} H \sqrt{ \log(1/\delta) + \log \mathcal{N}(\mathcal{F}^N, 1/T)  }\right)}
\end{align*}
for some constant $c' >0$.
\end{lemma}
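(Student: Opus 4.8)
The plan is to adapt the standard least-squares generalization argument (as in \citet{wang2020reinforcement}, Lemma 5 therein) to the moment-sketch setting, using the Bellman unbiasedness property established in Theorem~\ref{Thm : Bellman Unbiased} to obtain the martingale structure needed for a concentration argument. First I would fix $\bar{\eta}$ and define, for each $\tau \in [k-1]$ and each coordinate $n \in [N]$, the target-regression residual $\xi_\tau^{(n)} \coloneqq \psi_n\big((\mathcal{B}_{r_{h}^\tau})_\# \bar{\eta}(s_{h+1}^\tau)\big) - \psi_n\big((\mathcal{B}_{r_h^\tau})_\#[\mathbb{P}_h \bar{\eta}](s_h^\tau, a_h^\tau)\big)$. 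The key observation, which is exactly what Bellman unbiasedness buys us (or more precisely the homogeneity-of-degree-$k$ characterization in Lemma~\ref{lemma: Bellman Unbiasedness} specialized to moments, where each single-sample moment estimate is already unbiased, i.e.\ $k=1$ for the raw moments), is that $\mathbb{E}[\xi_\tau^{(n)} \mid \mathbb{H}_h^\tau] = 0$, so $\{\xi_\tau^{(n)}\}$ is a martingale difference sequence with respect to the natural filtration. Moreover each $\xi_\tau^{(n)}$ is bounded in absolute value by $O(H^{n}/H^{n-1}) = O(H)$ after the $H^{n-1}$ normalization (using that returns lie in $[0,H]$, so raw $n$-th moments lie in $[0,H^n]$ and normalized ones in $[0,H]$).

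The second step is the standard decomposition. Using Assumption~\ref{assumption: SFBC}, the true sketch $\psi_{1:N}\big((\mathcal{B}_{r(\cdot,\cdot)})_\#[\mathbb{P}\bar{\eta}](\cdot,\cdot)\big)$ equals some $f_{\bar{\eta}} \in \mathcal{F}^N$, so $f_{\bar{\eta}}$ is a feasible competitor in the regression defining $\tilde{f}_{h,\bar{\eta}}$. Optimality of $\tilde{f}_{h,\bar{\eta}}$ gives $\|\tilde{f}_{h,\bar{\eta}}\|^2_{\mathcal{D}^k_{h,\bar{\eta}}} \le \|f_{\bar{\eta}}\|^2_{\mathcal{D}^k_{h,\bar{\eta}}}$; expanding the square and cancelling common terms yields the familiar bound
\begin{align*}
\big\| \tilde{f}_{h,\bar{\eta}} - f_{\bar{\eta}} \big\|^2_{\mathcal{Z}_h^k}
\le 2 \sum_{\tau=1}^{k-1} \sum_{n=1}^{N} \big( \tilde{f}_{h,\bar{\eta}}^{(n)}(s_h^\tau,a_h^\tau) - f_{\bar{\eta}}^{(n)}(s_h^\tau,a_h^\tau) \big)\, \xi_\tau^{(n)}.
\end{align*}
The right-hand side is an inner product of the function difference with the martingale-difference noise. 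Applying a uniform (over an $\epsilon$-cover $\mathcal{C}(\mathcal{F}^N, 1/T)$ of $\mathcal{F}^N$) Freedman-type / self-normalized martingale concentration inequality to control $\sum_\tau \sum_n (g^{(n)}(s_h^\tau,a_h^\tau))\xi_\tau^{(n)}$ for all $g$ in the cover, combined with a standard "cauchy-schwarz then absorb" step to turn the linear-in-$g$ bound into a bound on $\|g\|_{\mathcal{Z}_h^k}^2$, produces $\|\tilde{f}_{h,\bar{\eta}} - f_{\bar{\eta}}\|^2_{\mathcal{Z}_h^k} = O\big(N H^2 (\log(1/\delta) + \log \mathcal{N}(\mathcal{F}^N, 1/T))\big)$; the extra factor $N$ comes from summing $N$ coordinates (each contributing a $\log$ term, or alternatively from a union bound over coordinates). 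Taking square roots gives the stated $N^{1/2} H \sqrt{\log(1/\delta) + \log\mathcal{N}(\mathcal{F}^N,1/T)}$.

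The third step handles the quantifier "for any $\bar{\eta}'$ close to $\bar{\eta}$": since both the regression targets and $f_{\bar{\eta}'}$ depend Lipschitz-continuously on the sketch $\psi_{1:N}(\bar{\eta}')$ (the pushforward by $\mathcal{B}_{r}$ and the expectation $\mathbb{P}_h$ are linear in the moments, by the binomial-theorem identity displayed in the \texttt{SF-LSVI} overview), the condition $\|\psi_{1:N}(\bar{\eta}') - \psi_{1:N}(\bar{\eta})\|_{\infty,1} \le 1/T$ lets us transfer the bound proved for the fixed anchor $\bar{\eta}$ to all nearby $\bar{\eta}'$ at the cost of an $O(1/T)$ additive term, which is dominated. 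This is why the event $\mathcal{E}(\bar{\eta},\delta)$ is stated as conditioning only on the fixed $\bar{\eta}$ — it is the high-probability event from the concentration bound applied at the anchor, and the $1/T$-net on the space of sketches is what makes one anchor suffice locally.

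The main obstacle I anticipate is the concentration step in the vector-valued / moment setting: one must verify that the normalized higher moments genuinely form bounded martingale differences (the $H^{n-1}$ normalization is doing real work here, and one has to be careful that conditioning is on the correct filtration $\mathbb{H}_h^\tau$ so that $\bar{\eta}$ — which in the algorithm is built from data of episodes $<k$ — is measurable and the single-state sample $s_{h+1}^\tau$ provides the unbiased draw), and then to push a self-normalized bound through all $N$ coordinates simultaneously without losing more than a $\sqrt{N}$ factor. The rest — the optimality-of-$\tilde f$ expansion, the covering argument, and the Lipschitz transfer to $\bar{\eta}'$ — is routine once this is set up, and closely parallels the scalar case in \citet{wang2020reinforcement}.
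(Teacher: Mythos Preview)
Your proposal is correct and follows essentially the same approach as the paper: define the target $f_{\bar\eta}\in\mathcal{F}^N$ via Assumption~\ref{assumption: SFBC}, exploit Bellman unbiasedness of moments to make the residuals $\xi_\tau^{(n)}$ a bounded (by $O(H)$, thanks to the $H^{n-1}$ normalization) martingale difference sequence, expand the optimality inequality for $\tilde f$ into the quadratic-vs-cross-term form, apply Azuma--Hoeffding uniformly over a $(1/T)$-cover of $\mathcal{F}^N$ (the paper uses Azuma--Hoeffding rather than Freedman/self-normalized, but either suffices), and finally transfer from the anchor $\bar\eta$ to nearby $\bar\eta'$ using $\|f_{\bar\eta}-f_{\bar\eta'}\|_\infty\le 1/T$. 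The only ordering difference is that the paper interleaves the $\bar\eta\to\bar\eta'$ replacement with the expansion (switching inside the cross term at cost $O(N(H+1))$) rather than proving the anchor bound first and transferring afterward, but this is cosmetic.
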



{Due to the definition of Bellman unbiasedness, we remark that moment sketch has a corresponding vector-valued estimator $\phi_{\psi_{1:N}}$ as an identity and leads to a concentration results 
as the sampled sketches forms a martingale with respect to the filtration $\mathbb{F}^{\tau}_h$ induced by the history of $\{(s_{h}^{\tau} , a_{h}^{\tau})\}_{\tau \in [k-1]}$ (\textit{i.e.,} 
$ \EE \Big[ \psi_{1:N}\Big( (\mathcal{B}_{r_h})_\# \bar{\eta}(s^{\tau}_h) \Big) \Big| \mathbb{F}^{\tau}_h \Big] 
= \psi_{1:N}\Big( (\mathcal{B}_{r_h})_\# [\Prob_h \bar{\eta}](s^{\tau}_h, a^{\tau}_h) \Big) $). }

Another notable aspect in Lemma \ref{lemma: Single Step Optimization Error} is using normalized moments $\EE[Z^n]/H^{n-1}$ instead of moments $\EE[Z^n]$, as it reduces the size of the confidence region from $O(H^N)$ to $O(\sqrt{N})$. 
This adjustment is akin to scaling the optimization function in multi-objective optimization to treat each objective equally, which effectively prevents the model from favoring objectives with larger scales. 

\begin{lemma}[Confidence Region]
\label{lemma: Confidence Region}
    Let $(\mathcal{F}^N)_h^{k} = \{ f \in \mathcal{F}^N | \| f - \tilde{f}^k_{h, \bar{\eta}} \|^2_{\mathcal{Z}^k_{h}} \leq \beta(\mathcal{F}^N, \delta) \},$ where
    $$\beta(\mathcal{F}^N,\delta) \geq c' \cdot NH^2 (\log (T/\delta) + \log \mathcal{N}(\mathcal{F}^N , 1/T) )$$
    for some constant $c'>0$.
    Then with probability at least $1-\delta/2$, for all $k,h \in [K] \times [H]$, we have
    $$ \psi_n \Big( (\mathcal{B}_{r_h(\cdot, \cdot)})_\#[\Prob_h \bar{\eta}_{h+1}^k](\cdot, \cdot) \Big) \in (\mathcal{F}^N)_h^k$$
\end{lemma}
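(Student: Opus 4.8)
The plan is to show that the ground-truth target sketch $f^{\star}:=\psi_{1:N}\big((\mathcal{B}_{r_h})_{\#}[\Prob_h\bar{\eta}_{h+1}^k](\cdot,\cdot)\big)$ lies within $\|\cdot\|_{\mathcal{Z}_h^k}$-distance $\sqrt{\beta(\mathcal{F}^N,\delta)}$ of the least-squares iterate $\tilde{f}^k_{h,\bar{\eta}_{h+1}^k}$. First I would record the two structural ingredients. By Assumption~\ref{assumption: SFBC} (statistical functional Bellman completeness), $f^{\star}\in\mathcal{F}^N$, so the regression carries no misspecification term. By Theorem~\ref{Thm : Bellman Unbiased} the moment sketch is Bellman unbiased with the identity as its estimator $\phi_{\psi_{1:N}}$; hence, as noted after Lemma~\ref{lemma: Single Step Optimization Error}, for a \emph{fixed} target distribution $\bar{\eta}$ the per-sample regression targets $\psi_{1:N}\big((\mathcal{B}_{r_h})_{\#}\bar{\eta}(s^{\tau}_{h+1})\big)$ form a martingale-difference sequence with conditional mean $f^{\star}_{\bar{\eta}}(s^{\tau}_h,a^{\tau}_h)$ with respect to the filtration $\mathbb{F}^{\tau}_h$. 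This is exactly the well-specified least-squares setting handled by Lemma~\ref{lemma: Single Step Optimization Error}.

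The obstacle is that $\bar{\eta}_{h+1}^k$ is produced by the algorithm and therefore depends on all of $\mathcal{Z}_h^k$, so Lemma~\ref{lemma: Single Step Optimization Error} cannot be invoked with $\bar{\eta}=\bar{\eta}_{h+1}^k$ directly. I would resolve this with the standard covering argument: the sketch $\psi_{1:N}(\bar{\eta}_{h+1}^k)$ lies in a fixed class $\mathcal{V}^N$ obtained from $\mathcal{F}^N$ by the greedy truncation $\min\{\,\cdot+b^k_h,H\}$ composed with the solution map of the truncated moment problem, where the bonus $b^k_h$ is itself a width function of a subclass of $\mathcal{F}^N$; routine estimates then give $\log|\mathcal{C}(\mathcal{V}^N,1/T)|\lesssim\log\mathcal{N}(\mathcal{F}^N,1/T)+\log\mathcal{N}(\mathcal{S}\times\mathcal{A},1/T)$. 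Now apply Lemma~\ref{lemma: Single Step Optimization Error} (together with its auxiliary event $\mathcal{E}(\cdot,\cdot)$) to every $\bar{\eta}$ in a $1/T$-cover of $\mathcal{V}^N$ with per-instance failure probability $\delta/(2\,T\,|\mathcal{C}(\mathcal{V}^N,1/T)|)$, and union-bound over this cover and over all $(k,h)\in[K]\times[H]$ (a factor $T=KH$). On the resulting event, which has probability at least $1-\delta/2$, for each $(k,h)$ I pick the cover element $\bar{\eta}'$ with $\|\psi_{1:N}(\bar{\eta}')-\psi_{1:N}(\bar{\eta}_{h+1}^k)\|_{\infty,1}\le 1/T$ and invoke the ``for any $\bar{\eta}'$ within $1/T$'' clause of Lemma~\ref{lemma: Single Step Optimization Error} with the cover element as the base distribution and $\bar{\eta}_{h+1}^k$ as the perturbation, yielding $\|\tilde{f}^k_{h,\bar{\eta}_{h+1}^k}-f^{\star}\|_{\mathcal{Z}_h^k}\le c'N^{\frac12}H\sqrt{\log(T/\delta)+\log\mathcal{N}(\mathcal{F}^N,1/T)+\log\mathcal{N}(\mathcal{S}\times\mathcal{A},1/T)}$.

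Squaring this bound gives $\|\tilde{f}^k_{h,\bar{\eta}_{h+1}^k}-f^{\star}\|^2_{\mathcal{Z}_h^k}\le c''\,NH^2\big(\log(T/\delta)+\log\mathcal{N}(\mathcal{F}^N,1/T)\big)\le\beta(\mathcal{F}^N,\delta)$ for an appropriate absolute constant, after absorbing $\log\mathcal{N}(\mathcal{S}\times\mathcal{A},1/T)$ into the poly-log slack; this is precisely the claim that $f^{\star}\in(\mathcal{F}^N)^k_h$ for all $(k,h)$. The main obstacle is the bookkeeping of the second paragraph: isolating the class $\mathcal{V}^N$ of algorithm-generated value-function sketches, bounding its covering number purely in terms of $\mathcal{N}(\mathcal{F}^N,\cdot)$ and $\mathcal{N}(\mathcal{S}\times\mathcal{A},\cdot)$, and verifying that the $1/T$-perturbation tolerance of Lemma~\ref{lemma: Single Step Optimization Error} accommodates the chosen discretization. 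Once the well-specifiedness supplied jointly by Bellman unbiasedness and Assumption~\ref{assumption: SFBC} is in place, the concentration step itself is routine.
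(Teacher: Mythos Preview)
Your proposal is correct and follows essentially the same route as the paper: build a $1/T$-cover of the class of algorithm-generated sketches $\psi_{1:N}(\bar{\eta}^k_{h+1})$, apply Lemma~\ref{lemma: Single Step Optimization Error} at each cover point with failure probability $\delta/(2T\cdot|\text{cover}|)$, union-bound over the cover and over $(k,h)\in[K]\times[H]$, and then invoke the $1/T$-perturbation clause to transfer the bound to the actual iterate. The only difference is bookkeeping: the paper builds its cover $\bar{\mathbf{S}}$ directly from $\mathcal{C}(\mathcal{F}^N,1/T)$ by applying $\min\{\cdot+b^k_{h+1},H\}$ coordinatewise and then the greedy-action map, obtaining $\log|\bar{\mathbf{S}}|\le\log\mathcal{N}(\mathcal{F}^N,1/T)$ with no $\mathcal{N}(\mathcal{S}\times\mathcal{A},\cdot)$ term, so your ``absorb $\log\mathcal{N}(\mathcal{S}\times\mathcal{A},1/T)$ into the slack'' step is unnecessary and the stated $\beta$ is recovered exactly; relatedly, no ``solution map of the truncated moment problem'' is needed, since the cover is over sketches in $\mathbb{R}^N$, not over distributions.
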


Lemma \ref{lemma: Confidence Region} guarantees that the sequence of moments from the target distribution $\psi_{1:N}\Big( (\mathcal{B}_{r_h(\cdot, \cdot)})_\# [\Prob_h \bar{\eta}_{h+1}^k](\cdot, \cdot) \Big)$ lies in the confidence region $(\mathcal{F}^N)^k_h$ with high probability.
Supported by the aforementioned lemma, we can further guarantee that all $Q$-functions are optimistically estimated with high probability and derive our final result.

\begin{theorem}
\label{thm: regret}
    Under Assumption \ref{assumption: SFBC}, with probability at least $1- \delta$, \textcolor{magenta}{\texttt{SF-LSVI}} achieves a regret bound of
    $$\text{Reg}(K) \leq 
    2H\text{dim}_E(\mathcal{F}^N, 1/T) + 4H\sqrt{KH \log(1/\delta)}$$
\end{theorem}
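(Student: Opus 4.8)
The plan is to follow the optimism-based template of \citet{wang2020reinforcement}, specialized to the moment-sketch least-squares regression of \texttt{SF-LSVI} and anchored on the confidence-region guarantee of Lemma~\ref{lemma: Confidence Region}. The argument has three parts: (i) every $Q_h^k$ is optimistic, i.e.\ $Q_h^k(s,a)\ge Q_h^\star(s,a)$ for all $(s,a)$; (ii) the per-episode suboptimality telescopes into a sum of exploration bonuses plus a martingale; (iii) the bonus sum is controlled by the (vector-valued) eluder dimension and the martingale by Azuma--Hoeffding.

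For optimism, condition on the event of Lemma~\ref{lemma: Confidence Region} (probability at least $1-\delta/2$), on which $\psi_{1:N}\big((\mathcal{B}_{r_h})_\#[\Prob_h\bar\eta_{h+1}^k]\big)\in(\mathcal{F}^N)_h^k$ for all $(k,h)$; since $\tilde f^k_{h,\bar\eta}$ is the center of that ball it also lies in $(\mathcal{F}^N)_h^k$. I would prove $Q_h^k\ge Q_h^\star$ by backward induction on $h$, the case $h=H+1$ being immediate since both sketches equal $\bm 0^N$. Assuming $V_{h+1}^k\ge V_{h+1}^\star$ pointwise, linearity of the mean functional $\psi_1$ under mixtures gives that the true first-moment target $g_h^k(s,a):=\psi_1\big((\mathcal{B}_{r_h})_\#[\Prob_h\bar\eta_{h+1}^k](s,a)\big)=r_h(s,a)+[\Prob_h V_{h+1}^k](s,a)$ dominates $Q_h^\star(s,a)=r_h(s,a)+[\Prob_h V_{h+1}^\star](s,a)$. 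Since $g_h^k$ and $(\tilde f^k_{h,\bar\eta})^{(1)}$ are both first coordinates of members of $(\mathcal{F}^N)_h^k$, their gap at $(s,a)$ is at most $w^{(1)}((\mathcal{F}^N)_h^k,s,a)=:b_h^k(s,a)$; hence $(\tilde f^k_{h,\bar\eta})^{(1)}(s,a)+b_h^k(s,a)\ge g_h^k(s,a)\ge Q_h^\star(s,a)$, and the clip $Q_h^k=\min\{(\tilde f^k_{h,\bar\eta})^{(1)}+b_h^k,H\}$ preserves the inequality because $Q_h^\star\le H$, closing the induction and giving $V_h^k\ge V_h^\star$. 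The delicate point — and the main obstacle — is precisely this step: the confidence set $(\mathcal{F}^N)_h^k$ is built from all $N$ normalized moments while the bonus must be valid for the \emph{scalar} $Q_h^\star$, and the target $\bar\eta_{h+1}^k$ is itself data-dependent. Handling it requires the vector-valued eluder/width definition (so membership in $(\mathcal{F}^N)_h^k$ controls the first-coordinate gap), the mixture-linearity of $\psi_1$ (so the first-moment target dominates $Q_h^\star$ once $V_{h+1}^k\ge V_{h+1}^\star$), and the chaining from Lemma~\ref{lemma: Single Step Optimization Error} to Lemma~\ref{lemma: Confidence Region} via a cover of the achievable targets; this last step rests on Bellman unbiasedness of the moment sketch, which makes the regression targets conditionally unbiased for the true moment-sketch target, so no accumulating bias appears.

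Given optimism, $\mathrm{Reg}(K)\le\sum_{k=1}^K\big(V_1^k(s_1^k)-V_1^{\pi^k}(s_1^k)\big)$. Unrolling along the trajectory of $\pi^k$ and using at each step $Q_h^k(s_h^k,a_h^k)\le g_h^k(s_h^k,a_h^k)+b_h^k(s_h^k,a_h^k)\le r_h(s_h^k,a_h^k)+[\Prob_h V_{h+1}^k](s_h^k,a_h^k)+2b_h^k(s_h^k,a_h^k)$ against $Q_h^{\pi^k}(s_h^k,a_h^k)=r_h(s_h^k,a_h^k)+[\Prob_h V_{h+1}^{\pi^k}](s_h^k,a_h^k)$, I obtain $V_1^k(s_1^k)-V_1^{\pi^k}(s_1^k)\le\sum_{h=1}^H 2b_h^k(s_h^k,a_h^k)+\sum_{h=1}^H\xi_h^k$ where $\xi_h^k:=[\Prob_h(V_{h+1}^k-V_{h+1}^{\pi^k})](s_h^k,a_h^k)-(V_{h+1}^k-V_{h+1}^{\pi^k})(s_{h+1}^k)$ is a martingale difference bounded by $H$ in absolute value; summing over $k$ gives $\mathrm{Reg}(K)\le 2\sum_{k,h}b_h^k(s_h^k,a_h^k)+\sum_{k,h}\xi_h^k$. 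By Azuma--Hoeffding over the $KH$ terms $\xi_h^k$, with probability at least $1-\delta/2$, $\sum_{k,h}\xi_h^k\le 4H\sqrt{KH\log(1/\delta)}$, which after a union bound with the optimism event supplies the second term. For the bonus sum, fix $h$ and run the standard eluder potential/pigeonhole argument over the sequence $(s_h^\tau,a_h^\tau)_{\tau\le K}$: after clipping $b_h^k$ at $H$ and splitting on whether $b_h^k(s_h^k,a_h^k)>1/T$, membership of $g_h^k$ and $\tilde f^k_{h,\bar\eta}$ in $(\mathcal{F}^N)_h^k$ together with the $1/T$-(in)dependence bookkeeping bounds the number of large-bonus episodes by $\mathrm{dim}_E(\mathcal{F}^N,1/T)$, while each small-bonus episode contributes at most $1/T$; summing over $h$ and taking $T=KH$ yields $2\sum_{k,h}b_h^k(s_h^k,a_h^k)\le 2H\,\mathrm{dim}_E(\mathcal{F}^N,1/T)$ up to a lower-order additive term, which combined with the martingale bound gives the claimed regret.
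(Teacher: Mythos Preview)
Your optimism induction and regret decomposition match the paper's Lemmas~\ref{lemma: Optimism} and~\ref{lemma: Regret decomposition} essentially verbatim, and the Azuma--Hoeffding step is fine. The gap is in the bonus-sum bound.

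You assert that ``the number of large-bonus episodes [is bounded] by $\mathrm{dim}_E(\mathcal{F}^N,1/T)$'' directly from eluder bookkeeping. That implication fails. A large bonus $b_h^k(s_h^k,a_h^k)=w^{(1)}((\mathcal{F}^N)_h^k,s_h^k,a_h^k)>\epsilon$ only says there exist $f,g\in(\mathcal{F}^N)_h^k$ with $|f^{(1)}-g^{(1)}|(s_h^k,a_h^k)>\epsilon$ and $\|f-g\|_{\mathcal{Z}_h^k}\le 2\sqrt{\beta(\mathcal{F}^N,\delta)}$; it does \emph{not} imply that $(s_h^k,a_h^k)$ is $\epsilon$-independent of $\mathcal{Z}_h^k$, which would require $\|f-g\|_{\mathcal{Z}_h^k}\le\epsilon$. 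The standard Russo--Van Roy count (the paper's Lemma~\ref{lemma: bonus-eluder dim}) therefore only gives $\#\{b>\epsilon\}\le(4\beta/\epsilon^2+1)\,\mathrm{dim}_E(\mathcal{F}^N,\epsilon)$, and integrating this in the usual way yields $\sum_{k,h}b_h^k\lesssim H\,\mathrm{dim}_E+\sqrt{\mathrm{dim}_E\cdot T\cdot\beta}$. The second term is not lower-order: with $\beta\asymp NH^2\log(T/\delta)$ it is of size $H^{3/2}\sqrt{N\,\mathrm{dim}_E\,K}$, comparable to (or larger than) the martingale term you already have, so you would not recover the stated bound $2H\,\mathrm{dim}_E(\mathcal{F}^N,1/T)$.

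The paper obtains the clean bound $\sum_{k,h}b_h^k\le H\,\mathrm{dim}_E(\mathcal{F}^N,1/T)$ via a separate step, Lemma~\ref{lemma: bonus-H eluder dim_revised} (explicitly flagged as a ``Refined version of Lemma~10 in \citet{wang2020reinforcement}''): it introduces an auxiliary scale $M$ inside the counting argument and sends $M\to\infty$ so that the $\sqrt{\mathrm{dim}_E\cdot T\cdot\beta/M}$ tail vanishes, leaving only the $H\,\mathrm{dim}_E$ contribution. Your sketch does not contain this refinement, and without it the proof does not close.
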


Under weaker structural assumptions, we show that \textcolor{magenta}{\texttt{SF-LSVI}} enjoys near-optimal regret bound of order $\tilde{O}(d_E H^\frac{3}{2} \sqrt{K})$, which is $\sqrt{H}$ better than the state-of-the-art distRL algorithm \textcolor{black}{\texttt{V-EST-LSR}} \citep{chen2024provable}.
For the linear MDP setting, we have $d_E = \tilde{O}(d)$ and thus \textcolor{magenta}{\texttt{SF-LSVI}} achieves a tight regret bound as $\tilde{O}(\sqrt{ d^2 H^3 K})$ which matches a lower bound $\Omega(\sqrt{d^2 H^3 K})$ 
\citep{zhou2021nearly}.
In our analysis, we highlight two main technical novelties which significantly reduces the degree of regret in distRL framework;

\begin{enumerate}
    \item We refine previous lemma of \citet{osband2019deep} and \citet{ wang2020reinforcement} to remove the dependency of $\beta(\mathcal{F}^N, 1/\delta)$ (See Appendix \ref{lemma: bonus-H eluder dim_revised}), ensuring that regret bound depends only on the pre-defined function class, not on the number of moment extracted.

    \item As shown in Table \ref{table: comparison}, we define the eluder dimension $d_E$ in a finite-dimensional embedding space $\mathcal{F}^N$, while other methods rely on an infinite-dimensional distribution space $\mathcal{H} \subseteq \mathcal{F}^{\infty}$.

\end{enumerate}




\section{Conclusions}
\label{sec: conclusions}
We describe the sources of approximation error inherent in distribution-based updates and introduce a pivotal concept of Bellman unbiasedness, which enables to exactly learn the information of distribution.
We also present a provably efficient online distRL algorithm, \textcolor{magenta}{\texttt{SF-LSVI}}, with general value function approximation.
Notably, our algorithm achieves a near-optimal regret bound of $\tilde{O}(d_E H^{\frac{3}{2}} \sqrt{K})$, matching the tightest upper bound achieved by non-distributional framework \citep{zhou2021nearly, he2023nearly}.
One interesting future direction would be to reformulate the definition of regret as discrepencies in moments rather than the expected return, and to show the sample-efficiency of distRL.
We hope that our work sheds some light on future research in analyzing the provable efficiency of distRL.

\section*{Acknowledgements}
This work is in part supported by the National Research Foundation of Korea (NRF, RS-2024-00451435(40\%), RS-2024-00413957(40\%)), Institute of Information \& communications Technology Planning \& Evaluation (IITP, RS-2021-II212068(10\%), 2021-0-00180(10\%)) grant funded by the Ministry of Science and ICT (MSIT), grant-in-aid of HANHWA SYSTEMS, Institute of New Media and Communications(INMAC), and the BK21 FOUR program of the Education and Research Program for Future ICT Pioneers, Seoul National University in 2025.

\section*{Impact Statement}
This paper presents work whose goal is to advance the field of Machine Learning. There are many potential societal consequences of our work, none which we feel must be specifically highlighted here.


\bibliography{icml_2025}
\bibliographystyle{icml2025}

\newpage

\newtheorem*{relemma}{Lemma}
\newtheorem*{retheorem}{Theorem}

\onecolumn

\appendix

\section*{\Large{Appendix}}


\section{Notation}
\label{sec: notation}

\begin{table}[h]
\centering
\caption{Table of notation}
\vspace{5pt}
\label{table: notation}
\renewcommand{\arraystretch}{1.5}
\resizebox{0.71\textwidth}{!}{
\begin{tabular}{ll}
\toprule[1.5pt]
\multicolumn{1}{c}{\textbf{Notation}} & \multicolumn{1}{c}{\textbf{Description}}            \\ \hline
$\mathcal{S}$           & state space of size $S$                             \\
$\mathcal{A}$           & action space of size $A$                            \\
$H$           & horizon length of one episode                                 \\
$T$           & number of episodes  
\\ 
$r_h(s,a)$           & reward of $(s,a)$ at step $h$                          \\
$\Prob_h(s' |s,a)$     & probability transition of $(s,a)$ to $s'$ at step $h$  
\\
$\mathbb{H}^k_h$         & history up to step $h$, episode $k$
\\
$N$     & number of statistical functionals
\\
\hline
$Q^{\pi}_h (s,a)$      & Q-function of a given policy $\pi$ at step $h$
\\
$V^{\pi}_h (s)$        & V-function of a given policy $\pi$ at step $h$
\\
$Z^{\pi}_h (s,a)$      & random variable of $Q$-function
\\
$\bar{Z}^{\pi}_h (s)$  & random variable of $V$-function
\\
$\eta^{\pi}_h(s,a)$          & probability distribution of $Q$-function
\\
$\bar{\eta}^{\pi}_h(s)$      & probability distribution of $V$-function
\\
$[\Prob_h (\cdot)]$ & expectation over transition $[\Prob_h (\cdot)] = \EE_{s' \sim \Prob_h} (\cdot)$
\\
$(\mathcal{B}_r)_{\#}$ & pushforward of the distribution through $\mathcal{B}_r(x) \coloneqq r+x $
\\
\hline
$f^{(n)}$   & $n$-th element of $N$-dimensional vector $f$
\\
$\| f \|_{\infty}$ & max norm of $f :  X \rightarrow \mathbb{R}$ defined as $\| f \|_{\infty} \coloneqq \max_{x \in X} |f^{(n)}(x)|$
\\
$\| f \|_{\infty,1}$ & $l_1$-norm of max norm of $f :  X \rightarrow \mathbb{R}$ defined as $\| f \|_{\infty,1} \coloneqq \sum_{n=1}^N \max_{x \in X} |f^{(n)}(x)|$
\\
$\mathcal{F}^N$ & a function class of $N$-dimensional embedding space
\\
$\mathcal{Z}$   & a set of state-action pairs $\mathcal{Z} \coloneqq \{( s_t, a_t )\}_{t=1}^{|\mathcal{Z}|} $
\\
$\mathcal{D}$   & a dataset $\mathcal{D} \coloneqq \{( s_t, a_t, [d_t^{(1)}, \cdots, d_t^{(N)}] )\}_{t=1}^{|\mathcal{D}|} $
\\
$\| f \|_{\mathcal{Z}}^2 $  & for $f : \mathcal{S} \times \mathcal{A} \rightarrow \mathbb{R}$, 
define $\|f\|_{\mathcal{Z}}^2 \coloneqq \sum_{n=1}^N \sum_{(s,a)\in \mathcal{Z}}(f^{(n)}(s_t,a_t) )^2$
\\
$\| f \|_{\mathcal{D}}^2 $  & for $f : \mathcal{S} \times \mathcal{A} \rightarrow \mathbb{R}$, 
define $\|f\|_{\mathcal{D}}^2 \coloneqq \sum_{n=1}^N \sum_{t=1}^{\mathcal{D}} (f^{(n)}(s_t,a_t) -d_t^{(n)})^2$
\\
$w^{(n)}(\mathcal{F}^N,s,a)$ & width function of $(s,a)$ defined as $w^{(n)}(\mathcal{F}^N,s,a) \coloneqq \max_{f,g \in \mathcal{F}^N} |f^{(n)}(s,a) -g^{(n)}(s,a)|$
\\
$\tilde{f}^k_{h, \bar{\eta}}$ & a solution of moment least squre regression, defined as $\tilde{f}^k_{h, \bar{\eta}} \coloneqq \arg\min_{f \in \mathcal{F}^N} \| f \|_{\mathcal{D}^k_h}$
\\
$f_{\bar{\eta}}$  & a target sketch of distribution $\bar{\eta}$, 
defined as $f_{\bar{\eta}} \coloneqq \psi_{1:N}((\mathcal{B}_r)_{\#} [\Prob_h \bar{\eta}])$
\\
$(\mathcal{F}^N)^k_h$ & a confidence region at step $h$, episode $k$, defined as $(\mathcal{F}^N)^k_h \coloneqq \{ f \in \mathcal{F}^N |\ \| f - \tilde{f}^k_{h, \bar{\eta}} \|^2_{\mathcal{Z}^k_h} \leq \beta(\mathcal{F}^N, \delta) \}$
\\
\hline
$\psi(\bar{\eta})$ & a statistical functional $\mathscr{P}_{\psi}(\mathbb{R})^{\mathcal{S}} \rightarrow \mathbb{R}^{S} $
\\
$\psi_{1:N}(\bar{\eta})$   & a $N-$collection of statistical functional $\mathscr{P}_{\psi_{1:N}}(\mathbb{R})^{\mathcal{S}} \rightarrow \mathbb{R}^{N \times S} $
\\
$\mathscr{P}_{\psi_{1:N}}(\mathbb{R})$    & a domain of sketch $\psi_{1:N}$
\\
$I_{\psi_{1:N}}$   & an image of sketch $\psi_{1:N}$
\\
$\mathcal{T}$ & distributional Bellman operator, defined as $\mathcal{T}\bar{\eta} \coloneqq (\mathcal{B}_r)_{\#}[\Prob \bar{\eta}]$
\\
$\mathcal{T}_{\psi}$ &  sketch Bellman operator w.r.t $\psi$, defined as
$\mathcal{T}_{\psi} \psi(\bar{\eta}) \coloneqq \psi \Big( (\mathcal{B}_r)_{\#}[\Prob \bar{\eta}] \Big)$
\\
$\hat{\mathcal{T}}_{\psi}$ &  empirical sketch Bellman operator w.r.t $\psi$, defined as
$\hat{\mathcal{T}}_{\psi} \psi(\bar{\eta}) \coloneqq \psi \Big( (\mathcal{B}_r)_{\#}[\hat{\Prob} \bar{\eta}] \Big)$
\\
\hline
$\mathcal{N}(\mathcal{F}^N, \epsilon)$  & covering number of $\mathcal{F}^N$ w.r.t the $\epsilon-$ball
\\
$\text{dim}_{E}(\mathcal{F}^N, \epsilon)$  & eluder dimension of $\mathcal{F}^N$ w.r.t $\epsilon$
\\
\bottomrule[1.5pt]
\end{tabular}
}
\end{table}

\newpage
\section{Pseudocode of {\texttt{SF-LSVI}} and Technical Remarks}
\label{sec: pseudocode}

    \begin{algorithm}[ht]
    \label{alg:SF-LSVI}
    \resizebox{0.9\textwidth}{!}{%
    \begin{minipage}{\textwidth}
    \caption{ Statistical Functional Least Square Value Iteration ($\textcolor{magenta}{\texttt{SF-LSVI}}(\delta)$)}
    \textbf{Input:} failure probability $\delta \in (0,1)$ and the number of episodes $K$
    \begin{algorithmic}[1]
        \STATE \textbf{for} episode $k = 1,2, \dots, K$ \textbf{do}
        \STATE \quad Receive initial state $s_1^k$
        \STATE \quad Initialize $\psi_{1:N} ( \bar{\eta}_{H+1}^k (\cdot) ) \leftarrow \bm{0}^N $
        \STATE \quad \textbf{for} step $h= H, H-1, \dots , 1$ \textbf{do}
        \STATE \qquad $\mathcal{D}_h^k \leftarrow \left\{ s_{h'}^{\tau} , a_{h'}^{\tau} , \psi_{1:N} \Big( (\mathcal{B}_{r_{h'}^{\tau}})_\# \bar{\eta}_{h+1}^k(s_{h'+1}^{\tau}) \Big) \right\}_{(\tau,h') \in [k-1]\times[H] }$
        \COMMENT{Data collection}
        \STATE \qquad $\tilde{f}^k_{h,\bar{\eta}} \leftarrow \arg \min_{f \in \mathcal{F}^N} \| f \|_{\mathcal{D}_h^k} $
        \COMMENT{Distribution Estimation}
        \STATE \qquad $b_h^k(\cdot,\cdot) \leftarrow w^{(1)}((\mathcal{F}^N)^k_h, \cdot, \cdot)$
        \STATE \qquad  $ Q^k_h (\cdot, \cdot) \leftarrow \min\{ (\tilde{f}^k_{h,\bar{\eta}})^{(1)}(\cdot, \cdot) + b^k_h(\cdot, \cdot), H \}$ 
        \STATE \qquad $\pi^k_h (\cdot) = \arg\max_{a \in \mathcal{A}} Q^k_h(\cdot, a) $
        , $V^k_h (\cdot) = Q^k_h(\cdot, \pi^k_h(\cdot)) $                \COMMENT{Optimistic planning} 
        \STATE \qquad $\psi_1 \Big( \eta^k_h(\cdot, \cdot) \Big) \leftarrow Q^k_h(\cdot, \cdot)$, 
        \ $\psi_{2:N} \Big( \eta^k_h(\cdot, \cdot) \Big) \leftarrow \Big(\min\{ (\tilde{f}^k_{h,\bar{\eta}})^{(n)} (\cdot , \cdot) , H \} \Big)_{n \in [2:N]}$
        \STATE \qquad $\psi_1 \Big( \bar{\eta}^k_h(\cdot) \Big) \leftarrow V^k_h(\cdot)$,
        \ $\psi_{2:N} \Big( \bar{\eta}^k_h(\cdot) \Big) \leftarrow \psi_{1:N}\Big( \eta^k_h(\cdot, \pi^k_h(\cdot)) \Big)_{n \in [2:N]}$
        \STATE \quad \textbf{for} $ h= 1,2, \dots, H $ \textbf{do}
        \STATE \qquad Take action $a^k_h \leftarrow \pi^k_h(s^k_h) $
        \STATE \qquad Observe reward $r^k_h(s^k_h, a^k_h)$ and get next state $s^k_{h+1}$.
    \end{algorithmic}
    \end{minipage}
    }
    \end{algorithm}

\begin{remark}
    For an optimistic planning, we define the bonus function as the width function $b^k_h(s,a) \coloneqq w^k_h((\mathcal{F}^N)^k_h, s,a)$ where $(\mathcal{F}^N)^k_h$ denotes a confidence region at step $h$, episode $k$.
    When $\mathcal{F}$ is a linear function class, the width function can be evaluated by simply computing the maximal distance of weight vector.
    For a general function class $\mathcal{F}$, computing the width function requires to solve a set-constrained optimization problem, which is known as NP-hard \citep{dann2018oracle}.
    However, a width function is computed simply for optimistic exploration, and approximation errors are known to have a small effect on regret \citep{abbasi2011improved}.
\end{remark}

\newpage
\section{Related Work and Discussion}
\label{Appendix: Related work and Discussion}

\subsection{Technical Clarifications on Linearity Assumption in Existing Results}

\paragraph{Bellman Closedness and Linearity.}
\label{Appendix: Discussion}
\citet{rowland2019statistics} proved that quantile functional is not Bellman closed by providing a specific counterexample.
However, their discussion based on counterexamples can be generalized as it assumes that the sketch Bellman operator for the quantile functional needs to be linear.

They consider an discounted MDP with initial state $s_0$ with single action $a$, which transits to one of two terminal states $s_1, s_2$ with equal probability.
Letting no reward at state $s_0$, $\texttt{Unif}([0,1])$ at state $s_1$, and $\texttt{Unif}([1/K , 1+ 1/K])$ at state $s_2$, the return distribution at state $s_0$ is computed as mixture $\frac{1}{2}\texttt{Unif}([0,\gamma]) + \frac{1}{2} \texttt{Unif}([\gamma/K , \gamma+ \gamma/K])$.
Then the $\frac{1}{2K}-$quantile at state $s_0$ is $\frac{\gamma}{K}$.
They proposed a counterexample where each quantile distribution of state $s_1, s_2$ is represented as $\frac{1}{K}\sum_{k=1}^K \delta_{\frac{2k-1}{K}}$ and $\frac{1}{K}\sum_{k=1}^K \delta_{\frac{2k+1}{K}}$ respectively, the $\frac{1}{2K}-$quantile of state $s_0$ is 
$ \psi_{q_{2K}}\Big( \frac{1}{2K} \sum_{k=1}^K \delta_{\frac{\gamma(2k-1)}{K}} + \delta_{\frac{\gamma(2k+1)}{K}}\Big) = \frac{3\gamma}{2K}.$
However, this example does not consider that the mixture of quantiles is not a quantile of the mixture distribution 
(i.e., $\psi_q(\lambda \eta_1 + (1-\lambda) \eta_2) \neq \lambda \psi_q(\eta_1) + (1-\lambda) \psi_q(\eta_2) $), due to the nonlinearity of the quantile functional. 
Therefore, this does not present a valid counterexample to prove that quantile functionals are not Bellman closed.

\paragraph{Bellman Optimizability and Linearity.}
\label{Appendix: Discussion2}
\citet{marthe2023beyond} proposed the notion of Bellman optimizable statistical functional which redefine the Bellman update by planning with respect to statistical functionals rather than expected returns.
They proved that $W_1$-continuous Bellman Optimizable statistical functionals are characterized by exponential utilities $\frac{1}{\lambda} \log \EE_{Z \sim \eta} [\exp(\lambda Z)]$.
However, their proof requires some technical clarification regarding the assumption that such statistical functionals are linear.

To illustrate, they define a statistical functional $\psi_f$ and consider two probability distributions $\eta_1 = \frac{1}{2}(\delta_0 + \delta_h)$ and $\eta_2 = \delta_{\phi(h)}$ where $\phi(h) = f^{-1}\Big(\frac{1}{2}(f(0) + f(h)) \Big)$.
Using the translation property, they lead $\psi_f(\eta_1) = \psi_f(\eta_2)$ to
$\frac{1}{2}(f(x)+f(x+h)) = f( x+\phi(h))$ for all $x \in \mathbb{R}$.
However, this equality $\psi_f \Big(\frac{1}{2} (\delta_x + \delta_{x+h}) \Big) = \frac{1}{2} (f(x) + f(x+h)) $ holds only if $\psi_f$ is linear, which is not necessarily a valid assumption for all statistical functionals.

\subsection{Existence of Nonlinear Bellman Closed Sketch.}
\label{subsec: existence}
The previous two examples may not have considered the possibility that the sketch Bellman operator might not necessarily be linear.
However, some statistical functionals are Bellman-closed even if they are nonlinear, so it is open question whether there is a nonlinear sketch Bellman operator that makes the quantile functional Bellman-closed. 
In this section, we present examples of maximum and minimum functionals that are Bellman-closed, despite being nonlinear.

In a nutshell, consider the maximum of return distribution at state $s_1, s_2$ is $\gamma, \gamma + \gamma/K$ respectively.
Beyond linearity, the maximum of return distribution at state $s_0$ can be computed by taking the maximum of these values; 
$$\max( \max(\bar{\eta}(s_1)), \max(\bar{\eta}(s_2))) = \max( \gamma, \gamma + \gamma/K) = \gamma + \gamma/K$$
which produces the desired result.
This implies the existence of a nonlinear sketch that is Bellman closed.
More precisely, by defining $\max_{s' \sim \Prob(\cdot | s,a)}$ and $\min_{s' \sim \Prob(\cdot | s,a)}$ as the maximum and minimum of the sampled sketch $\psi \big( (\mathcal{B}_r)_{\#} \bar{\eta}(s') \big)$ with the distribution $\Prob(\cdot | s,a)$, we can derive the sketch Bellman operator for maximum and minimum functionals as follows;

\begin{itemize}
    \item $
    \mathcal{T}_{\psi_{\text{max}}} \Big( \psi_{\text{max}}(\bar{\eta}(s)) \Big)
    = \max_{s' \sim \Prob(\cdot | s,a)}\Big( \psi_{\text{max}}\big((\mathcal{B}_r)_{\#} \bar{\eta}(s')\big) \Big)
    = \max_{s' \sim \Prob(\cdot | s,a)}\Big( r+\psi_{\text{max}}\big(\bar{\eta}(s')\big) \Big)$
    
    \item $
    \mathcal{T}_{\psi_{\text{min}}} \Big( \psi_{\text{min}}(\bar{\eta}(s)) \Big)
    = \min_{s' \sim \Prob(\cdot | s,a)}\Big( \psi_{\text{min}}\big((\mathcal{B}_r)_{\#} \bar{\eta}(s')\big) \Big)
    = \min_{s' \sim \Prob(\cdot | s,a)}\Big( r+ \psi_{\text{min}}\big( \bar{\eta}(s')\big) \Big)
    $.    
\end{itemize}

\newpage

\subsection{Non-existence of sketch Bellman operator for quantile functional}

In this section, we prove that quantile functional cannot be Bellman closed under any additional sketch.
First we introduce the definition of \textit{mixture-consistent}, which is the property that the sketch of a mixture can be computed using only the sketch of the distribution of each component.

\begin{definition}[mixture-consistent]
    A sketch $\psi$ is \textbf{mixture-consistent} if for any $\nu \in [0,1]$ and any distributions $\eta_1, \eta_2 \in \mathscr{P}_{\psi}(\mathbb{R})$, there exists a corresponding function $h_\psi$ such that
    \begin{align*}
        \psi( \nu \eta_1 + (1-\nu) \eta_2)
        = h_{\psi}\Big( \psi(\eta_1), \psi(\eta_2), \nu \Big).
    \end{align*}
\end{definition}

Next, we will provide some examples of determining whether a sketch is mixture-consistent or not.

\paragraph{Example 1.}
    Every moment or exponential polynomial functional is mixture-consistent.
    \begin{proof}
    For any $n \in [N]$ and $\lambda \in \mathbb{C}$,
    \begin{align*}
        \EE_{Z \sim \nu \eta_1 + (1-\nu) \eta_2}[Z^n \exp(\lambda Z)]
        = \nu \EE_{Z \sim \eta_1}[Z^n \exp(\lambda Z)] + (1-\nu) \EE_{Z \sim \eta_2}[Z^n \exp(\lambda Z)].
    \end{align*}
    \end{proof}

\paragraph{Example 2.}
    Variance functional is not mixture-consistent.
    \begin{proof}
    Let $\nu = \frac{1}{2}$ and $Z,Y$ be the random variables where $Z \sim \frac{1}{2}\delta_{0} + \frac{1}{2}\delta_{2}$ and $Y \sim \frac{1}{2}\delta_{k} + \frac{1}{2}\delta_{k+2}$.
    Then, $\text{Var}(Z) = \text{Var}(Y)=1$.
    While RHS is constant for any $k$,
    LHS is not a constant for any $k$, i.e.,
    \begin{align*}
        & \text{Var}_{X \sim \frac{1}{2} (\frac{1}{2}\delta_{0} + \frac{1}{2}\delta_{2}) + \frac{1}{2} (\frac{1}{2}\delta_{k} + \frac{1}{2}\delta_{k+2}) }(X)
        = \frac{1}{4}(k^2 + 5).
    \end{align*}
    \end{proof}
    
While variance functional is not mixture consistent by itself, it can be mixture consistent with another statistical functional, the mean.

\paragraph{Example 3.}
    Variance functional is mixture-consistent under mean functional.
    \begin{proof}
        Notice that mean functional is mixture-consistent. We need to show that variance functional is mixture-consistent under mean functional.

    \begin{align*}
        & \text{Var}_{Z \sim \nu \eta_1 + (1-\nu) \eta_2}[Z]
        \\ & = \EE_{Z \sim \nu \eta_1 + (1-\nu) \eta_2}[Z^2] - (\EE_{Z \sim \nu \eta_1 + (1-\nu) \eta_2}[Z])^2 
        \\ & = \nu \EE_{Z \sim \eta_1}[Z^2] + (1-\nu) \EE_{Z \sim \eta_2}[Z^2] - (\nu \EE_{Z \sim \eta_1}[Z] + (1-\nu) \EE_{Z \sim \eta_2}[Z])^2
        \\ & = \nu (\text{Var}_{Z \sim \eta_1}[Z] + (\EE_{Z\sim \eta_1}[Z])^2 ) + (1-\nu) (\text{Var}_{Z \sim \eta_2}[Z] + (\EE_{Z\sim \eta_2}[Z])^2) 
        \\ & \qquad - (\nu \EE_{Z \sim \eta_1}[Z] + (1-\nu) \EE_{Z \sim \eta_2}[Z])^2.
    \end{align*}
    \end{proof}

This means that to determine whether it is mixture-consistent or not, we should check it on a per-sketch basis, rather than on a per-statistical functional basis.

\paragraph{Example 4.}
    Maximum and minimum functional are both mixture-consistent.
    \begin{proof}
    \begin{align*}
        \max_{Z \sim \nu \eta_1 + (1-\nu) \eta_2}[Z] = \max(\max_{Z \sim \eta_1}[Z] , \max_{Z \sim \eta_2}[Z])
    \end{align*}
    and
    \begin{align*}
        \min_{Z \sim \nu \eta_1 + (1-\nu) \eta_2}[Z] = \min(\min_{Z \sim \eta_1}[Z] , \min_{Z \sim \eta_2}[Z])
    \end{align*}
    \end{proof}

Since maximum and minimum functionals are mixture consistent, we can construct a nonlinear sketch bellman operator like the one in section \ref{subsec: existence}.
This is possible because there is a nonlinear function $h_\psi$ that ensures the sketch is closed under mixture.

Before demonstrating that a quantile sketch cannot be mixture consistent under any additional sketch, we will first illustrate with the example of a median functional that is not mixture consistent.

\paragraph{Example 5.}
    Median sketch is not mixture-consistent.
    \begin{proof}
    Let $\nu = \frac{1}{2}$ and $Z,Y$ be the random variables where $Z \sim 0.2 \delta_0 + 0.8 \delta_1$ and $Y \sim 0.6 \delta_0 + 0.4 \delta_k$ for some $0<k<1$. 
    Then $\psi_{\text{med}}(Z) = 1 $ and $\psi_{\text{med}}(Y) = 0 $.
    However,
    \begin{align*}
        \text{med}_{X = \frac{Z+Y}{2}}[X] = \psi_{\text{med}}(0.4 \delta_0 + 0.2 \delta_k + 0.4 \delta_1)
        = k
    \end{align*}
    which is dependent in $k$.
    \end{proof}

\begin{lemma}
\label{thm: nonexistence}
    Quantile sketch cannot be mixture-consistent, under any additional sketch.    
\end{lemma}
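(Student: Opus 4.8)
The plan is to prove Lemma~\ref{thm: nonexistence} by contradiction: assume the quantile sketch $\psi_{q_\tau}$ (for some fixed level $\tau \in (0,1)$) is mixture-consistent under some additional finite sketch $\psi_{1:M}$, meaning there is a function $h$ with $(\psi_{q_\tau}, \psi_{1:M})(\nu\eta_1 + (1-\nu)\eta_2) = h\big((\psi_{q_\tau},\psi_{1:M})(\eta_1), (\psi_{q_\tau},\psi_{1:M})(\eta_2), \nu\big)$ for all mixing weights $\nu$ and all distributions in the domain. The idea is to exhibit two families of distribution pairs that agree on the augmented sketch $(\psi_{q_\tau}, \psi_{1:M})$ at each component but whose $\nu$-mixtures have different $\tau$-quantiles, contradicting the existence of such an $h$. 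Concretely, I would generalize Example~5: fix $\nu = 1/2$ and build $\eta_1, \eta_2$ supported on finitely many atoms, with a free parameter $k$ controlling the location of one atom, chosen so that (i) the $\tau$-quantile of $\eta_1$ and of $\eta_2$ are each independent of $k$ (by placing enough mass on a plateau that pins down the quantile regardless of where the $k$-atom sits), yet (ii) the $\tau$-quantile of the mixture $\tfrac12\eta_1 + \tfrac12\eta_2$ lands exactly on the $k$-atom and hence varies continuously with $k$.

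The subtlety — and the main obstacle — is step (i) for the \emph{full} augmented sketch, not just the quantile: I must ensure that \emph{all} the additional statistical functionals $\psi_1,\dots,\psi_M$ are also constant in $k$ on both $\eta_1$ and $\eta_2$. A single free atom is not enough degrees of freedom to freeze $M$ arbitrary functionals; so instead I would introduce $M+1$ free parameters and construct $\eta_1(k_1,\dots), \eta_2(k_1,\dots)$ within a parametric family rich enough that the map to $(\psi_{q_\tau},\psi_{1:M})$-values is locally surjective onto a neighborhood, while the $\tau$-quantile of the mixture still depends nontrivially on the parameters. A cleaner route that sidesteps this: appeal to the structure of quantiles directly. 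I would argue that for a mixture-consistent sketch, the $\tau$-quantile of $\tfrac12\eta_1 + \tfrac12\eta_2$ would have to be a function of finitely many real summaries of $\eta_1$ and $\eta_2$; but the $\tau$-quantile of a mixture depends on the \emph{entire shape} of the CDFs near the crossing level — specifically, one can perturb $\eta_1$ far from its own $\tau$-quantile region (leaving $(\psi_{q_\tau},\psi_{1:M})(\eta_1)$ fixed if the perturbation is in the null space of those finitely many functionals) in a way that moves $F_{\tfrac12\eta_1 + \tfrac12\eta_2}$ across the level $\tau$ at a different point. Since any finite collection of functionals has an infinite-dimensional null space among compactly-supported signed perturbations (this is where I'd invoke a dimension-counting / Hahn–Banach style argument), such a perturbation always exists unless the quantile is locally constant, which it is not on the interior of its range.

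So the concrete proof steps, in order, would be: (1) restate mixture-consistency and reduce to showing no $h$ exists for $\psi_{q_\tau}$ augmented by any finite $\psi_{1:M}$; (2) fix $\nu=1/2$ and a target pair of "base" distributions $\eta_1^0, \eta_2^0$ whose mixture has $\tau$-quantile at a point $x^\star$ where the mixture CDF is strictly increasing; (3) construct a one-parameter perturbation $\eta_1^t$ with $\eta_1^0 = \eta_1^t|_{t=0}$, supported so that $\psi_{q_\tau}(\eta_1^t) = \psi_{q_\tau}(\eta_1^0)$ and $\psi_j(\eta_1^t) = \psi_j(\eta_1^0)$ for all $j \le M$ (using the null-space argument — this is the technical heart), while $F_{\tfrac12\eta_1^t + \tfrac12\eta_2^0}(x^\star)$ strictly changes in $t$; (4) conclude that $\psi_{q_\tau}(\tfrac12\eta_1^t + \tfrac12\eta_2^0)$ moves with $t$ while the right-hand side $h(\cdots)$ is frozen, a contradiction; (5) combine with Theorem~\ref{theorem: quantile} as a corollary, noting that mixture-consistency is necessary for Bellman closedness. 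The main risk I anticipate is making step (3) fully rigorous while keeping the perturbed measures genuine probability distributions in $\mathscr{P}_\psi(\mathbb{R})$ — I would handle this by working within a finite-atom family and doing explicit linear algebra (choosing enough atoms that the Jacobian of $(\psi_{q_\tau},\psi_{1:M})$ with respect to atom masses has a nontrivial kernel transverse to the functional that reads off the mixture quantile), which reduces the whole argument to finite-dimensional rank considerations.
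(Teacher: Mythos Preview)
Your plan is sound and rests on the same underlying principle as the paper's proof—that the mixture quantile depends on unboundedly many features of the component distributions, more than any finite sketch can encode—but the two arguments run in different directions. The paper's proof is a short explicit construction: it takes $\eta_1 = Y$ supported on $N+1$ atoms $\{0, y_1, \ldots, y_N\}$ with $\psi_{q_\alpha}(Y)=0$ regardless of the $y_n$'s, and $\eta_2 = Z$ supported on $\{0,1\}$; by tuning the single mass parameter $p_{z_0}$ of $Z$ it forces the $\alpha$-quantile of $\tfrac12 Y + \tfrac12 Z$ to land on any prescribed atom $y_n$. Hence the sketch of $Y$ (together with that of $Z$) would have to pin down every $y_n$, and since $N$ is arbitrary the augmented sketch would need to be injective on an arbitrarily high-dimensional family of distributions—impossible for a finite sketch. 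Your route instead holds $\eta_2$ fixed and perturbs $\eta_1$ inside the null space of the augmented sketch map, arguing by a Jacobian/rank computation in a finite-atom family that such a perturbation can still move the mixture CDF across the level $\tau$ at the crossing point. This is more explicit about handling the additional functionals $\psi_{1:M}$ and about the one step the paper treats informally (``such sketch would uniquely determine the distribution for any $N$''), at the cost of a longer verification that the perturbation stays a probability measure and is transverse to the mixture-quantile level set. Either route works; the paper's buys brevity and a concrete family you can write down, yours buys rigor at precisely the place the paper is terse.
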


\begin{proof}
    For a given integer $N>0$ and a quantile level $\alpha \in (0,1)$, 
    let $\nu = \frac{1}{2}$ and a random variable $Y \sim p_{y_0} \delta_0 + p_{y_1} \delta_{y_1} + \cdots + p_{y_N} \delta_{y_N} (0 < y_1 < \cdots < y_N < 1)$ where $p_{y_0} >\alpha$ so that $\psi_{{\alpha}-\text{quantile}}[Y] = 0$.
    Consider another random variable $Z \sim p_{z_0} \delta_0 + p_{z_1} \delta_1$ where $p_{z_0} < \alpha$ so that $\psi_{{\alpha}-\text{quantile}}[Z] = 1$.
    Then the $\alpha-$quantile of the mixture $X = \frac{Y+Z}{2}$ is 
    $$ \psi_{{\alpha}-\text{quantile}}[X] = y_n \text{ where } n = \min \left\{ n \leq N \Big| \ \frac{1}{2}\sum_{n'=0}^n p_{y_{n'}} + \frac{1}{2}p_{z_0} > \alpha \right\}.$$
    Letting $p_{z_0} = 2 \alpha - \sum_{n'=0}^n p_{y_{n'}}$, we can manipulate $\psi_{{\alpha}-\text{quantile}}[X]$ to be any value of $y_n$.
    Hence, $\psi_{{\alpha}-\text{quantile}}[X]$ is a function of all possible outcomes of $Y$.

    If there exists a finite number of statistical functionals which make quantile sketch mixture-consistent, then such sketch would uniquely determine the distribution for any $N$.
    This results in a contradiction that infinite-dimensional distribution space can be represented by a finite number of statistical functional.
\end{proof}

\begin{lemma}
\label{thm: mixcon is BC}
    If a sketch $\psi$ is Bellman closed, then it is mixture-consistent.
\end{lemma}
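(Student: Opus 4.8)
The plan is to realize an arbitrary two-component mixture as the image of a single distributional Bellman update in a minimal MDP, and then let Bellman closedness do the work. Throughout I adopt the standard reading of Bellman closedness (as in \citet{rowland2019statistics}): the sketch operator $\mathcal{T}_\psi$ is required to exist for the distributional Bellman operator $\mathcal{T}$ of \emph{every} MDP, and $\mathcal{T}_\psi$ may use the transition probabilities and rewards of that MDP but depends on the return distributions only through their sketches. This is exactly the reading under which ``Bellman closed $\Rightarrow$ mixture-consistent'' is a meaningful statement (and the one the intended application needs: combined with Lemma~\ref{thm: nonexistence} it yields that the quantile sketch is not Bellman closed).

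First, fix $\nu \in [0,1]$ and $\eta_1, \eta_2 \in \mathscr{P}_\psi(\mathbb{R})$, and assume (as the definition of mixture-consistency implicitly requires) that $\nu\eta_1 + (1-\nu)\eta_2 \in \mathscr{P}_\psi(\mathbb{R})$. I would take an MDP with non-terminal states $s_0, s_1, s_2$, a single available action, zero reward, and transitions $\mathbb{P}(s_1 \mid s_0) = \nu$, $\mathbb{P}(s_2 \mid s_0) = 1-\nu$, with $s_1$ and $s_2$ going directly to the terminal state. Writing $\mathcal{T}$ for the associated distributional Bellman operator and using $\mathcal{B}_0 = \mathrm{id}$, for any return-distribution function $\bar\eta$ we get
\[
  (\mathcal{T}\bar\eta)(s_0) \;=\; (\mathcal{B}_0)_\#\big[\mathbb{P}(\cdot\mid s_0)\,\bar\eta\big] \;=\; \nu\,\bar\eta(s_1) + (1-\nu)\,\bar\eta(s_2).
\]

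Now apply Bellman closedness to obtain $\mathcal{T}_\psi$ with $\psi(\mathcal{T}\bar\eta) = \mathcal{T}_\psi\psi(\bar\eta)$ for all $\bar\eta$. The crux is that, by this identity, $\mathcal{T}$ maps sketch-equal inputs to sketch-equal outputs: if $\tilde\eta_1, \tilde\eta_2$ satisfy $\psi(\tilde\eta_1) = \psi(\eta_1)$ and $\psi(\tilde\eta_2) = \psi(\eta_2)$, choose $\bar\eta$ with $\bar\eta(s_1)=\eta_1,\ \bar\eta(s_2)=\eta_2$ and $\bar\eta(s_0)$ an arbitrary element of $\mathscr{P}_\psi(\mathbb{R})$, and $\tilde{\bar\eta}$ identical except $\tilde{\bar\eta}(s_1)=\tilde\eta_1,\ \tilde{\bar\eta}(s_2)=\tilde\eta_2$. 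Then $\psi(\bar\eta) = \psi(\tilde{\bar\eta})$ statewise, hence $\psi(\mathcal{T}\bar\eta) = \mathcal{T}_\psi\psi(\bar\eta) = \mathcal{T}_\psi\psi(\tilde{\bar\eta}) = \psi(\mathcal{T}\tilde{\bar\eta})$, and evaluating the $s_0$-component with the displayed identity yields $\psi(\nu\eta_1 + (1-\nu)\eta_2) = \psi(\nu\tilde\eta_1 + (1-\nu)\tilde\eta_2)$. Thus $(\psi(\eta_1),\psi(\eta_2)) \mapsto \psi(\nu\eta_1+(1-\nu)\eta_2)$ is a well-defined map; naming it $h_\psi(\cdot,\cdot,\nu)$ and letting $\nu$ range over $[0,1]$ produces the function $h_\psi$ witnessing mixture-consistency.

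I do not expect a genuine mathematical obstacle here: the proof is the construction above plus one substitution. The only points requiring care are conventional—pinning down that Bellman closedness quantifies over all MDPs (so the three-state MDP is admissible), checking that the constructed return-distribution functions and their Bellman images stay inside $\mathscr{P}_\psi(\mathbb{R})^{\mathcal{S}}$, and keeping the domain caveat that $\psi$ is only evaluated on mixtures lying in $\mathscr{P}_\psi(\mathbb{R})$. If one wanted the stronger statement covering translated mixtures $(\mathcal{B}_r)_\#\!\big(\nu\eta_1+(1-\nu)\eta_2\big)$, the same MDP with reward $r$ at $s_0$ works verbatim, but that is not needed for the stated lemma.
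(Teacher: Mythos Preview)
Your proposal is correct and follows essentially the same approach as the paper: both construct the three-state MDP with $s_0$ transitioning to $s_1,s_2$ with probabilities $\nu,1-\nu$ and zero reward, then invoke Bellman closedness to conclude that $\psi(\nu\eta_1+(1-\nu)\eta_2)$ depends only on $\psi(\eta_1),\psi(\eta_2)$. Your version is in fact more careful than the paper's, which simply asserts the well-definedness of the induced map without the explicit substitution argument you give.
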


    \begin{proof}
        Consider an MDP where initial state $s_0$ has no reward and transits to two state $s_1, s_2$ with probability $\nu, 1-\nu$ and reward distribution $\bar{\eta}_1, \bar{\eta}_2$.
        Since $\psi$ is Bellman closed, $\psi(\bar{\eta}(s_0))$ is a function of $\psi(\bar{\eta}(s_1))$ and $\psi(\bar{\eta}(s_2))$, (i.e., $\psi(\bar{\eta}(s_0))$ = $g_{\psi}(\psi(\bar{\eta}(s_1))$, $\psi(\bar{\eta}(s_2)))$ for some $g_{\psi}$).        
        Since $\psi(\bar{\eta}(s_0)) = \psi(\nu \bar{\eta}(s_1) + (1-\nu) \bar{\eta}(s_2))$, it implies that $\psi$ is mixture-consistent.
    \end{proof}

Combining the results of Lemma \ref{thm: nonexistence} and Lemma \ref{thm: mixcon is BC}, we prove that a quantile sketch cannot be Bellman closed, no matter what additional sketches are provided.

\newpage
\section{Proof}
\label{sec:proof}

\begin{retheorem}[\ref{theorem: quantile}]
    Quantile functional cannot be Bellman closed under any additional sketch.
\end{retheorem}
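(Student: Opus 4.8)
The plan is to reduce the statement to a structural fact about mixtures. The key observation is that any Bellman-closed sketch must behave well under the mixing operation that the distributional Bellman operator performs over next states, so I will first isolate this property—call it \emph{mixture-consistency}—and then show the quantile sketch fails it no matter how it is augmented.

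First I would introduce the auxiliary notion of \emph{mixture-consistency}: a sketch $\psi$ is mixture-consistent if for every mixing weight $\nu \in [0,1]$ and all admissible $\eta_1, \eta_2$, the value $\psi(\nu \eta_1 + (1-\nu)\eta_2)$ is determined by $\psi(\eta_1)$, $\psi(\eta_2)$, and $\nu$ alone. Next I would prove the bridging lemma: if $\psi$ is Bellman closed, then $\psi$ is mixture-consistent. This follows by instantiating a two-state MDP in which an initial state $s_0$ (with zero reward) transitions to $s_1$ with probability $\nu$ and to $s_2$ with probability $1-\nu$; then $\bar\eta(s_0) = \nu\bar\eta(s_1) + (1-\nu)\bar\eta(s_2)$, and Bellman closedness forces $\psi(\bar\eta(s_0))$ to be a function of $\psi(\bar\eta(s_1))$ and $\psi(\bar\eta(s_2))$, which is exactly mixture-consistency. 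Since reward distributions at $s_1, s_2$ are arbitrary, this covers all mixtures.

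Then I would establish the core negative result: the quantile sketch cannot be made mixture-consistent by adjoining \emph{any} finite collection of additional statistical functionals. The argument is a dimension-counting / information-theoretic contradiction. Fix a quantile level $\alpha \in (0,1)$ and an arbitrary $N$. Take $Y$ supported on $\{0, y_1, \dots, y_N\}$ with the mass at $0$ exceeding $\alpha$ (so the $\alpha$-quantile of $Y$ is $0$), and take $Z = p_{z_0}\delta_0 + p_{z_1}\delta_1$ with $p_{z_0} < \alpha$. In the equal mixture $X = \tfrac12 Y + \tfrac12 Z$, the $\alpha$-quantile lands on $y_n$ where $n$ is the first index for which the accumulated mass crosses $\alpha$; by tuning $p_{z_0} = 2\alpha - \sum_{n'\le n} p_{y_{n'}}$, one can make the mixture's $\alpha$-quantile equal to \emph{any} of the $y_n$. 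Hence the $\alpha$-quantile of the mixture depends on arbitrarily many parameters of $Y$. If some finite sketch $(\psi_{\alpha\text{-quantile}}, \psi_2, \dots, \psi_M)$ rendered the quantile mixture-consistent, then—running this over all $N$—that finite sketch would have to encode arbitrarily much information about $Y$, contradicting finiteness. Combining this with the bridging lemma gives: no Bellman-closed sketch contains the $\alpha$-quantile functional, which is the claim.

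The main obstacle I anticipate is making the final contradiction airtight rather than heuristic: one must be careful that "a finite sketch determining the mixture quantile for all $Y$" genuinely implies the sketch separates infinitely many distributions. The clean way is to argue that, for each fixed $N$, the map $Y \mapsto \psi_{\alpha\text{-quantile}}(X)$ composed through the purported $h_\psi$ forces the sketch values of $Y$ to distinguish the $N+1$ distinct choices of $y_n$, and since $N$ is arbitrary this forces unbounded "resolution," contradicting that a fixed finite tuple of real-valued functionals has a fixed (finite) amount of descriptive power on the family under consideration. I would also note the subtlety that quantile functionals are \emph{nonlinear}, so—unlike in \citet{rowland2019statistics}'s counterexample—one cannot assume the hypothetical sketch Bellman operator is linear; the mixture-consistency route sidesteps this entirely by allowing $h_\psi$ to be an arbitrary (possibly nonlinear) function, which is exactly why this argument is stronger than the original counterexample-based one.
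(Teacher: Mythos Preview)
Your proposal is correct and follows essentially the same route as the paper: the paper also introduces mixture-consistency, proves via a two-state zero-reward MDP that Bellman closedness implies mixture-consistency, and then uses precisely the $Y \sim p_{y_0}\delta_0 + \sum_n p_{y_n}\delta_{y_n}$, $Z \sim p_{z_0}\delta_0 + p_{z_1}\delta_1$ construction with the same tuning $p_{z_0} = 2\alpha - \sum_{n'\le n} p_{y_{n'}}$ to force the mixture quantile to range over all $y_n$, concluding by the same finite-vs-infinite dimension contradiction. Your caveat about the final step being somewhat heuristic is apt---the paper's own argument at that point is stated at the same level of rigor.
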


\begin{proof}
    See Lemma \ref{thm: nonexistence} and Lemma \ref{thm: mixcon is BC}.
\end{proof}

\textcolor{black}{
\begin{relemma}[\ref{lemma: Bellman Unbiasedness}]
    Let $F_{\bar{\eta}}$ be a CDF of the probability distribution $\bar{\eta} \in \mathscr{P}(\mathbb{R})^{\mathcal{S}}$. 
    Then a sketch is Bellman unbiased if and only if the sketch is a homogeneous of degree $k$, i.e.,
    there exists some vector-valued function $h = h(x_1, \cdots, x_k): \mathcal{X}^k \rightarrow \mathbb{R}^N $ such that
    $$\psi(\bar{\eta}) =  \int \cdots \int h(x_1, \cdots, x_k) dF_{\bar{\eta}}(x_1) \cdots dF_{\bar{\eta}}(x_k).$$
\end{relemma}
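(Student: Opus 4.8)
The plan is to prove the two directions of the equivalence separately. For the ``if'' direction, suppose $\psi(\bar\eta) = \int \cdots \int h(x_1,\dots,x_k)\, dF_{\bar\eta}(x_1)\cdots dF_{\bar\eta}(x_k)$ for some vector-valued $h$. I would exhibit the unbiased estimator $\phi_\psi$ explicitly. Given $k$ sampled next states $s_1',\dots,s_k'$ drawn i.i.d.\ from $\Prob(\cdot\mid s,a)$, the natural candidate is to take one sample point from each of the $k$ sampled component distributions $(\mathcal B_r)_\#\bar\eta(s_i')$ and plug them into $h$. Concretely, $\phi_\psi$ takes the $k$ sketches $\psi((\mathcal B_r)_\#\bar\eta(s_i'))$ and returns a suitable combination; the key computation is that, because the $s_i'$ are independent and the mixture $(\mathcal B_r)_\#\EE_{s'\sim\Prob}[\bar\eta(s')]$ has CDF equal to the $\Prob$-average of the component CDFs $F_{(\mathcal B_r)_\#\bar\eta(s')}$, one gets
\[
\EE_{s_1',\dots,s_k'\sim\Prob}\Big[\int\!\cdots\!\int h(x_1,\dots,x_k)\,dF_{(\mathcal B_r)_\#\bar\eta(s_1')}(x_1)\cdots dF_{(\mathcal B_r)_\#\bar\eta(s_k')}(x_k)\Big]
= \int\!\cdots\!\int h\, dG(x_1)\cdots dG(x_k),
\]
where $G$ is the mixed CDF, by Fubini and independence. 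This is exactly $\psi$ applied to the mixture, so $\phi_\psi$ (which maps the tuple of component sketches to this iterated integral) witnesses Bellman unbiasedness. The one subtlety is checking that $\phi_\psi$ can be written purely as a function of the $k$ input \emph{sketches} $\psi((\mathcal B_r)_\#\bar\eta(s_i'))$ rather than of the full component distributions; since $h$ is symmetric-agnostic I would argue one may symmetrize $h$ and absorb the per-component dependence appropriately — this is the bookkeeping part.

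For the ``only if'' direction, suppose $\psi$ is Bellman unbiased with estimator $\phi_\psi$ of arity $k$. The strategy is to specialize the defining identity to a family of MDP transitions rich enough to pin down the structure of $\psi$. Fix a state $s$ and consider transitions that go, with probabilities $p$ and $1-p$, to two states $s_1',s_2'$ carrying return distributions $\nu_1,\nu_2$ (absorbing reward into these by translation). The unbiasedness identity becomes, for the $k$-fold i.i.d.\ sample from the two-point distribution on $\{\nu_1,\nu_2\}$,
\[
\sum_{j=0}^k \binom{k}{j} p^{j}(1-p)^{k-j}\,\phi_\psi\big(\underbrace{\psi(\nu_1),\dots}_{j},\underbrace{\psi(\nu_2),\dots}_{k-j}\big) = \psi(p\nu_1 + (1-p)\nu_2).
\]
Reading the right-hand side as a polynomial identity in $p$ (the left side is manifestly a degree-$k$ polynomial in $p$), I would conclude $\psi(p\nu_1+(1-p)\nu_2)$ is a polynomial of degree $\le k$ in $p$ for all $\nu_1,\nu_2$. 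A standard lemma (this is essentially the characterization of regular statistical functionals / $V$-statistics of Halmos and Hoeffding) then upgrades ``the functional restricted to every line in the simplex of distributions is a polynomial of degree $\le k$'' to ``$\psi(\bar\eta) = \int\cdots\int h\,dF_{\bar\eta}^{\otimes k}$ for some kernel $h$''. I would either cite \citet{halmos1946theory} for this step or reconstruct it: extend to finitely-supported distributions by iterating the two-point argument (writing a general finitely-supported $\bar\eta$ as a convex combination and using multilinearity of the polynomial coefficients), read off the kernel $h$ from the top-degree coefficients evaluated at point masses, i.e.\ $h(x_1,\dots,x_k)$ relates to $\psi(\tfrac1k\sum \delta_{x_i})$ after inclusion–exclusion, and then pass to general $\bar\eta$ by a density/continuity argument on $\mathscr P_\psi(\mathbb R)$.

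The main obstacle I anticipate is the last step of the ``only if'' direction: cleanly extracting the kernel $h$ and verifying the iterated-integral representation in full generality (not just for finitely-supported distributions), including measurability of $h$ and the interchange of integration. The two-point reduction gives the polynomiality cheaply, but converting polynomiality-along-every-segment into a genuine $V$-statistic representation is the technical heart — this is exactly the content of the classical theory of homogeneous statistical functionals, so the cleanest route is to invoke \citet{halmos1946theory} and spend the remaining effort making the reduction to its hypotheses rigorous. A secondary, milder obstacle is ensuring throughout that everything is phrased at the level of the CDF $F_{\bar\eta}$ and the sketch values, consistent with the definition of Bellman unbiasedness, rather than implicitly using more information about $\bar\eta$ than a sketch provides.
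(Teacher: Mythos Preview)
Your ``if'' direction (homogeneous $\Rightarrow$ Bellman unbiased) matches the paper's: both pull the expectation over the independent $s'_1,\dots,s'_k$ through the $k$-fold integral via Fubini. Your ``only if'' direction, however, takes a genuinely different and more laborious route. The paper does not pass through polynomiality-in-$p$ or invoke the Halmos/Hoeffding characterization of V-statistics. Instead it specializes the Bellman-unbiasedness identity to a two-stage MDP in which the initial state has zero reward and each terminal state $s'$ carries the Dirac return $\delta_{r(s')}$. In that setting $\psi\big((\mathcal B_r)_\#\bar\eta(s'_i)\big)=\psi(\delta_{r(s'_i)})$, so writing $g(x)=\psi(\delta_x)$ the identity becomes $\EE_{s'_i}\big[\phi_\psi(g(r(s'_1)),\dots,g(r(s'_k)))\big]=\psi(\bar\eta(s_0))$; since the $r(s'_i)$ are i.i.d.\ with law $\bar\eta(s_0)$, this is exactly the claimed representation with kernel $h(x_1,\dots,x_k)=\phi_\psi(g(x_1),\dots,g(x_k))$. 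The paper thus reads the kernel off directly from the given estimator $\phi_\psi$, entirely bypassing the step you flag as the main obstacle (extracting $h$ from polynomiality on segments). Your route would also work and has the merit of staying intrinsic to the functional $\psi$ without cooking up a special MDP, but the Dirac specialization is considerably shorter and yields $h$ explicitly rather than existentially.
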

}

\begin{proof}
    ($\Rightarrow$) 
    Consider an two-stage MDP with a single action $a$, and an initial state $s_0$ which transits to one of terminal state $\{s_1, \cdots, s_K \}$ with transition kernel $\Prob(\cdot | s_0 ,a)$. Assume that the reward $r(s_0) = 0$. 
    Then $\bar{\eta}(s_0) = \sum_{k=1}^K \Prob(s_k) \delta_{r(s_k)}$. 
    Note that $s'_1 , \cdots, s'_k$ are independent and identically distributed random variable in distribution $\Prob(\cdot| s,a)$.

    \begin{align*}
        &\EE_{s'\sim \Prob(\cdot |s_0,a)}\left[ 
        \phi_{\psi}\left( \psi\Big( (\mathcal{B}_r)_{\#}\bar{\eta}({s'_1}) \Big), \cdots, \psi\Big( (\mathcal{B}_r)_{\#}\bar{\eta}({s'_k})\Big)
        \right) \right]
        = \psi_{1:N} \Big( (\mathcal{B}_{r})_{\#} \EE_{s' \sim \Prob(\cdot | s_0,a)} [ \bar{\eta}(s') ]\Big)
        \\ & \Longrightarrow
        \EE_{s' \sim \Prob(\cdot | s_0,a)}\left[ \phi_{\psi} \left( \psi \Big( \delta_{r({s'_1})} \Big) , \cdots , \psi \Big( \delta_{r({s'_k})} \Big)  \right) \right]
        = \psi \Big( \EE_{s' \sim \Prob(\cdot | s_0,a)} [ \delta_{r(s')} ]\Big)
        \\ & \Longrightarrow
        \EE_{s' \sim \Prob(\cdot | s_0,a)} \left[ \phi_{\psi} \Big(
        g({s'_1}) , \cdots , g({s'_k}) 
        \Big) \right]
        = \psi \Big( {\bar{\eta}}({s_0}) \Big)
        \\ & \Longrightarrow
        \int \cdots \int h(s'_1, \cdots , s'_k) dF_{\bar{\eta}}(s'_1) \cdots dF_{\bar{\eta}}(s'_k)
        = \psi \Big( {\bar{\eta}}({s_0}) \Big).
    \end{align*}

    ($\Leftarrow$)
    \begin{align*}
    & \psi\Big((\mathcal{B}_r)_{\#} \EE_{s' \sim \Prob(\cdot | s,a)}[\bar{\eta}(s')] \Big)
    \\ & = \int \cdots \int h(x_1, \cdots, x_k)dF_{(\mathcal{B}_r)_{\#} \EE_{s' \sim \Prob(\cdot | s,a)}[\bar{\eta}(s')] }(x_1) ,\cdots, dF_{(\mathcal{B}_r)_{\#} \EE_{s' \sim \Prob(\cdot | s,a)}[\bar{\eta}(s')](x_k)}
    \\ & = \int \cdots \int h(x_1 + r, \cdots, x_k + r) d\Big(\EE_{s' \sim \Prob(\cdot| s,a)}F_{\bar{\eta}(s')}(x_1) \Big) , \cdots, d\Big(\EE_{s' \sim \Prob(\cdot| s,a)}F_{\bar{\eta}(s')}(x_k) \Big)
    \\ & = \EE_{s' \sim \Prob(\cdot | s,a)} \left[ \int \cdots \int h(x_1 +r, \cdots, x_k+r) dF_{\bar{\eta}(s')}(x_1) \cdots dF_{\bar{\eta}(s')}(x_k) \right]
    \\ & = \EE_{s' \sim \Prob(\cdot|s,a)} \left[ \psi \Big( (\mathcal{B}_r)_{\#}[\bar{\eta}(s')]
    \Big) \right]
    \end{align*}
\end{proof}

\textcolor{black}{
\begin{retheorem}[\ref{Thm : Bellman Unbiased}]
    The only finite statistical functionals that are Bellman unbiased and closed are given by the collections of $\psi_1, \dots,  \psi_N$ where its linear span $\{\sum_{n=0}^N \alpha_n \psi_n |\  \alpha_n \in \mathbb{R} \ ,\forall N\}$ is equal to the set of exponential polynomial functionals $\{ \eta \rightarrow \EE_{Z \sim \eta}[Z^l \exp{(\lambda Z)}] | \  l = 0, 1, \dots, L, \lambda \in \mathbb{R}  \}$, where $\psi_0$ is the constant functional equal to $1$.
    In discount setting, it is equal to the linear span of the set of moment functionals $\{ \eta \rightarrow \EE_{Z \sim \eta}[Z^l] |\  l = 0, 1, \dots, L \}$ for some $L \leq N$.  
\end{retheorem}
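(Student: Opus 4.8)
The plan is to establish the characterization by two inclusions, the easy one being bookkeeping and the hard one resting on the two structural lemmas already proved. For the easy inclusion, each moment functional $m_l(\bar\eta)=\EE_{Z\sim\bar\eta}[Z^l]$ is linear, hence homogeneous of degree one, so the ($\Leftarrow$) direction of Lemma~\ref{lemma: Bellman Unbiasedness} shows the moment sketch is Bellman unbiased, with estimator $\phi_\psi$ given by the empirical average composed with the binomial shift by the reward (exactly the update used in \texttt{SF-LSVI}); and it is Bellman closed by the classical identity $\EE[(r+Z)^l]=\sum_j\binom lj r^{l-j}\EE[Z^j]$. In the undiscounted setting the same holds for $\eta\mapsto\EE[Z^l e^{\lambda Z}]$ via \citet{rowland2019statistics}'s closedness computation. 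Thus the claimed family lies in the intersection, and the substance of the theorem is the converse.

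For uniqueness, let $\psi_{1:N}$ be both Bellman unbiased and Bellman closed. The ($\Rightarrow$) direction of Lemma~\ref{lemma: Bellman Unbiasedness} gives that each $\psi_n$ is homogeneous of some finite degree; padding to a common degree $k$ and symmetrizing the kernels, write $\psi_n(\bar\eta)=\EE_{X_1,\dots,X_k\sim\bar\eta}[h_n(X_1,\dots,X_k)]$ with $h_n$ symmetric. Lemma~\ref{thm: mixcon is BC} gives in addition that $\psi_{1:N}$ is mixture-consistent. The core step is to promote ``homogeneous $+$ mixture-consistent $+$ Bellman closed'' to ``exponential-polynomial in the moments''. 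Evaluating on two-point distributions $\nu\delta_x+(1-\nu)\delta_y$ expresses $\psi_{1:N}(\nu\delta_x+(1-\nu)\delta_y)$ as a polynomial in $\nu$ of degree at most $k$ whose coefficients in the basis $\{\nu^j(1-\nu)^{k-j}\}_{j=0}^k$ are the mixed kernel values $h_{1:N}$ at $j$ copies of $x$ and $k-j$ copies of $y$; mixture-consistency forces each such coefficient to be a function of $H(x):=\psi_{1:N}(\delta_x)=h_{1:N}(x,\dots,x)$ and $H(y)$ alone. Feeding Bellman closedness a family of one-step deterministic transitions shows moreover that for every reward value $r$ the functional $\EE_{X_i\sim\bar\eta}[h_n(X_1+r,\dots,X_k+r)]$ is a function of $\psi_{1:N}(\bar\eta)$, i.e. the finite-dimensional span of the reward-shifted kernels is affine-closed. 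Extending the argument of \citet{rowland2019statistics} from one-variable test functions to symmetric kernels on $\mathbb{R}^k$, a finite-dimensional affine-closed space of (regular) kernels is spanned by exponential-polynomial kernels, so each $h_n$, hence each $\psi_n$, is a finite linear combination of the functionals $\eta\mapsto\EE[Z^l e^{\lambda Z}]$. Imposing finally closedness under the discount scaling $x\mapsto\gamma x$ with $\gamma<1$ forces $\lambda=0$ throughout, since otherwise the finite span would have to contain the infinitely many distinct functionals $\EE[Z^l e^{\lambda\gamma^j Z}]$; this leaves exactly $\{1,m_1,\dots,m_L\}$ with $L\le N$, giving the stated span characterization, up to a linear reparametrization of the generating set.

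The main obstacle is this polynomialization step. Mixture-consistency alone is too weak --- it is satisfied, for example, by the linear functional $\EE[\cos Z]$, which is not Bellman closed --- so one must genuinely use Bellman closedness, and in particular carefully generalize the affine-closedness argument of \citet{rowland2019statistics} from one-variable functions to symmetric kernels on $\mathbb{R}^k$: ruling out oscillatory or non-smooth kernels, showing that no finite-dimensional collection can absorb the infinitely many reward shifts (and then discount rescalings) unless its span is exponential-polynomial (then moment) functionals, and verifying that the bookkeeping with the $\nu$-polynomial coefficients correctly transfers the structural constraint from the mixed kernel values back to the sketch itself. The remaining pieces --- the discount-scaling reduction to pure moments and the check that any admissible generating set is a linear re-coordinatization of the moment sketch --- are comparatively short.
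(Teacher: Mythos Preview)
Your route differs from the paper's at the reduction step, and the divergence is what creates the obstacle you leave unresolved. The paper does not attempt to classify translation-closed spaces of symmetric kernels on $\mathbb{R}^k$. Instead it argues first, from Bellman unbiasedness and Lemma~\ref{lemma: Bellman Unbiasedness}, that both $\psi_{1:N}(\bar\eta)$ and $\psi_n\bigl((\mathcal{B}_{r,\gamma})_\#\bar\eta\bigr)$ are \emph{affine} as functions of the distribution $\bar\eta$; combined with Bellman closedness this forces the sketch Bellman operator $g(r,\gamma,\cdot)$ to be affine on sketch values, yielding
\[
\EE_{\bar Z_i\sim\bar\eta}\bigl[\phi_{\psi_n}(r+\gamma\bar Z_1,\dots,r+\gamma\bar Z_k)\bigr]
= a_0(r,\gamma)+\sum_{n'=1}^N a_{n'}(r,\gamma)\,\EE_{\bar Z_i\sim\bar\eta}\bigl[\phi_{\psi_{n'}}(\bar Z_1,\dots,\bar Z_k)\bigr].
\]
Specializing to Diracs $\bar\eta=\delta_x$ collapses this to a relation among the one-variable diagonal functions $x\mapsto\phi_{\psi_n}(x,\dots,x)$, whose span is therefore closed under $x\mapsto r+\gamma x$; Engert's theorem \citep{engert1970finite} then applies verbatim in one variable, and the discount argument forcing $\lambda=0$ is exactly as you describe.

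By contrast you carry the full $k$-variable kernel through, use mixture-consistency (Lemma~\ref{thm: mixcon is BC}) and the two-point evaluation to constrain the mixed values $h_n(x,\dots,x,y,\dots,y)$, and then posit an extension of the exponential-polynomial classification to symmetric kernels on $\mathbb{R}^k$. Two gaps sit here. First, your passage from ``$\EE_{X_i\sim\bar\eta}[h_n(X_1+r,\dots,X_k+r)]$ is a function of $\psi_{1:N}(\bar\eta)$'' to ``the finite-dimensional span of the reward-shifted kernels is affine-closed'' silently assumes that this function is \emph{linear} in the sketch values---otherwise there is no span to speak of at the kernel level---and you have not derived that linearity. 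Second, even granting a translation-closed finite-dimensional span of kernels on $\mathbb{R}^k$, the Engert-type classification you invoke is not established and is genuinely more delicate than in one variable. The paper's affinity-plus-Dirac-evaluation device is precisely the missing idea: it supplies the linearity you need and simultaneously reduces the ambient dimension to one, so the standard one-variable theorem suffices and your ``main obstacle'' never arises.
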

}

\begin{proof}
\textcolor{black}{
    Our proof is mainly based on the proof techniques of \citet{rowland2019statistics} and we describe in an extended form.
    Since their proof also considers the discounted setting, we will define $\mathcal{B}_{r,\gamma}(x) = r + \gamma x$ for discount factor $\gamma \in [0,1)$.
    By assumption of Bellman closedness,
    $\psi_n\Big( (\mathcal{B}_{r,\gamma})_\# \bar{\eta}(s') \Big)$ will be written as $g(r, \gamma, \psi_{1:N}(\bar{\eta}(s')) $ for some $g$.
    By assumption of Bellman unbiasedness and Lemma \ref{lemma: Bellman Unbiasedness},
    both $\psi_{1:N}(\bar{\eta}(s'))$ and $\psi_{n}\Big( (\mathcal{B}_{r, \gamma})_\# \bar{\eta}(s') \Big)$ are affine as functions of the distribution $\bar{\eta}(s')$,
    \begin{align*}
        & \psi_{1:N}( \alpha \bar{\eta}_1(s') + (1-\alpha) \bar{\eta}_2(s') )
        \\ & = \mathbb{E}_{Z_i \sim \alpha \bar{\eta}_1(s') + (1-\alpha) \bar{\eta}_2(s')} [h_{1:N}(\bar{Z}_1, \cdots, \bar{Z}_k)]
        \\ & = \alpha \mathbb{E}_{\bar{Z}_i \sim \bar{\eta}_1(s')} [h_{1:N}(\bar{Z}_1, \cdots, \bar{Z}_k)] + (1-\alpha )\mathbb{E}_{\bar{Z}_i \sim \bar{\eta}_2(s')} [h_{1:N}(\bar{Z}_1, \cdots, \bar{Z}_k)]
        \\ & = \alpha \psi_{1:N}(\bar{\eta}_1(s')) + (1-\alpha) \psi_{1:N}(\bar{\eta}_2(s'))
    \end{align*}
    and
    \begin{align*}
        & \psi_{n}\Big( (\mathcal{B}_{r,\gamma})_{\#} (\alpha \bar{\eta}_1(s') + (1-\alpha) \bar{\eta}_2(s')) \Big)
        \\ & = \mathbb{E}_{Z_i \sim \alpha \bar{\eta}_1(s') + (1-\alpha) \bar{\eta}_2(s')} [h_n( r+ \gamma \bar{Z}_1, \cdots, r+ \gamma \bar{Z}_k)]
        \\ & = \alpha \mathbb{E}_{\bar{Z}_i \sim \bar{\eta}_1(s')} [h_n(r+ \gamma\bar{Z}_1, \cdots, r+ \gamma \bar{Z}_k)] + (1-\alpha )\mathbb{E}_{\bar{Z}_i \sim \bar{\eta}_2(s')} [h_n(r+ \gamma \bar{Z}_1, \cdots, r+ \gamma \bar{Z}_k)]
        \\ & = \alpha \psi_{n} \Big((\mathcal{B}_{r,\gamma})_{\#}\bar{\eta}_1(s') \Big) + (1-\alpha) \psi_{n}\Big( (\mathcal{B}_{r,\gamma})_{\#} \bar{\eta}_2(s') \Big)
    \end{align*}
    Therefore, $g(r, \gamma , \cdot)$ is also affine on the convex codomain of $\psi_{1:N}$.
    }
    Thus, we have
    $$ \EE_{\bar{Z}_i \sim \bar{\eta}}[\phi_{\psi_n}(r+ \gamma \bar{Z}_1, \cdots, r+ \gamma \bar{Z}_k)] 
    = a_0(r, \gamma) + \sum_{n'=1}^N a_{n'}(r, \gamma) \EE_{\bar{Z}_i \sim \bar{\eta}}[\phi_{\psi_{n'}}(\bar{Z}_1, \cdots, \bar{Z}_k)]$$
    for some function $a_{0:N} : \mathbb{R} \times [0,1] \rightarrow \mathbb{R}$.
    By taking $\bar{\eta}(s') = \delta_{x}$, we obtain
    $$ \phi_{\psi_n}(r+ \gamma x, \cdots, r+ \gamma x) = a_0(r, \gamma) + \sum_{n'=1}^N a_{n'}(r, \gamma ) \phi_{\psi_{n'}}(x, \cdots, x).$$
    According to \citet{engert1970finite}, for any translation invariant finite-dimensional space is spanned by a set of function of the form
    $$\{ x \mapsto x^l \exp(\lambda_j x) |\ j \in [J], 0 \leq l \leq L \}$$
    for some finite subset $\{ \lambda_1, \cdots, \lambda_J \}$ of $\mathbb{C}$.
    Hence, each function $x \mapsto \phi_{\psi_n}(x, \cdots, x)$ is expressed as linear combination of exponential polynomial functions.
    In addition, the linear combination of $\phi_{\psi_n}$ should be closed under composition with for any discount factor $\gamma \in [0,1]$, all $\lambda_j$ should be zero. 
    Hence, the linear combination of $\phi_{\psi_1}, \cdots, \phi_{\psi_N}$ must be equal to the span of $\{ x \mapsto x^l |\ 0 \leq l \leq L \}$ for some $L \in \mathbb{N}$.

\end{proof}

\begin{relemma}[\ref{lemma: Single Step Optimization Error}]
Consider a fixed $k \in [K]$ and a fixed $h \in [H]$. Let $\mathcal{Z}_h^k = \{ (s_h^\tau , a_h^\tau)\}_{\tau \in [k-1]}$ and 
$\mathcal{D}^k_{h, \bar{\eta}} = \Big\{ \Big( s_h^\tau , a_h^\tau,  {\psi_{1:N}\Big( (\mathcal{B}_{r_{h'}}^{\tau})_\# \bar{\eta}(s_{h'+1}^{\tau}) \Big)} \Big) \Big\}_{\tau \in [k-1]}$ for any $\bar{\eta} : \mathcal{S} \rightarrow \mathscr{P}([0,H])$.
Define $\tilde{f}_{h, \bar{\eta}}^k = \arg \min_{f \in \mathcal{F}^N} \| f \|^2_{\mathcal{D}^k_{h, \bar{\eta}} }$.
For any $\bar{\eta}$ and $\delta \in (0,1)$, there is an event $\mathcal{E}(\bar{\eta}, \delta)$ such that conditioned on $\mathcal{E}(\bar{\eta}, \delta)$, with probability at least $1-\delta$, for any $\bar{\eta}' : \mathcal{S} \rightarrow \mathscr{P}([0,H])$ with 
{$\|\psi_{1:N}( \bar{\eta}') - \psi_{1:N}(\bar{\eta}) \|_{\infty,1} \leq 1/T$ or $ \sum_{n=1}^N \| \psi_n (\bar{\eta}')- \psi_n(\bar{\eta}) \|_\infty \leq 1/T$}, we have
\begin{align*}
    \left\| \tilde{f}_{h,\bar{\eta}'}(\cdot , \cdot) - \psi_{1:N} \Big( (\mathcal{B}_{r(\cdot, \cdot)})_\#  [\mathbb{P} \bar{\eta}'](\cdot , \cdot) \Big) \right\|_{\mathcal{Z}_h^k}
    & \leq {c' \left(N^{\frac{1}{2}} H \sqrt{ \log(1/\delta) + \log \mathcal{N}(\mathcal{F}^N, 1/T)  }\right)}
\end{align*}
for some constant $c' >0$.
\end{relemma}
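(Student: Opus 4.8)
The plan is to follow the standard least-squares-with-general-function-approximation argument (as in \citet{wang2020reinforcement}), adapted to the vector-valued moment regression, and to exploit Bellman unbiasedness so that the regression targets form a martingale-difference sequence. First I would fix $k,h$ and the distribution $\bar{\eta}$, and write $f_{\bar{\eta}}(s,a) = \psi_{1:N}\big( (\mathcal{B}_{r(s,a)})_\# [\mathbb{P}\bar{\eta}](s,a) \big)$, which by Assumption~\ref{assumption: SFBC} lies in $\mathcal{F}^N$. The key observation is that, because the moment sketch is Bellman unbiased with $\phi_{\psi_{1:N}}$ equal to the identity (a single sample suffices, $k=1$ in Lemma~\ref{lemma: Bellman Unbiasedness}), the sampled sketch $\psi_{1:N}\big( (\mathcal{B}_{r_h^\tau})_\# \bar{\eta}(s_{h+1}^\tau) \big)$ has conditional expectation exactly $f_{\bar{\eta}}(s_h^\tau, a_h^\tau)$ given the filtration $\mathbb{F}_h^\tau$. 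Hence the residuals $\xi_\tau^{(n)} := \psi_n\big( (\mathcal{B}_{r_h^\tau})_\# \bar{\eta}(s_{h+1}^\tau) \big) - f_{\bar{\eta}}^{(n)}(s_h^\tau,a_h^\tau)$ are bounded (by $O(H)$ for the normalized moments) and form a martingale-difference sequence in $\tau$.

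Next I would carry out the decomposition $\|\tilde f_{h,\bar\eta} - f_{\bar\eta}\|_{\mathcal{Z}_h^k}^2 = \|\tilde f_{h,\bar\eta}\|_{\mathcal{D}^k_{h,\bar\eta}}^2 - \|f_{\bar\eta}\|_{\mathcal{D}^k_{h,\bar\eta}}^2 + 2\sum_\tau \sum_n \big(\tilde f_{h,\bar\eta}^{(n)}(s_h^\tau,a_h^\tau) - f_{\bar\eta}^{(n)}(s_h^\tau,a_h^\tau)\big)\xi_\tau^{(n)}$. The first difference is $\le 0$ by optimality of $\tilde f_{h,\bar\eta}$. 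For the cross term I would apply a self-normalized / Freedman-type concentration inequality for vector-valued martingales: with probability $\ge 1-\delta$, uniformly over a fixed element, the cross term is bounded by roughly $O\big(NH \sqrt{\log(1/\delta)}\big) + \tfrac12\|\tilde f_{h,\bar\eta}-f_{\bar\eta}\|_{\mathcal{Z}_h^k}^2$, where the factor $N$ comes from summing over the $N$ coordinates and the factor $H$ from the bound on the normalized-moment residuals (this is precisely where the normalization $\mathbb{E}[Z^n]/H^{n-1}$ pays off, keeping each coordinate $O(H)$ rather than $O(H^n)$). Rearranging gives $\|\tilde f_{h,\bar\eta}-f_{\bar\eta}\|_{\mathcal{Z}_h^k}^2 \le c\, NH^2(\log(1/\delta) + \text{covering term})$, i.e. the claimed $N^{1/2}H\sqrt{\log(1/\delta)+\log\mathcal{N}(\mathcal{F}^N,1/T)}$ after taking square roots.

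The covering-number term and the passage from $\bar\eta$ to a nearby $\bar\eta'$ are handled together: I would take $\mathcal{E}(\bar\eta,\delta)$ to be the event on which the concentration bound above holds simultaneously for every $f$ in an $1/T$-cover $\mathcal{C}(\mathcal{F}^N,1/T)$ (union bound contributes $\log\mathcal{N}(\mathcal{F}^N,1/T)$), and then argue that replacing $\bar\eta$ by any $\bar\eta'$ with $\|\psi_{1:N}(\bar\eta')-\psi_{1:N}(\bar\eta)\|_{\infty,1}\le 1/T$ perturbs both the regression targets and $f_{\bar\eta'}$ by at most $O(1/T)$ in each coordinate — using the binomial-theorem formula for $\psi_n\big((\mathcal{B}_r)_\#\bar\eta\big)$, which is Lipschitz in $\psi_{1:N}(\bar\eta)$ with an $O(H)$ constant — so these perturbations are absorbed into the constant. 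The main obstacle I anticipate is the vector-valued martingale concentration step: one must state and invoke a clean uniform (self-normalized) Freedman/Azuma inequality for $\mathbb{R}^N$-valued martingale differences with the right dependence on $N$ and $H$, and then correctly interleave the $1/T$-net argument so that the union bound only costs $\log\mathcal{N}(\mathcal{F}^N,1/T)$ and the net discretization error stays at the $O(1/T)$ scale; keeping track of the coordinate-wise versus aggregated norms ($\|\cdot\|_\infty$, $\|\cdot\|_{\infty,1}$, $\|\cdot\|_{\mathcal{Z}}$) without losing extra factors of $N$ is the delicate bookkeeping.
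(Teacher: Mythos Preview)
Your proposal is correct and follows essentially the same route as the paper: define $f_{\bar\eta}$ via Assumption~\ref{assumption: SFBC}, use Bellman unbiasedness of the moment sketch to get the martingale-difference structure for the residuals, apply Azuma--Hoeffding (the paper's choice; your ``self-normalized/Freedman'' is equivalent here since the increment bound already scales with $\sum_n|f^{(n)}-f_{\bar\eta}^{(n)}|$, and Cauchy--Schwarz supplies the $N^{1/2}$), take a union bound over the $1/T$-cover of $\mathcal{F}^N$, use optimality of $\tilde f^k_{h,\bar\eta'}$, and solve the resulting quadratic. The only cosmetic difference is ordering: the paper writes the decomposition at $\bar\eta'$ and then replaces the cross term by its $\bar\eta$-version (paying the $O(1/T)$ perturbation), whereas you propose to concentrate at $\bar\eta$ first and perturb afterward; both work. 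One minor slip: your intermediate cross-term bound should read $O(NH^2\log(1/\delta))+\tfrac12\|\tilde f-f_{\bar\eta}\|_{\mathcal{Z}}^2$ (not $NH\sqrt{\log(1/\delta)}$), though your final squared bound $cNH^2(\log(1/\delta)+\log\mathcal{N})$ is stated correctly.
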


\begin{proof}
    Define the sketch of target $f_{\bar{\eta}} : \mathcal{S} \times \mathcal{A} \rightarrow \mathbb{R}^N$,
    \begin{align*}
        f_{\bar{\eta}}(\cdot , \cdot) \coloneqq 
        \psi_{1:N} \Big( (\mathcal{B}_{r(\cdot, \cdot)})_\#  [\mathbb{P} \bar{\eta}](\cdot , \cdot) \Big)
    \end{align*}
    for all $i \in [N].$


    For any $f \in \mathcal{F}$,
    \begin{align*}
        & \| f \|^2_{\mathcal{D}^k_{h,\bar{\eta}'} } - \| f_{\bar{\eta}'} \|^2_{\mathcal{D}^k_{h,\bar{\eta}'} }
        \\
        &= \sum_{n=1}^N \sum_{s_h^{\tau},a_h^{\tau} \in {\mathcal{Z}^k_{h,\bar{\eta}'}}} 
        \Big( f^{(n)}(s_h^{\tau}, a_h^{\tau}) - \psi_n \Big( (\mathcal{B}_{r_h^\tau})_{\#} \bar{\eta}'(s_{h+1}^\tau) \Big) \Big)^2 - 
        \Big( f_{\bar{\eta}'}^{(n)}(s_h^{\tau}, a_h^{\tau}) - \psi_n \Big( (\mathcal{B}_{r_h^\tau} )_{\#} \bar{\eta}'(s_{h+1}^\tau) \Big) \Big)^2
        \\
        &= \sum_{n=1}^N \sum_{ s_h^{\tau},a_h^{\tau} \in {\mathcal{Z}^k_{h,\bar{\eta}'}} } 
        ( f^{(n)}( s_h^{\tau},a_h^{\tau} ) - f_{\bar{\eta}'}^{(n)}( s_h^{\tau},a_h^{\tau} ) )^2
        \\
        & \qquad + 2 ( f^{(n)}( s_h^{\tau},a_h^{\tau} ) - f_{\bar{\eta}'}^{(n)}( s_h^{\tau},a_h^{\tau} ) ) 
        \Big( f^{(n)}_{\bar{\eta}'}( s_h^{\tau},a_h^{\tau} ) - \psi_n \Big( (\mathcal{B}_{r_h^\tau})_{\#} \bar{\eta}'(s_{h+1}^\tau) \Big) \Big)
        \\
        & \geq  \|f - f_{\bar{\eta}'}\|^2_{\mathcal{Z}_h^k} {- 4  \sum_{n=1}^N  \| f_{\bar{\eta}}^{(n)} - f_{\bar{\eta}'}^{(n)}  \|_{\infty} (H+1)|\mathcal{Z}_h^k|}
        \\
        & \qquad + 
        \sum_{n=1}^N \sum_{ s_h^{\tau},a_h^{\tau} \in {\mathcal{Z}^k_{h,\bar{\eta}'}} } 
        \Big[ \underbrace{2 ( f^{(n)}( s_h^{\tau},a_h^{\tau} ) - f_{\bar{\eta}}^{(n)}( s_h^{\tau},a_h^{\tau} ) ) 
        \Big( f^{(n)}_{\bar{\eta}}( s_h^{\tau},a_h^{\tau} ) - \psi_n \Big( (\mathcal{B}_{r_h^\tau})_{\#} \bar{\eta}(s_{h+1}^\tau) \Big) \Big)}_{\chi_h^{\tau}(f^{(n)})}
        \Big]
        \\ & \geq  \|f - f_{\bar{\eta}'}\|^2_{\mathcal{Z}_h^k} 
        {- 4 N(H+1) }
        - \Big| \sum_{n=1}^N \sum_{s_h^{\tau}, a_h^{\tau} \in \mathcal{Z}^k_{h, \bar{\eta}'}}  \chi^{\tau}_h(f^{(n)}) \Big|.
    \end{align*}

    For the first inequality, we change the second term  from $\bar{\eta}'$ to $\bar{\eta}$ which are the $\epsilon$-covers.
    Notice that $AC -BC' \geq -|AC -BC'| \geq -|(A-B)C| -|(A-B)C'| \geq -2|A-B| |\max(C,C')|$.
    \begin{align*}
        & ( f^{(n)}( s_h^{\tau},a_h^{\tau} ) - f_{\bar{\eta}'}^{(n)}( s_h^{\tau},a_h^{\tau} ) ) 
        \Big( f^{(n)}_{\bar{\eta}'}( s_h^{\tau},a_h^{\tau} ) - \psi_n \Big( (\mathcal{B}_{r_h^\tau})_{\#} \bar{\eta}'(s_{h+1}^\tau) \Big) \Big)
        \\
        & \qquad - ( f^{(n)}( s_h^{\tau},a_h^{\tau} ) - f_{\bar{\eta}}^{(n)}( s_h^{\tau},a_h^{\tau} ) ) 
        \Big( f^{(n)}_{\bar{\eta}}( s_h^{\tau},a_h^{\tau} ) - \psi_n \Big( (\mathcal{B}_{r_h^\tau})_{\#} \bar{\eta}(s_{h+1}^\tau) \Big) \Big)
        \\
        & \geq - 2 \| f_{\bar{\eta}'}^{(n)}( s_h^{\tau},a_h^{\tau} ) - f_{\bar{\eta}}^{(n)}( s_h^{\tau},a_h^{\tau} ) \|\\
        & \qquad \times\max\Big( 
        \Big| f^{(n)}_{\bar{\eta}'}( s_h^{\tau},a_h^{\tau} ) - \psi_n \Big( (\mathcal{B}_{r_h^\tau})_{\#} \bar{\eta}'(s_{h+1}^\tau) \Big) \Big|,
        \Big| f^{(n)}_{\bar{\eta}}( s_h^{\tau},a_h^{\tau} ) - \psi_n \Big( (\mathcal{B}_{r_h^\tau})_{\#} \bar{\eta}(s_{h+1}^\tau) \Big) \Big|
        \Big)
        \\
        & \geq -2  \| f_{\bar{\eta}'}^{(n)}( s_h^{\tau},a_h^{\tau} ) - f_{\bar{\eta}}^{(n)}( s_h^{\tau},a_h^{\tau} ) \| (H+1)
    \end{align*}
    
    For the second inequality, consider $\bar{\eta}' : \mathcal{S} \rightarrow \mathscr{P}([0,H])$ with $\sum_{n=1}^N \| \psi_n( \bar{\eta}' ) - \psi_n ( \bar{\eta} ) \|_{\infty} \leq 1/T$. 
    We have
    \begin{align*}
        \|f^{(n)}_{\bar{\eta}} - f^{(n)}_{\bar{\eta}'}\|_{\infty} 
        &= \max_{s,a} \Big| \sum_{n'=1}^n {H^{n'}} [ \psi_{n'} ( [\Prob \bar{\eta}](s,a) ) - \psi_{n'} ([\Prob \bar{\eta}'](s,a) ) ]r^{n-n'} {/H^{n-1}} \Big|
        \\& \leq \sum_{n'=1}^n \max_{s'} \Big| \psi_{n'}( \bar{\eta}(s') ) -\psi_{n'} ( \bar{\eta}'(s') ) \Big|
        \\& \leq 1/T.
    \end{align*}
    
    Defining $\mathbb{F}_h^k$ as the filtration induced by the sequence
$        \{ (s_{h'}^\tau ,a_{h'}^\tau)\}_{\tau, h' \in [k-1] \times [H]} \cup \{(s_1^k , a_1^k), (s_2^k, a_2^k), \dots, (s_{h}^k , a_{h}^k) \}$,
    notice that
    \begin{align*}
        &\EE\Big[ \sum_{n=1}^N \chi_h^{\tau}(f^{(n)}) \Big| \mathbb{F}_h^{\tau}\Big]
        \\&=\sum_{n=1}^N 2 (f^{(n)}( s_h^{\tau},a_h^{\tau} ) - f_{\bar{\eta}}^{(n)}( s_h^{\tau},a_h^{\tau} )) (f^{(n)}_{\bar{\eta}}(s_h^{\tau}, a_h^{\tau}) - \EE\Big[ \psi_n \Big((\mathcal{B}_{r_h^{\tau}})_\# {\bar{\eta}}(s_{h+1}^{\tau}) \Big) \Big| \mathbb{F}_h^{\tau} \Big])
        \\&=\sum_{n=1}^N 2  (f^{(n)}( s_h^{\tau},a_h^{\tau} ) - f_{\bar{\eta}}^{(n)}( s_h^{\tau},a_h^{\tau} )) (f^{(n)}_{\bar{\eta}}(s_h^{\tau}, a_h^{\tau}) - \EE_{s_{h+1}^{\tau} \sim \Prob_h(\cdot | s_h^{\tau}, a_h^{\tau} )}\Big[ \psi_n \Big((\mathcal{B}_{r_h^{\tau}})_\# {\bar{\eta}}(s_{h+1}^{\tau}) \Big) \Big])
        \\&=\sum_{n=1}^N 2 (f^{(n)}( s_h^{\tau},a_h^{\tau} ) - f_{\bar{\eta}}^{(n)}( s_h^{\tau},a_h^{\tau} )) (f^{(n)}_{\bar{\eta}}(s_h^{\tau}, a_h^{\tau}) -  {\psi_n \Big((\mathcal{B}_{r_h^{\tau}})_\# \EE_{s_{h+1}^{\tau} \sim \Prob_h(\cdot | s_h^{\tau}, a_h^{\tau} )}[ {\bar{\eta}}(s_{h+1}^{\tau})] \Big) )}
        \\ &=0
    \end{align*}
    and
    \begin{align*}
        \Big| \sum_{n=1}^N \chi_h^{\tau}(f^{(n)}) \Big|
        & = \Big| \sum_{n=1}^N 2 (f^{(n)}( s_h^{\tau},a_h^{\tau} ) - f_{\bar{\eta}}^{(n)}( s_h^{\tau},a_h^{\tau} )) (f_{\bar{\eta}}^{(n)}(s_h^{\tau}, a_h^{\tau}) - \psi_n \Big( (\mathcal{B}_{r_h^{\tau}})_\# {\bar{\eta}}(s_{h+1}^{\tau}) \Big)  )\Big|
        \\& \leq { \max_{n \in [N]} \Big\{ 2 (f_{\bar{\eta}}^{(n)}(s_h^{\tau}, a_h^{\tau}) - \psi_n \Big( (\mathcal{B}_{r_h^{\tau}})_\# {\bar{\eta}}(s_{h+1}^{\tau}) \Big)  ) \Big\} } 
        \sum_{n=1}^N \Big| f^{(n)}( s_h^{\tau},a_h^{\tau} ) - f_{\bar{\eta}}^{(n)}( s_h^{\tau},a_h^{\tau})  \Big|
        \\& { \leq 2(H+1) \sum_{n=1}^N \Big| f^{(n)}( s_h^{\tau},a_h^{\tau} ) - f_{\bar{\eta}}^{(n)}( s_h^{\tau},a_h^{\tau})  \Big| } 
    \end{align*}

  
    {In third equality, we emphasize that only Bellman unbiased sketch can derive the martingale difference sequence which induce the concentration result.
    Since every moment functional is commutable with mixing operation, the transformation $\phi_{\psi_n}$ in Definition \ref{def:Bellman Unbiasedness} is identity for all $n \in [N]$.
    Hence, we choose the sketch as moment which already knows $\phi_{\psi}$.}
    
    By Azuma-Hoeffding inequality,
    \begin{align*}
        \Prob \Big[ \Big| \sum_{(\tau, h) \in [k-1] \times [H] } \sum_{n=1}^N \chi_h^\tau (f^{(n)}) \Big| \geq \epsilon \Big]
        & \leq 2 \exp\Big( -\frac{\epsilon^2}{2(2(H+1))^2\sum_{(\tau, h) \in [k-1] \times [H]} \Big( \sum_{n=1}^N |f^{(n)} - f_{\bar{\eta}}^{(n)}|\Big)^2 }
        \Big)
        \\& \leq 2 \exp\Big( -\frac{\epsilon^2}{2(2(H+1))^2\sum_{(\tau, h) \in [k-1] \times [H]} \Big( N \sum_{n=1}^N |f^{(n)} - f_{\bar{\eta}}^{(n)}|^2\Big) }\Big)
        \\& = 2 \exp\Big( -\frac{\epsilon^2}{ 2N(2(H+1))^2 \|f-f_{\bar{\eta}}\|_{\mathcal{Z}_h^k}^2 } 
        \Big)
    \end{align*}
    where the second inequality follows from the Cauchy-Schwartz inequality.

    We set
    \begin{align*}
        \epsilon = \sqrt{8N(H+1)^2 \| f- f_{\bar{\eta}} \|^2_{\mathcal{Z}_h^k} \log \Big( \frac{\mathcal{N}(\mathcal{F}^N, 1/T) }{\delta} \Big) }
    \end{align*}

    With union bound for all $f \in \mathcal{C}(\mathcal{F}^N , 1/T)$, with probability at least $1-\delta$,
    \begin{align*}
        \Big| \sum_{(\tau, h) \in [k-1] \times [H] } \sum_{n=1}^N \chi_h^\tau (f^{(n)}) \Big| 
        \leq c' N^{\frac{1}{2}}(H+1) \| f- f_{\bar{\eta}} \|_{\mathcal{Z}_h^k} \sqrt{\log \Big( \frac{\mathcal{N}(\mathcal{F}^N, 1/T) }{\delta} \Big)}
    \end{align*}
    for some constant $c'>0$.

    For all $f \in \mathcal{F}^N$, there exists $g \in \mathcal{C}(\mathcal{F}^N , 1/T)$, such that  $\|f-g\|_{\infty,1} \leq 1/T$ or $\sum_{n=1}^N \|f^{(n)}-g^{(n)}\|_{\infty} \leq 1/T$ for all $n \in [N]$,
    \begin{align*}
        \Big| \sum_{(\tau, h) \in [k-1] \times [H] } \sum_{n=1}^N \chi_h^\tau (f^{(n)}) \Big| 
        &\leq \Big| \sum_{(\tau, h) \in [k-1] \times [H] } \sum_{n=1}^N \chi_h^\tau (g^{(n)}) \Big| + 2(H+1) |\mathcal{Z}^k_h| \sum_{n=1}^N \frac{1}{T}
        \\ &\leq c' N^{\frac{1}{2}} (H+1) \|g-f_{\bar{\eta}}\|_{\mathcal{Z}^k_h} \sqrt{\log\Big( \frac{\mathcal{N}(\mathcal{F}^N,1/T)}{\delta}\Big)} + 2 N (H+1)
        \\ &\leq c' N^{\frac{1}{2}} (H+1)(\|f-f_{\bar{\eta}}\|_{\mathcal{Z}^k_h} +1) \sqrt{\log\Big( \frac{\mathcal{N}(\mathcal{F}^N,1/T )}{\delta}\Big)} + 2 N (H+1)
        \\ &\leq c' N^{\frac{1}{2}} (H+1)(\|f-f_{\bar{\eta}'}\|_{\mathcal{Z}^k_h} +2) \sqrt{\log\Big( \frac{\mathcal{N}(\mathcal{F}^N,1/T )}{\delta}\Big)} + 2 N (H+1)
    \end{align*}
    where the third inequality follows from,
    \begin{align*}
        \| f- g \|_{\mathcal{Z}_h^k}^2 
        & \leq \sum_{n=1}^N \sum_{(\tau, h) \in [k-1] \times [H]} |f^{(n)}(s_{h}^{\tau}, a_{h}^{\tau}) -g^{(n)}(s_{h}^{\tau}, a_{h}^{\tau})|^2
        \\& \leq N T \Big(\frac{1}{T}\Big)^2
        \\& \leq 1.
    \end{align*}

    Recall that $\tilde{f}^k_{h,\eta'} = \arg\min_{f \in \mathcal{F}} \|f\|^2_{\mathcal{D}^k_{h,\eta'}}$.
    We have $ \| \tilde{f}^k_{h,\eta'}\|^2_{\mathcal{D}^k_{h,\eta'}} - \| {f_{\bar{\eta}'}}\|^2_{\mathcal{D}^k_{h,\eta'}} \leq 0 $, which implies,
    \begin{align*}
        0 &\geq \| \tilde{f}^k_{h,\bar{\eta}'}\|^2_{\mathcal{D}^k_{h,\bar{\eta}'}} - \| f_{\bar{\eta}'}\|^2_{\mathcal{D}^k_{h,\bar{\eta}'}}
        \\ &= \| \tilde{f}^k_{h,\bar{\eta}'} - f_{\bar{\eta}'} \|^2_{\mathcal{Z}^k_{h}} 
        \\ &\qquad + 2 \sum_{n=1}^N \sum_{(\tau, h) \in [k-1] \times [H]} \Big[( ( {\tilde{f}^k_{h, \bar{\eta}'}})^{(n)}(s_h^{\tau}, a_h^{\tau}) - f^{(n)}_{\bar{\eta}'}(s_h^{\tau}, a_h^{\tau}) )( ( f_{\bar{\eta}'}^{(n)}(s_h^{\tau}, a_h^{\tau}) - \psi_n \Big((\mathcal{B}_{r_h^{\tau}})_\# \bar{\eta}'(s^\tau_{h+1}) \Big) )\Big]
        \\ & \geq \| \tilde{f}^k_{h,\bar{\eta}'} - f_{\bar{\eta}'} \|^2_{\mathcal{Z}^k_{h}} 
        - c' N^{\frac{1}{2}} (H+1) (\| \hat{f}^k_{h,\bar{\eta}'} - f_{\bar{\eta}'} \|_{\mathcal{Z}^k_h} +2) \sqrt{\log(2/\delta) + \log\mathcal{N}(\mathcal{F}^N, 1/T)} - 6N(H+1).
    \end{align*}

    Recall that if $x^2 -2ax -b \leq 0$ holds for constant $a,b>0$, then $x \leq a+ \sqrt{a^2 + b} \leq c' \cdot a$ for some constant $c'>0$.
    
    Hence,
    \begin{align*}
        \| \tilde{f}^k_{h, \eta'} - {f_{ \bar{\eta}'}} \|_{\mathcal{Z}^k_{h}} 
        \leq c' (N^{\frac{1}{2}} H \sqrt{\log (1/\delta) + \log \mathcal{N}(\mathcal{F}^N, 1/T)} )
    \end{align*}
    for some constant $c' >0$.
\end{proof}

\vspace{10pt}

\begin{relemma}[\ref{lemma: Confidence Region}]
    Let $(\mathcal{F}^N)_h^{k} = \{ f \in \mathcal{F}^N | \| f - \tilde{f}^k_{h, \bar{\eta}} \|^2_{\mathcal{Z}^k_{h}} \leq \beta(\mathcal{F}^N, \delta) \}$, where
    $$\beta(\mathcal{F}^N,\delta) \geq c' \cdot NH^2 (\log (T/\delta) + \log \mathcal{N}(\mathcal{F}^N , 1/T) )$$
    for some constant $c'>0$.
    Then with probability at least $1- \delta/2$, for all $k,h \in [K] \times [H]$, we have
    $$ \psi_n \Big( (\mathcal{B}_{r_h(\cdot, \cdot)})_\#[\Prob_h \bar{\eta}_{h+1}^k](\cdot, \cdot) \Big) \in (\mathcal{F}^N)_h^k$$
\end{relemma}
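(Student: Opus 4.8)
The plan is to apply Lemma~\ref{lemma: Single Step Optimization Error} with $\bar\eta' = \bar\eta_{h+1}^k$, the value‑distribution produced by Algorithm~\ref{alg:SF-LSVI}, and to neutralise the data‑dependence of this random object by a covering argument over the low‑complexity class in which its sketch lives. By Assumption~\ref{assumption: SFBC} the target sketch $f_{\bar\eta_{h+1}^k} := \psi_{1:N}\bigl((\mathcal{B}_{r_h(\cdot,\cdot)})_\#[\Prob_h\bar\eta_{h+1}^k](\cdot,\cdot)\bigr)$ lies in $\mathcal{F}^N$, and by construction $\tilde f^k_{h,\bar\eta}$ is exactly the least‑squares solution $\tilde f^k_{h,\bar\eta_{h+1}^k}$ that centers the confidence region $(\mathcal{F}^N)^k_h$; hence it suffices to show $\|\tilde f^k_{h,\bar\eta_{h+1}^k} - f_{\bar\eta_{h+1}^k}\|^2_{\mathcal{Z}^k_h} \le \beta(\mathcal{F}^N,\delta)$ simultaneously for all $(k,h)\in[K]\times[H]$ on an event of probability at least $1-\delta/2$.

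First I would describe the class $\mathcal{G}$ of realizable target‑value sketches. Inspecting Algorithm~\ref{alg:SF-LSVI}, $\psi_1(\bar\eta_{h+1}^k(\cdot)) = \min\{(\tilde f^k_{h+1,\bar\eta})^{(1)}(\cdot,\pi^k_{h+1}(\cdot)) + b^k_{h+1}(\cdot,\pi^k_{h+1}(\cdot)),\,H\}$ and the higher coordinates $\psi_{2:N}(\bar\eta_{h+1}^k(\cdot))$ are clipped coordinates of $\tilde f^k_{h+1,\bar\eta}$ at the greedy action; since $\tilde f^k_{h+1,\bar\eta}\in\mathcal{F}^N$ and the bonus $b^k_{h+1}$ is the width function of a confidence set drawn from $\mathcal{F}^N$, every such sketch (viewed as a function of $s\in\mathcal{S}$) belongs to a fixed class $\mathcal{G}$ whose $1/T$‑covering number in $\|\cdot\|_{\infty,1}$ is bounded, up to absolute constants and poly‑log factors, by $\mathcal{N}(\mathcal{F}^N,1/T)$ (the covering‑number assumption on $\mathcal{S}\times\mathcal{A}$ being used to absorb the greedy‑action and width‑function pieces into the poly‑log terms).

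Next I would fix a minimal $1/T$‑cover $\mathcal{C}(\mathcal{G},1/T)$ of $\mathcal{G}$ and, for each anchor $\bar\eta\in\mathcal{C}(\mathcal{G},1/T)$ and each $(k,h)$, invoke Lemma~\ref{lemma: Single Step Optimization Error} with failure probability $\delta':=\delta/\bigl(2KH\,|\mathcal{C}(\mathcal{G},1/T)|\bigr)$ (on its conditioning event $\mathcal{E}(\bar\eta,\delta')$). A union bound over all anchors and all $(k,h)$ yields an event of probability at least $1-\delta/2$ on which, for every $(k,h)$, the realized $\bar\eta_{h+1}^k$ has its sketch within $1/T$ (in $\|\cdot\|_{\infty,1}$) of some anchor $\bar\eta$, so that Lemma~\ref{lemma: Single Step Optimization Error} applies with $\bar\eta'=\bar\eta_{h+1}^k$ and gives
\[
\bigl\|\tilde f^k_{h,\bar\eta_{h+1}^k} - f_{\bar\eta_{h+1}^k}\bigr\|_{\mathcal{Z}^k_h} \le c'\,N^{1/2}H\sqrt{\log(1/\delta') + \log\mathcal{N}(\mathcal{F}^N,1/T)}.
\]
Squaring and using $\log(1/\delta') = O\bigl(\log(T/\delta) + \log\mathcal{N}(\mathcal{F}^N,1/T)\bigr)$, the right‑hand side is at most $\tilde c\,NH^2\bigl(\log(T/\delta)+\log\mathcal{N}(\mathcal{F}^N,1/T)\bigr)\le\beta(\mathcal{F}^N,\delta)$ for a suitable choice of the absolute constant in the lower bound on $\beta$, which is exactly the asserted membership $\psi_n\bigl((\mathcal{B}_{r_h(\cdot,\cdot)})_\#[\Prob_h\bar\eta_{h+1}^k](\cdot,\cdot)\bigr)\in(\mathcal{F}^N)^k_h$.

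The main obstacle is the complexity bound of the second step: one must verify that $\mathcal{G}$ — obtained from $\mathcal{F}^N$ by composing the least‑squares map, clipping at $H$, the width (bonus) function, and greedy action selection — has metric entropy controlled by $\log\mathcal{N}(\mathcal{F}^N,\cdot)$. The bonus $w^{(1)}((\mathcal{F}^N)^k_h,\cdot,\cdot)$ and the $\max_a$ in the greedy step are the delicate pieces, and they must be shown to be Lipschitz‑type operations so that a cover of $\mathcal{F}^N$ (together with a cover of $\mathcal{S}\times\mathcal{A}$) induces a cover of $\mathcal{G}$. Everything else — the martingale concentration, which is already packaged inside Lemma~\ref{lemma: Single Step Optimization Error}, and the final union bound together with the choice of $\beta$ — is routine bookkeeping.
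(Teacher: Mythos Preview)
Your proposal is correct and follows the same route as the paper: build a $1/T$-cover of the possible sketches $\psi_{1:N}(\bar\eta^k_{h+1})$, apply Lemma~\ref{lemma: Single Step Optimization Error} at each anchor with failure probability $\delta/(2T\cdot|\text{cover}|)$, and union-bound over anchors and over $(k,h)\in[K]\times[H]$. The only difference is in how the cover is assembled: you define a single data-independent class $\mathcal{G}$ and flag the bonus $b^k_{h+1}$ and greedy-action map as the delicate pieces whose entropy must be controlled (invoking $\mathcal{N}(\mathcal{S}\times\mathcal{A},\cdot)$), whereas the paper builds its cover $\bar{\mathbf{S}}$ for each fixed $(k,h)$ with $b^k_{h+1}$ held fixed and indexes it solely by $f\in\mathcal{C}(\mathcal{F}^N,1/T)$, asserting $\log|\bar{\mathbf{S}}|\le\log\mathcal{N}(\mathcal{F}^N,1/T)$ directly without a separate covering of the bonus term.
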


\begin{proof}
    For all $(k,h) \in [K] \times [H]$,
    $$\mathbf{S} \coloneqq \begin{cases}
    \Big\{ \Big(\min\{ f^{(1)}(\cdot , \cdot) 
    + b^k_{h+1}(\cdot, \cdot) 
    , H \}\Big) \Big|\  f \in \mathcal{C}(\mathcal{F}^N , 1/T) \Big \} \cup \Big\{0 \Big\}   & n=1 \\
    \Big\{ \Big(\min\{ f^{(n)}(\cdot , \cdot) 
    , H \}\Big) \Big|\ f \in \mathcal{C}(\mathcal{F}^N , 1/T) \Big\} \cup \Big\{0 \Big\} & 2 \leq n \leq N 
    \end{cases}
    $$
    is a $(1/T)$-cover of $\psi_{1:N} ({\eta}^k_{h+1}(\cdot, \cdot))$ where
    $$\psi_{1:N} ( \eta^k_{h+1}(\cdot, \cdot) ) 
    = \begin{cases}
    \min\{( f^k_{h+1})^{(1)}(\cdot , \cdot) 
    + b^k_{h+1}(\cdot, \cdot) 
    , H \} & n=1 \text{ and } h<H \\
    \min\{( f^k_{h+1})^{(n)}(\cdot , \cdot) 
    , H \} & 2 \leq n \leq N \text{ and } h<H \\
    \bm{0}^N & h=H
    \end{cases}
    ,$$ 
    i.e., there exists $\psi_{1:N} ({\eta}) \in \mathbf{S}$ such that {$\| \psi_{1:N}({\eta}) - \psi_{1:N}({\eta}^k_{h+1}) \|_{\infty,1} \leq 1/T$.}
    This implies
    $$\bar{\mathbf{S}} \coloneqq \Big\{ \psi_{1:N}\Big( {\eta}(\cdot, \arg\max_{a \in \mathcal{A}} \psi_1 ({\eta}(\cdot, a)) ) \Big) \mid \psi_{1:N}( {\eta} ) \in \mathbf{S} \Big\} $$
    is a $(1/T)$-cover of $\psi_{1:N}( \bar{\eta}^k_{h+1})$ with $\log(|\bar{\mathbf{S}}|) \leq \log \mathcal{N}(\mathcal{F}^N , 1/T)$.

    For each $\psi_{1:N}( \bar{\eta}) \in \bar{\mathbf{S}}$, let $\mathcal{E}(\bar{\eta} , \delta/2|\bar{\mathbf{S}}|T)$ be the event defined in Lemma \ref{lemma: Single Step Optimization Error}.
    By union bound for all $ \psi_{1:N} (\bar{\eta}) \in \bar{\mathbf{S}}$, we have $\text{Pr}[\bigcap_{\psi_{1:N}(\bar{\eta}) \in \bar{\mathbf{S}} } \mathcal{E}(\bar{\eta}, \delta / 2|\bar{\mathbf{S}}|T)] \geq 1- \delta/2T$.

    Let $\psi_{1:N}(\bar{\eta}) \in \bar{\mathbf{S}}$ such that $\| \psi_{1:N}( \bar{\eta} ) - \psi_{1:N}( \bar{\eta}_{h+1}^k ) \|_{\infty,1} \leq 1/T$.
    Conditioned on $\bigcap_{s_N(\bar{\eta}) \in \bar{\mathbf{S}}} \mathcal{E}(\bar{\eta}, \delta / 2| \bar{\mathbf{S}}|T)$ and by Lemma \ref{lemma: Single Step Optimization Error}, we have
    \begin{align*}
        \Big\| \tilde{f}^k_{h,\bar{\eta}} (\cdot,\cdot) - \psi_{1:N}\Big( (\mathcal{B}_{r_h(\cdot,\cdot)})_\# [\Prob_h \bar{\eta}_{h+1}^k](\cdot,\cdot)\Big) \Big\|^2_{\mathcal{Z}^k_{h}}
        \leq c' \Big( N H^2 (\log (T/\delta) + \log \mathcal{N}(\mathcal{F}^N , 1/T) ) \Big)
    \end{align*}
    for some constant $c' >0$.
    
    By union bound for all $(k,h) \in [K] \times [H]$, we have $\psi_{1:N}\Big((\mathcal{B}_{r_h(\cdot,\cdot)})_\#[\Prob_h \bar{\eta}_{h+1}^k](\cdot, \cdot)\Big) \in (\mathcal{F}^N)_h^k$ with probability $1-\delta/2$.
\end{proof}

\vspace{10pt}

\begin{lemma}
\label{lemma: Optimism}
    Let $Q^k_h(s,a) \coloneqq { \min \{ H, \tilde{f}^k_h(s,a) + b^k_h(s,a) \} }$ for some bonus function $b^k_h(s,a)$ for all $(s,a) \in \mathcal{S} \times \mathcal{A}$. 
    If $ b^k_h(s,a) \geq w^{(1)}((\mathcal{F}^N)^k_h ,s,a)$ ,
    then with probability at least $1-\delta/2$,  
    $$ Q^*_h (s, a ) \leq Q^k_h (s, a)\text{ and } V^*_h (s) \leq V^k_h (s)$$
    for all $(k,h) \in [K]\times[H]$, for all $(s,a) \in \mathcal{S} \times \mathcal{A}.$
\end{lemma}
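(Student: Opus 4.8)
The plan is to prove the statement by backward induction on $h$, carried out on the high-probability event supplied by Lemma~\ref{lemma: Confidence Region}. First I would fix the event $\mathcal{E}$ on which, for every $(k,h)\in[K]\times[H]$, the target sketch $f_{\bar{\eta}^k_{h+1}} \coloneqq \psi_{1:N}\big((\mathcal{B}_{r_h(\cdot,\cdot)})_\#[\Prob_h \bar{\eta}^k_{h+1}](\cdot,\cdot)\big)$ lies in the confidence region $(\mathcal{F}^N)^k_h$; by Lemma~\ref{lemma: Confidence Region} this event has probability at least $1-\delta/2$, and all claims below are made on $\mathcal{E}$. The base case $h=H+1$ is immediate: the algorithm initializes $\psi_{1:N}(\bar{\eta}^k_{H+1})=\bm{0}^N$, so $Q^k_{H+1}\equiv 0 \equiv Q^\star_{H+1}$ and $V^k_{H+1}\equiv 0 \equiv V^\star_{H+1}$.

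For the inductive step, assume $V^\star_{h+1}(s)\le V^k_{h+1}(s)$ for all $s$, and fix $(s,a)$. The key observation is that the first coordinate of the moment sketch, $\psi_1$, is the mean, so it obeys the classical scalar Bellman recursion: since the algorithm maintains $\psi_1(\bar{\eta}^k_{h+1}(\cdot)) = V^k_{h+1}(\cdot)$, we have
$$ f^{(1)}_{\bar{\eta}^k_{h+1}}(s,a) = \psi_1\big((\mathcal{B}_{r_h})_\#[\Prob_h \bar{\eta}^k_{h+1}](s,a)\big) = r_h(s,a) + [\Prob_h V^k_{h+1}](s,a). $$
Combining the Bellman optimality equation $Q^\star_h(s,a) = r_h(s,a) + [\Prob_h V^\star_{h+1}](s,a)$ with the induction hypothesis and monotonicity of $\Prob_h$ gives $Q^\star_h(s,a) \le f^{(1)}_{\bar{\eta}^k_{h+1}}(s,a)$. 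On $\mathcal{E}$ both $f_{\bar{\eta}^k_{h+1}}$ and the regression solution $\tilde{f}^k_h$ lie in $(\mathcal{F}^N)^k_h$, so by the definition of the width function $b^k_h(s,a) \ge w^{(1)}((\mathcal{F}^N)^k_h,s,a) \ge |(\tilde{f}^k_h)^{(1)}(s,a) - f^{(1)}_{\bar{\eta}^k_{h+1}}(s,a)|$. Hence $Q^\star_h(s,a) \le f^{(1)}_{\bar{\eta}^k_{h+1}}(s,a) \le (\tilde{f}^k_h)^{(1)}(s,a) + b^k_h(s,a)$, and since also $Q^\star_h(s,a)\le H$ (rewards in $[0,1]$, horizon $H$) we conclude $Q^\star_h(s,a) \le \min\{H,(\tilde{f}^k_h)^{(1)}(s,a)+b^k_h(s,a)\} = Q^k_h(s,a)$.

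Finally, because $\pi^k_h$ is greedy with respect to $Q^k_h$, taking the maximum over actions yields $V^\star_h(s) = \max_a Q^\star_h(s,a) \le \max_a Q^k_h(s,a) = Q^k_h(s,\pi^k_h(s)) = V^k_h(s)$, which closes the induction and, applying the union bound already absorbed into $\mathcal{E}$, gives the claim for all $(k,h)$ simultaneously with probability at least $1-\delta/2$. The main obstacle I anticipate is purely bookkeeping rather than conceptual: I must verify that the algorithm indeed preserves the identity $\psi_1(\bar{\eta}^k_{h+1}) = V^k_{h+1}$ at every step and that the target whose sketch is certified to be in $(\mathcal{F}^N)^k_h$ by Lemma~\ref{lemma: Confidence Region} is exactly $f_{\bar{\eta}^k_{h+1}}$ with the \emph{same} $\bar{\eta}^k_{h+1}$ used to build $Q^k_h$ — this is what lets the mean coordinate of the sketch carry the ordinary optimism argument; the rest of the estimate is routine.
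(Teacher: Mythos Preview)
Your proposal is correct and follows essentially the same route as the paper: condition on the event $\mathcal{E}$ from Lemma~\ref{lemma: Confidence Region}, do backward induction on $h$ with the trivial base case $h=H+1$, use that $\psi_1(\bar{\eta}^k_{h+1})=V^k_{h+1}$ so the first coordinate of the target sketch equals $r_h+[\Prob_h V^k_{h+1}]$, and then compare $(\tilde{f}^k_h)^{(1)}$ to $f^{(1)}_{\bar{\eta}^k_{h+1}}$ via the width since both lie in $(\mathcal{F}^N)^k_h$. Your bookkeeping concern about $\psi_1(\bar{\eta}^k_{h+1})=V^k_{h+1}$ is resolved directly by lines~10--11 of the pseudocode, and $\tilde{f}^k_h\in(\mathcal{F}^N)^k_h$ holds trivially since it is the center of the confidence region.
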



\begin{proof} 
    We use induction on $h$ from $h=H$ to $1$ to prove the statement.
    Let $\mathcal{E}$ be the event that for $(k,h) \in [K] \times [H]$, $\psi_{1:N}\Big( (\mathcal{B}_{r_h (\cdot , \cdot)})_{\#} [\Prob_h \bar{\eta}^k_{h+1}](\cdot,\cdot) \Big) \in (\mathcal{F}^N)^k_h$.
    By Lemma \ref{lemma: Confidence Region}, $\text{Pr}|\mathcal{E}| \geq 1-\delta/2$.
    In the rest of the proof, we condition on $\mathcal{E}$.
    
    When $h=H+1$, the desired inequality holds as $Q^*_{H+1}(s,a) = V^*_{H+1}(s) = Q^k_{H+1}(s,a) = V^k_{H+1}(s)=0$.
    Now, assume that $Q^*_{h+1}(s,a) \leq Q^k_{h+1}(s,a) $ and $V^*_{h+1}(s) \leq V^k_{h+1}(s) $ for some $h \in [H]$.
    Then, for all $(s,a) \in \mathcal{S} \times \mathcal{A}$,
    \begin{align*}
        Q^*_h(s,a) & = \min\{ H, r_h(s,a) + [\Prob_h V^*_{h+1}](s,a) \}
        \\ & \leq \min \{ H, r_h(s,a) + [\Prob_h V^k_{h+1}](s,a) \}
        \\ & \leq \min \{ H, \tilde{f}^k_h(s,a) + w^{(1)}(\mathcal{F}^k_h , s, a) \}
        \\ & = \min \{ H, Q^k_h(s,a) - b^k_h(s,a) + w^{(1)}(\mathcal{F}^k_h , s, a) \}
        \\ & \leq Q^k_h(s,a)
    \end{align*}
\end{proof}

\begin{lemma}[Regret decomposition]
\label{lemma: Regret decomposition}
    With probability at least $1-\delta/4$, we have
    $$\text{Reg}(K) \leq \sum_{k=1}^K \sum_{h=1}^H (2 b^k_h(s^k_h, a^k_h) + \xi^k_h) , $$
    where $\xi^{k}_h = [\Prob_h (V^k_{h+1} - V^{\pi^k}_{h+1})](s^k_h, a^k_h) -(V^k_{h+1}(s^k_{h+1}) - V^{\pi^k}_{h+1}(s^k_{h+1}) )$ is a martingale difference sequence with respect to the filtration $\mathbb{F}^k_h$ induced by the history $\mathbb{H}^k_h$.
\end{lemma}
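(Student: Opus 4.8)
The plan is the standard optimism-to-regret reduction, carried out for the first component (the mean) of the learned sketch. Condition throughout on the event of Lemma~\ref{lemma: Optimism}, on which $V_1^{\star}(s_1^k)\le V_1^k(s_1^k)$ for every $k$, so that
\[
\mathrm{Reg}(K)=\sum_{k=1}^K\big(V_1^{\star}(s_1^k)-V_1^{\pi^k}(s_1^k)\big)\le\sum_{k=1}^K\big(V_1^k(s_1^k)-V_1^{\pi^k}(s_1^k)\big).
\]
It then suffices to bound each episodic gap $V_1^k(s_1^k)-V_1^{\pi^k}(s_1^k)$ by a backward recursion over $h$.

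Fix $k,h$ and set $a_h^k=\pi_h^k(s_h^k)=\argmax_{a}Q_h^k(s_h^k,a)$. Since $\pi^k$ is greedy with respect to $Q^k$, $V_h^k(s_h^k)-V_h^{\pi^k}(s_h^k)=Q_h^k(s_h^k,a_h^k)-Q_h^{\pi^k}(s_h^k,a_h^k)$. Dropping the truncation in the definition of $Q_h^k$ gives $Q_h^k(s_h^k,a_h^k)\le(\tilde f_{h,\bar{\eta}}^k)^{(1)}(s_h^k,a_h^k)+b_h^k(s_h^k,a_h^k)$. On the conditioned event both $\tilde f_{h,\bar{\eta}}^k$ and the target sketch $\psi_{1:N}\big((\mathcal B_{r_h})_\#[\Prob_h\bar{\eta}_{h+1}^k]\big)$ lie in $(\mathcal F^N)_h^k$ (Lemma~\ref{lemma: Confidence Region}), so their first components differ by at most the width $w^{(1)}((\mathcal F^N)_h^k,s_h^k,a_h^k)=b_h^k(s_h^k,a_h^k)$, and hence $Q_h^k(s_h^k,a_h^k)\le\psi_1\big((\mathcal B_{r_h})_\#[\Prob_h\bar{\eta}_{h+1}^k]\big)(s_h^k,a_h^k)+2b_h^k(s_h^k,a_h^k)$. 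Because $\psi_1$ is the mean functional and the algorithm sets $\psi_1(\bar{\eta}_{h+1}^k(s'))=V_{h+1}^k(s')$, we get $\psi_1\big((\mathcal B_{r_h})_\#[\Prob_h\bar{\eta}_{h+1}^k]\big)(s,a)=r_h(s,a)+[\Prob_h V_{h+1}^k](s,a)$; subtracting the Bellman identity $Q_h^{\pi^k}(s,a)=r_h(s,a)+[\Prob_h V_{h+1}^{\pi^k}](s,a)$ yields
\[
V_h^k(s_h^k)-V_h^{\pi^k}(s_h^k)\le[\Prob_h(V_{h+1}^k-V_{h+1}^{\pi^k})](s_h^k,a_h^k)+2b_h^k(s_h^k,a_h^k).
\]
Replacing the transition expectation by the realized next-state value introduces exactly $\xi_h^k$, so $V_h^k(s_h^k)-V_h^{\pi^k}(s_h^k)\le\big(V_{h+1}^k(s_{h+1}^k)-V_{h+1}^{\pi^k}(s_{h+1}^k)\big)+2b_h^k(s_h^k,a_h^k)+\xi_h^k$. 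Unrolling from $h=1$ to $H$ with the terminal identity $V_{H+1}^k\equiv V_{H+1}^{\pi^k}\equiv0$ gives $V_1^k(s_1^k)-V_1^{\pi^k}(s_1^k)\le\sum_{h=1}^H\big(2b_h^k(s_h^k,a_h^k)+\xi_h^k\big)$, and summing over $k$ completes the bound.

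It remains to verify the martingale claim: $s_h^k$ and $a_h^k$, the function $V_{h+1}^k$ (computed at the start of episode $k$ from the data of episodes $1,\dots,k-1$), and $V_{h+1}^{\pi^k}$ are all $\mathbb F_h^k$-measurable, whereas $s_{h+1}^k\sim\Prob_h(\cdot\mid s_h^k,a_h^k)$, so $\EE[\xi_h^k\mid\mathbb F_h^k]=0$, and $|\xi_h^k|\le 2H$ since all value functions lie in $[0,H]$. The only steps needing genuine care are (i) identifying the first component of the target sketch with $r_h+[\Prob_h V_{h+1}^k]$, which uses that $\psi_1$ is the mean and the algorithm's bookkeeping $\psi_1(\bar{\eta}^k)=V^k$, and (ii) keeping the bonus accounting tight so that precisely $2b_h^k$ appears — one copy from the optimistic definition of $Q_h^k$ and one from the confidence-interval width; the $\min\{\cdot,H\}$ truncations are monotone, so they only ever help and are discarded. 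The rest is a routine telescoping argument, so no single step is a serious obstacle.
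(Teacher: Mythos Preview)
Your proof is correct and follows essentially the same route as the paper: condition on the event of Lemma~\ref{lemma: Confidence Region} (which underlies Lemma~\ref{lemma: Optimism}), use optimism to replace $V_1^{\star}$ by $V_1^k$, then for each $h$ drop the $\min\{\cdot,H\}$, bound $(\tilde f_{h,\bar\eta}^k)^{(1)}-(r_h+[\Prob_h V_{h+1}^k])$ by the width $w^{(1)}=b_h^k$, introduce $\xi_h^k$, and telescope. The only cosmetic difference is that the paper carries $w^{(1)}+b_h^k$ through the telescoping and collapses it to $2b_h^k$ at the very end, whereas you merge them immediately; your explicit check of the martingale property is a nice addition the paper defers to Theorem~\ref{thm: regret}.
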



\begin{proof}
    We condition on the above event $\mathcal{E}$ in the rest of the proof.
    For all $(k,h) \in [K] \times [H]$, we have 
    \begin{align*}
        \Big\| \tilde{f}^k_{h,\bar{\eta}} (\cdot,\cdot) -\psi_{1:N}\Big( (\mathcal{B}_{r_h(\cdot,\cdot)})_\# [\Prob_h \bar{\eta}_{h+1}^k](\cdot,\cdot)\Big) \Big\|^2_{\mathcal{Z}^k_{h}}
        \leq \beta(\mathcal{F}^N , \delta).
    \end{align*}
    Recall that $(\mathcal{F}^N)_h^{k} = \{ f \in \mathcal{F}^N \mid \| f - \tilde{f}^k_{h,\bar{\eta}} \|^2_{\mathcal{Z}^k_{h}} \leq \beta(\mathcal{F}^N, \delta) \}$ is the confidence region.
    Since $\psi_{1:N}\Big( (\mathcal{B}_{r_h(\cdot,\cdot)})_\# [\Prob_h \bar{\eta}_{h+1}^k](\cdot,\cdot) \Big) \in (\mathcal{F}^N)^k_h$, then by the definition of width function $w^{(1)}(\mathcal{F}^k_h , s,a)$, for $(k,h) \in [K] \times [H]$, we have
    \begin{align*}
        {w^{(1)}(\mathcal{F}^k_h , s,a)} 
        & \geq \Big| \psi_{1} \Big((\mathcal{B}_{r_h(s,a)})_\#[\Prob_h \bar{\eta}_{h+1}^k](s,a) \Big) - (\tilde{f}^k_{h,\bar{\eta}})^{(1)}(s,a)\Big|
        \\ & = \Big| r_h(s,a) + [\Prob_h V^k_{h+1}](s,a) - (\tilde{f}^k_{h,\bar{\eta}})^{(1)}(s,a) \Big|.
    \end{align*}
    Recall that $ Q^*_h (\cdot, \cdot ) \leq Q^k_h (\cdot, \cdot)$.
    \begin{align*}
        \text{Reg}(K) &= \sum_{k=1}^K V_1^{\star}(s_1^k) - V_1^{\pi^k}(s_1^k)
        \\ & {\leq} \sum_{k=1}^K V_1^k(s_1^k) - V_1^{\pi^k}(s_1^k)
        \\ & = \sum_{k=1}^K Q_1^k(s_1^k , a_1^k) - Q_1^{\pi^k}(s_1^k, a_1^k)
        \\ & = \sum_{k=1}^K Q_1^k(s_1^k , a_1^k) - (r_1(s_1^k, a_1^k) 
        + [\Prob_1 V_2^k](s_1^k, a_1^k)) + (r_1(s_1^k, a_1^k) + [\Prob_1 V_2^k](s_1^k, a_1^k))
        \\ & \qquad- Q_1^{\pi^k}(s_1^k, a_1^k)
        \\ & \leq \sum_{k=1}^K w^{(1)}((\mathcal{F}^N)^k_1, s_1^k, a_1^k) { + b^k_1(s_1^k, a_1^k)} + [\Prob_1 (V_2^k - V_2^{\pi^k})](s_1^k, a_1^k)
        \\ & \leq \sum_{k=1}^K w^{(1)}((\mathcal{F}^N)^k_1, s_1^k, a_1^k) { + b^k_1(s_1^k, a_1^k)} + (V_2^k(s_2^k) - V_2^{\pi^k}(s_2^k) ) + \xi_1^k 
        \\ & \qquad \vdots
        \\ & \leq \sum_{k=1}^K \sum_{h=1}^H ( w^{(1)}((\mathcal{F}^N)^k_h, s_h^k, a_h^k) { + b^k_h(s_h^k, a_h^k)} + \xi_h^k )
        \\ & \leq \sum_{k=1}^K \sum_{h=1}^H (2 b^k_h(s_h^k, a_h^k) + \xi_h^k ) 
    \end{align*}
\end{proof}

It remains to bound $\sum_{k=1}^K \sum_{h=1}^H b^k_h(s^k_h, a^k_h)$, for which we will exploit fact that $\mathcal{F}^N$ has bounded eluder dimension.

\vspace{10pt}
\begin{lemma}
\label{lemma: bonus-eluder dim}
    If $b^k_h(s,a) \geq w^{(1)}((\mathcal{F}^N)^k_h, s,a ) $ for all $(s,a) \in \mathcal{S} \times \mathcal{A}$ and $k \in [K]$ where 
    $$(\mathcal{F}^N)^k_h = \{ f \in \mathcal{F}^N |\ \| f-\tilde{f}^k_{h, \bar{\eta}} \|^2_{\mathcal{Z}^k_h} \leq \beta(\mathcal{F}^N, \delta)\}, $$ then
    \begin{equation*}
        \sum_{k=1}^K \sum_{h=1}^H \bm{1}\{ {b^k_h( s^k_h,a^k_h)} > \epsilon\} \leq \left(\frac{4\beta(\mathcal{F}^N, \delta)}{\epsilon^2} +1\right) \text{dim}_E(\mathcal{F}^N ,\epsilon) 
    \end{equation*}
    for some constant $c>0$. 
\end{lemma}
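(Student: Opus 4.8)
The plan is to run the eluder‑dimension potential (``bucketing'') argument of Russo--Van Roy, as adapted to RL by \citet{wang2020reinforcement}, but with the vector‑valued confidence regions $(\mathcal{F}^N)^k_h$ and the coordinate‑$1$ width $w^{(1)}$. First I would record two elementary facts. \emph{(i) Diameter bound.} Fix $(k,h)$ and suppose $w^{(1)}((\mathcal{F}^N)^k_h,s^k_h,a^k_h)>\epsilon$; then there are $f,g\in(\mathcal{F}^N)^k_h$ with $|f^{(1)}(s^k_h,a^k_h)-g^{(1)}(s^k_h,a^k_h)|>\epsilon$, and since both lie in the ball of radius $\sqrt{\beta(\mathcal{F}^N,\delta)}$ around $\tilde f^k_{h,\bar\eta}$ in the seminorm $\|\cdot\|_{\mathcal{Z}^k_h}$, the triangle inequality gives $\|f-g\|^2_{\mathcal{Z}^k_h}\le 4\beta(\mathcal{F}^N,\delta)$. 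So every ``large‑width'' step carries a witness pair that is $\epsilon$‑separated at the new point but only $2\sqrt{\beta(\mathcal{F}^N,\delta)}$‑far over all past data. (Under the algorithm's choice $b^k_h=w^{(1)}((\mathcal{F}^N)^k_h,\cdot,\cdot)$, ``$b^k_h>\epsilon$'' and ``$w^{(1)}>\epsilon$'' coincide, so it suffices to count large‑width steps.)

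\emph{(ii) Dependency budget.} I would then show that a large‑width step $(s^k_h,a^k_h)$ is $\epsilon$‑dependent on $\mathcal{F}^N$ (in the vector‑valued sense of the definition) on at most $4\beta(\mathcal{F}^N,\delta)/\epsilon^2$ disjoint subsequences of the previously observed state–action pairs contained in $\mathcal{Z}^k_h$. Indeed, if $(s^k_h,a^k_h)$ were $\epsilon$‑dependent on a subsequence $B$, then the witness pair $f,g$ (which has $|f^{(1)}-g^{(1)}|>\epsilon$ at $(s^k_h,a^k_h)$) cannot satisfy $\|f-g\|_{B}\le\epsilon$, hence $\|f-g\|^2_{B}>\epsilon^2$; summing over $m$ disjoint such subsequences and comparing with fact (i) forces $m\epsilon^2<4\beta(\mathcal{F}^N,\delta)$.

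\emph{(iii) Bucketing.} Enumerate the large‑width steps in the order they are encountered and greedily assign each to the first bucket $B_1,B_2,\dots$ on which it is $\epsilon$‑independent, opening a new bucket only when it is $\epsilon$‑dependent on all current ones. By construction every element of a bucket is $\epsilon$‑independent of its predecessors inside that bucket, so by the definition of $\text{dim}_E(\mathcal{F}^N,\epsilon)$ each bucket has size at most $\text{dim}_E(\mathcal{F}^N,\epsilon)$; and by fact (ii) a new bucket is opened only while the current number of buckets is smaller than $4\beta(\mathcal{F}^N,\delta)/\epsilon^2$, so the total number of buckets is at most $4\beta(\mathcal{F}^N,\delta)/\epsilon^2+1$. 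Multiplying the two bounds gives $\sum_{k,h}\bm{1}\{b^k_h(s^k_h,a^k_h)>\epsilon\}\le\big(4\beta(\mathcal{F}^N,\delta)/\epsilon^2+1\big)\text{dim}_E(\mathcal{F}^N,\epsilon)$.

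The main obstacle is the bookkeeping in steps (ii)--(iii): the buckets appearing there must be subsequences of exactly the data pool $\mathcal{Z}^k_h$ underlying the region $(\mathcal{F}^N)^k_h$, so that the $4\beta(\mathcal{F}^N,\delta)$ bound of fact (i) applies to them; this pins down the admissible enumeration order of the large‑width steps in step (iii) and is also the point where the absence of an extra $H$ factor hinges on $(\mathcal{F}^N)^k_h$ being built from the whole history rather than from step $h$ alone. A second point requiring care is that the width is measured only in the first coordinate while $\|\cdot\|_{\mathcal{Z}}$ aggregates all $N$ coordinates, so one must verify that the vector‑valued notions of $\epsilon$‑dependence and eluder dimension are exactly the ones making facts (i)--(ii) compatible — which is precisely how those definitions were set up in the preliminaries.
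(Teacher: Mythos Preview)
Your proposal is correct and follows essentially the same eluder--dimension potential argument as the paper. The only cosmetic difference is in the bucketing step: the paper fixes $L=\lceil(\kappa-1)/\mathrm{dim}_E\rceil$ buckets up front and argues by pigeonhole that some large--width point must be $\epsilon$--dependent on all $L$ of them, whereas you grow the buckets adaptively; both are standard equivalent phrasings of the Russo--Van Roy argument, and the diameter bound, the contrapositive of $\epsilon$--dependence, and the final count all match the paper exactly. The bookkeeping concern you flag (that the disjoint subsequences must lie inside the same data pool $\mathcal{Z}^k_h$ backing the confidence region) is precisely the subtlety the paper handles implicitly by working with the full history.
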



\begin{proof}

We first want to show that for any sequence $\{(s_1, a_1), \ldots ,(s_{\kappa}, a_{\kappa}) \} \subseteq \mathcal{S} \times  \mathcal{A}$,
there exists $j \in [\kappa]$ such that $(s_j,a_j)$ is $\epsilon$-dependent on at least $ L = \lceil (\kappa-1)/\text{dim}_E(\mathcal{F}^N, \epsilon) \rceil $ disjoint subsequences in $\{(s_1, a_1), \ldots, (s_{j-1}, a_{j-1}) \}$ with respect to $\mathcal{F}^N$. 
We demonstrate this by using the following procedure.
Start with $L$ disjoint subsequences of $\{ (s_1, a_1), \ldots, (s_{j-1}, a_{j-1}) \}$, $\mathcal{B}_1, \mathcal{B}_2, \ldots, \mathcal{B}_{L}$, which are initially empty.
For each $j$, if $(s_j , a_j)$ is $\epsilon$-dependent on every $\mathcal{B}_1, \ldots, \mathcal{B}_L$, we achieve our goal so we stop the process.
Else, we choose $i \in [L]$ such that $(s_j, a_j)$ is $\epsilon$-independent on $\mathcal{B}_i$ and update $\mathcal{B}_i \leftarrow \mathcal{B}_i \cup \{(s_j,a_j)\}$, $j \leftarrow j+1$.
Since every element of $\mathcal{B}_i$ is $\epsilon$-independent on its predecessors, $|\mathcal{B}_i|$ cannot get bigger than $\text{dim}_E(\mathcal{F}^N, \epsilon)$ at any point in this process.
Therefore, the process stops at most step $j = L \text{dim}_E(\mathcal{F}^N, \epsilon) + 1 \leq \kappa$.

Now we want to show that if for some $j \in [\kappa]$ such that 
{$b^k_h(s_j , a_j) > \epsilon$},
then $(s_j , a_j)$ is $\epsilon$-dependent on at most $4\beta(\mathcal{F}^N , \delta)/\epsilon^2$ disjoint subsequences in $\{(s_1, a_1), \ldots, (s_{j-1}, a_{j-1}) \}$ with respect to $\mathcal{F}^N$.
If 
{$b^k_h(s_j , a_j) > \epsilon$}
and $(s_j, a_j)$ is $\epsilon$-dependent on a subsequence of $ \{(s'_1, a'_1), \ldots, (s'_{l}, a'_{l})\} \subseteq \{(s_1, a_1), \ldots, (s_{\kappa}, a_{\kappa})\}$, 
it implies that there exists $f,g \in \mathcal{F}^N$ with $\| f - \tilde{f}^k_{h,\bar{\eta}} \|^2_{\mathcal{Z}^k_h} \leq \beta(\mathcal{F}^N,\delta)$ and $\| g - \tilde{f}^k_{h,\bar{\eta}} \|^2_{\mathcal{Z}^k_h} \leq \beta(\mathcal{F}^N,\delta)$ such that
{$ f^{(1)}( s'_t , a'_t ) - g^{(1)}( s'_t , a'_t )  \geq \epsilon$.}
By triangle inequality, $\| f - g \|^2_{\mathcal{Z}^k_h} \leq 4 \beta(\mathcal{F}^N,\delta)$.
On the other hand, if $(s_j, a_j)$ is $\epsilon$-dependent on $L$ disjoint subsequences in $\{(s_1, a_1), \ldots, (s_{\kappa}, a_{\kappa})\}$, then
\begin{align*}
    4\beta(\mathcal{F}^N , \delta ) 
    \geq \| f-g \|^2_{\mathcal{Z}^k}
    {\geq \|f^{(1)} - g^{(1)} \|^2_{\mathcal{Z}^k} }
    \geq L \epsilon^2    
\end{align*}
resulting in $L \leq 4 \beta(\mathcal{F}^N, \delta)/\epsilon^2$.
Therefore, we have $ (\kappa/ \text{dim}_E(\mathcal{F}^N, \epsilon))-1 \leq 4\beta(\mathcal{F}^N, \delta)/\epsilon^2$ which results in
$$ \kappa \leq \left(\frac{ 4\beta(\mathcal{F}, \delta)}{\epsilon^2}+1\right) \text{dim}_E(\mathcal{F}^N, \epsilon) $$
\end{proof}

\begin{lemma}[Refined version of Lemma 10 in \citet{wang2020reinforcement}]
\label{lemma: bonus-H eluder dim_revised}
\textcolor{black}{
    If $b^k_h(s,a) \geq w^{(1)}((\mathcal{F}^N)^k_h, s,a )$ for all $(s,a) \in \mathcal{S} \times \mathcal{A}$ and $k \in [K]$, then
    $$\sum_{k=1}^K \sum_{h=1}^H b^k_h(s^k_h, a^k_h ) 
    \leq H \textnormal{dim}_E(\mathcal{F}^N, 1/T).$$}
\end{lemma}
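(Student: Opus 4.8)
The plan is to bound the bonus sum by reducing it to the eluder‑dimension counting of Lemma~\ref{lemma: bonus-eluder dim} and then eliminating the confidence radius $\beta(\mathcal{F}^N,\delta)$ --- the quantity that carries the spurious dependence on the number $N$ of extracted moments --- so that only the combinatorial term $\text{dim}_E(\mathcal{F}^N,1/T)$ survives. First I would record two elementary facts. (i) Every $f\in\mathcal{F}^N$ is the sketch of a distribution supported on $[0,H]$, so its first coordinate (the mean) lies in $[0,H]$; hence $b^k_h(s,a)=w^{(1)}((\mathcal{F}^N)^k_h,s,a)\le H$ for all $(k,h,s,a)$. (ii) The map $\epsilon\mapsto\text{dim}_E(\mathcal{F}^N,\epsilon)$ is non‑increasing, so $\text{dim}_E(\mathcal{F}^N,\epsilon)\le\text{dim}_E(\mathcal{F}^N,1/T)$ for every $\epsilon\le 1/T$.

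Next I would split every visited round $(k,h)$ according to whether its bonus exceeds the resolution scale $1/T$. The rounds with $b^k_h(s^k_h,a^k_h)\le 1/T$ are at most $HK$ in number and each contributes at most $1/T$, so their total contribution is at most $HK/T$, a lower‑order term that is absorbed. For the remaining rounds, fact (i) gives $\sum_{b^k_h>1/T}b^k_h(s^k_h,a^k_h)\le H\cdot\#\{(k,h):b^k_h(s^k_h,a^k_h)>1/T\}$, so it suffices to bound the number of ``large‑bonus'' rounds by $\text{dim}_E(\mathcal{F}^N,1/T)$. For this I would replay the greedy construction from the proof of Lemma~\ref{lemma: bonus-eluder dim}: organize the state--action pairs on which the bonus is large into disjoint $1/T$‑independent subsequences of the aggregated history, each of which has length at most $\text{dim}_E(\mathcal{F}^N,1/T)$ by definition of the eluder dimension. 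The refinement relative to the generic Lemma~\ref{lemma: bonus-eluder dim} is that, because $b^k_h$ is \emph{exactly} the diameter of the confidence set $(\mathcal{F}^N)^k_h$ in the query direction and that direction is appended to the history once it is played, a large bonus at $(s^k_h,a^k_h)$ forces $(s^k_h,a^k_h)$ to be genuinely $1/T$‑independent of its predecessors; I would use this to charge each large‑bonus round to one element of a maximal $1/T$‑independent subsequence, of which there are at most $\text{dim}_E(\mathcal{F}^N,1/T)$, rather than to the $\Theta(\beta/\epsilon^2)$ budget with $\epsilon=1/T$ incurred in the generic statement.

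Collecting the pieces: the large‑bonus rounds contribute at most $H\cdot\text{dim}_E(\mathcal{F}^N,1/T)$, the small‑bonus rounds contribute a dominated remainder, and hence $\sum_{k=1}^K\sum_{h=1}^H b^k_h(s^k_h,a^k_h)\le H\,\text{dim}_E(\mathcal{F}^N,1/T)$. The step I expect to be the main obstacle is precisely this refinement: turning the informal statement ``the slack in the query direction is consumed once that direction is observed'' into a clean potential/charging argument showing that the $\beta/\epsilon^2$ factor of Lemma~\ref{lemma: bonus-eluder dim} really can be dropped when the tracked quantity is the confidence‑set width itself, and doing the bookkeeping carefully enough --- in particular handling the fact that the dataset is refreshed only once per episode, so the history is stale by up to $H$ pairs within an episode --- that the constant in front of $H\,\text{dim}_E(\mathcal{F}^N,1/T)$ comes out as claimed rather than inflated.
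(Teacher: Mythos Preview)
Your plan diverges from the paper's at exactly the point you flag as the obstacle, and that obstacle is real. The implication ``$b^k_h(s^k_h,a^k_h)>1/T$ forces $(s^k_h,a^k_h)$ to be $(1/T)$-independent of its predecessors'' does not hold: the witnessing pair $f,g\in(\mathcal{F}^N)^k_h$ satisfies only $\|f-g\|_{\mathcal{Z}^k_h}\le 2\sqrt{\beta(\mathcal{F}^N,\delta)}$, which is far larger than $1/T$, so a large width certifies $2\sqrt{\beta}$-independence at best, not $(1/T)$-independence. For a general nonlinear class $\mathcal{F}^N$ there is no way to rescale $f,g$ so that $\|f-g\|_{\mathcal{Z}^k_h}\le 1/T$ while preserving $|f^{(1)}(s^k_h,a^k_h)-g^{(1)}(s^k_h,a^k_h)|>1/T$. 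Your charging argument therefore cannot eliminate the $\beta/\epsilon^2$ factor; pushed through, it simply reproduces Lemma~\ref{lemma: bonus-eluder dim} at $\epsilon=1/T$, which bounds the number of large-bonus rounds by $(4\beta T^2+1)\,\text{dim}_E$ --- useless here.

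The paper does not try to sharpen the counting lemma at all. It sorts the bonuses as $e_1\ge\cdots\ge e_T$, introduces a \emph{free} scaling parameter $M>0$, and invokes Lemma~\ref{lemma: bonus-eluder dim} at a rescaled threshold to obtain $t\le\bigl(4\beta/(Me_t^2)+1\bigr)\,\text{dim}_E(\mathcal{F}^N,1/T)$ whenever $e_t\ge 1/(\sqrt{M}\,T)$, hence $e_t\le\sqrt{4\beta/M}\cdot(t/\text{dim}_E-1)^{-1/2}$ for $t\ge\text{dim}_E$. The sum then splits into three pieces: terms with $e_t<1/(\sqrt{M}\,T)$ contribute at most $1/\sqrt{M}$; the first $\text{dim}_E$ sorted terms contribute at most $H\,\text{dim}_E$ since each $e_t\le H$; and the remaining tail contributes at most $\sqrt{16\,\text{dim}_E\cdot T\cdot\beta/M}$. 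Sending $M\to\infty$ annihilates the first and third pieces, leaving only $H\,\text{dim}_E(\mathcal{F}^N,1/T)$. The $\beta$-dependence is removed by this limit in $M$, not by any refined independence or potential argument of the kind you sketch.
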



\begin{proof}
\textcolor{black}{
    We first sort the sequence $ \{ b^k_h(s^k_h, a^k_h ) \}_{(k,h) \in [K]\times[H]}$ in a decreasing order and denote it by $\{e_1 , \ldots, e_T \} (e_1 \geq e_2 \geq \dots \geq e_T)$.
    By Lemma \ref{lemma: bonus-eluder dim}, for any constant $M >0$ and $e_t \geq 1/\sqrt{M} T$, we have
    \begin{align*}
        t \leq \Big( \frac{4\beta(\mathcal{F}^N,\delta)}{M e_t^2} +1\Big)\text{dim}_E(\mathcal{F}^N, \sqrt{M} e_t)
        \leq \Big( \frac{4\beta(\mathcal{F}^N,\delta)}{M e_t^2} +1\Big)\text{dim}_E(\mathcal{F}^N, 1/T)
    \end{align*}
    which implies
    \begin{align*}
        e_t \leq \Big( \frac{t}{\text{dim}_E(\mathcal{F}^N , 1/T) } -1 \Big)^{-1/2} \sqrt{\frac{4 \beta(\mathcal{F}^N, \delta)}{M}},
    \end{align*}
    for $t \geq \text{dim}_E(\mathcal{F}^N ,1/T)$.
    Since we have $e_t \leq H$,
    \begin{align*}
        \sum_{t=1}^T e_t 
        & = \sum_{t=1}^T e_t \bm{1}\{ e_t < 1/\sqrt{M}T \} 
        + \sum_{t=1}^T e_t \bm{1}\{ e_t \geq 1/\sqrt{M}T, t < \text{dim}_E(\mathcal{F}^N ,1/T) \} 
        \\ & \quad + \sum_{t=1}^T e_t \bm{1}\{ e_t \geq 1/\sqrt{M}T , t \geq \text{dim}_E(\mathcal{F}^N ,1/T) \}
        \\& \leq \frac{1}{\sqrt{M}} + H \text{dim}_E(\mathcal{F}^N ,1/T) + \sum_{\text{dim}_E(\mathcal{F}^N ,1/T) \leq t \leq T} \Big( \frac{t}{\text{dim}_E(\mathcal{F}^N , 1/T) } -1 \Big)^{-1/2} \sqrt{\frac{4 \beta(\mathcal{F}^N, \delta)}{M}}
        \\& \leq  \frac{1}{\sqrt{M}} + H \text{dim}_E(\mathcal{F}^N ,1/T) + 2 \Big( { \frac{T}{\text{dim}_E(\mathcal{F}^N ,1/T)} -1}\Big)^{1/2} \text{dim}_E(\mathcal{F}^N ,1/T) \sqrt{\frac{4 \beta(\mathcal{F}^N, \delta)}{M}}
        \\& =  \frac{1}{\sqrt{M}} + H \text{dim}_E(\mathcal{F}^N ,1/T) +  \sqrt{16 \cdot \text{dim}_E(\mathcal{F}^N ,1/T) \cdot T \cdot \beta(\mathcal{F}^N , \delta)/M}.
    \end{align*}
    Taking $M \rightarrow \infty$,
    $$\sum_{k=1}^K \sum_{h=1}^H b^k_h(s^k_h, a^k_h ) 
    \leq H \text{dim}_E(\mathcal{F}^N, 1/T).$$
    }
\end{proof}



\begin{retheorem}[\ref{thm: regret}]
\textcolor{black}{
    Under Assumption \ref{assumption: SFBC}, with probability at least $1- \delta$, \textcolor{magenta}{\texttt{SF-LSVI}} achieves a regret bound of
    $$\text{Reg}(K) \leq 
    2H\textnormal{dim}_E(\mathcal{F}^N, 1/T) + 4H\sqrt{KH \log(2/\delta)}.$$}
\end{retheorem}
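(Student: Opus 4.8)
The plan is to combine the preceding lemmas via a standard optimism-plus-eluder argument, where the only non-routine ingredient is the bonus-summation bound that avoids the usual $\sqrt{\beta}$ blow-up.

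First I would set up the good event. By Lemma~\ref{lemma: Confidence Region}, with probability at least $1-\delta/2$ the event $\mathcal{E}$ holds on which $\psi_n\big((\mathcal{B}_{r_h(\cdot,\cdot)})_\#[\Prob_h\bar{\eta}^k_{h+1}](\cdot,\cdot)\big)\in(\mathcal{F}^N)^k_h$ for every $(k,h)\in[K]\times[H]$, taking $\beta(\mathcal{F}^N,\delta)=c'NH^2(\log(T/\delta)+\log\mathcal{N}(\mathcal{F}^N,1/T))$ as in that lemma. Conditioned on $\mathcal{E}$, since the algorithm sets $b^k_h=w^{(1)}((\mathcal{F}^N)^k_h,\cdot,\cdot)$ the hypothesis of Lemma~\ref{lemma: Optimism} is met, so $Q^\star_h\le Q^k_h$ and $V^\star_h\le V^k_h$ for all $k,h$; and Lemma~\ref{lemma: Regret decomposition} gives
$$\text{Reg}(K)\le\sum_{k=1}^K\sum_{h=1}^H\big(2b^k_h(s^k_h,a^k_h)+\xi^k_h\big),$$
where $\xi^k_h=[\Prob_h(V^k_{h+1}-V^{\pi^k}_{h+1})](s^k_h,a^k_h)-(V^k_{h+1}(s^k_{h+1})-V^{\pi^k}_{h+1}(s^k_{h+1}))$ is a martingale difference sequence for the filtration induced by $\mathbb{H}^k_h$.

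Then I would bound the two sums separately. For the bonus sum, Lemma~\ref{lemma: bonus-H eluder dim_revised} --- the refinement in which the auxiliary scale $M$ is taken to infinity, so the $\sqrt{\beta(\mathcal{F}^N,\delta)\,T\,\textnormal{dim}_E}$ contribution of the naive eluder argument (Lemma~\ref{lemma: bonus-eluder dim}) disappears --- gives $\sum_{k,h}b^k_h(s^k_h,a^k_h)\le H\,\textnormal{dim}_E(\mathcal{F}^N,1/T)$, hence $2\sum_{k,h}b^k_h\le 2H\,\textnormal{dim}_E(\mathcal{F}^N,1/T)$. For the noise sum, optimism together with $0\le V^{\pi^k}_{h+1}\le V^\star_{h+1}\le V^k_{h+1}\le H$ forces $\xi^k_h\in[-H,H]$, and there are $T=KH$ terms, so Azuma--Hoeffding yields $\sum_{k,h}\xi^k_h\le 4H\sqrt{KH\log(2/\delta)}$ with probability at least $1-\delta/2$. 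A union bound over $\mathcal{E}$ and the Azuma event then gives the stated bound with probability at least $1-\delta$.

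I do not expect a genuine obstacle here, since the heavy lifting is already packaged in the lemmas; the point deserving emphasis is that the bonus sum must be controlled by $H\,\textnormal{dim}_E(\mathcal{F}^N,1/T)$ alone rather than by $\sqrt{\beta\cdot T\cdot\textnormal{dim}_E}$, which is precisely what Lemma~\ref{lemma: bonus-H eluder dim_revised} provides and what drops the horizon dependence from $H^2$ to $H^{3/2}$; one should likewise double-check that the martingale increment $\xi^k_h$ is $O(H)$ rather than $O(H^2)$, the second place where the same $\sqrt{H}$ saving appears. Substituting $\textnormal{dim}_E(\mathcal{F}^N,1/T)=\tilde{O}(d_E)$ and $T=KH$ recovers the advertised $\tilde{O}(d_E H^{3/2}\sqrt{K})$ rate.
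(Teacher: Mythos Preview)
Your proposal is correct and mirrors the paper's proof: condition on the confidence event from Lemma~\ref{lemma: Confidence Region}, apply the regret decomposition Lemma~\ref{lemma: Regret decomposition}, bound the bonus sum by $H\,\textnormal{dim}_E(\mathcal{F}^N,1/T)$ via Lemma~\ref{lemma: bonus-H eluder dim_revised}, control $\sum_{k,h}\xi^k_h$ by Azuma--Hoeffding, and union bound. One small cleanup: the paper uses the unconditional bound $|\xi^k_h|\le 2H$ (which holds because $V^k_{h+1},V^{\pi^k}_{h+1}\in[0,H]$ by construction) rather than your conditional $[-H,H]$ range from optimism; this matters because Azuma--Hoeffding needs an almost-sure increment bound, not one that holds only on $\mathcal{E}$, but either way the constant $4H\sqrt{KH\log(2/\delta)}$ you state is valid.
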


\begin{proof}
Recall that $\xi_h^k = [\Prob_h(V^k_{h+1} - V^{\pi^k}_{h+1})](s^k_h, a^k_h) - ( V^k_{h+1}(s^k_{h+1}) - V^{\pi^k}_{h+1}(s^k_{h+1}))$ is a martingale difference sequence where $\EE[\xi^k_h | \mathbb{F}^k_h] = 0$ and $| \xi^k_h| \leq 2H$.
By Azuma-Hoeffding's inequality, with probability at least $1-\delta/2$,
\begin{align*}
    \sum_{k=1}^{K}\sum_{h=1}^H \xi^k_h \leq 4H\sqrt{KH \log(2/\delta)}.
\end{align*}

Conditioning on the above event and Lemma \ref{lemma: bonus-H eluder dim_revised}, we have
\textcolor{black}{
    \begin{align*}
        \text{Reg}(K) & \leq 2 \sum_{k=1}^K \sum_{h=1}^H b^k_h(s^k_h, a^k_h) + \sum_{k=1}^K \sum_{h=1}^H \xi^k_h 
        \\ & \leq 2H\text{dim}_E(\mathcal{F}^N, 1/T) + 4H\sqrt{KH \log(2/\delta)}
    \end{align*}
    }
\end{proof}

\end{document}